\documentclass{article}

\usepackage{microtype}
\usepackage{comment}
\usepackage{graphicx}
\usepackage{pgffor}
\usepackage{subcaption}
\usepackage{booktabs} 
\usepackage{multirow}
\usepackage{tabularx}

\usepackage{amsmath}
\usepackage{amsthm}
\usepackage{amssymb}
\usepackage{xfrac}
\usepackage{color}
\usepackage{mathtools}
\usepackage{makecell}
\usepackage{bm}
\usepackage{diagbox}
\usepackage{tablefootnote}
\usepackage{hyperref}
\usepackage{natbib}
\usepackage{eso-pic}
\usepackage{svg}
\usepackage{needspace} 

\usepackage{xcolor}

\usepackage{tikz}
\usetikzlibrary{arrows.meta, positioning, calc, fit, backgrounds, shadows}

\definecolor{icmlblue}{HTML}{2E4057}
\definecolor{icmlred}{HTML}{D1495B}
\definecolor{icmlgreen}{HTML}{048A81}
\definecolor{icmlpurple}{HTML}{6C4AB6}

\newcommand{\mathtext}[1]{{\scriptsize #1}}

\providecommand{\FTGref}{\textit{\cref{thm:ftg}}}
\providecommand{\TICFref}{\textit{\cref{subsec:geom_index_models}}}

\newcommand{\MiniPlotFrechet}{%
\begin{tikzpicture}[x=1.25cm, y=1.25cm]
  \draw[<->, black!50, line width=1.1pt] (0,1.15) -- (0,0) -- (1.35,0);
  \draw[icmlpurple, line width=2.0pt] plot[domain=0.10:1.30, samples=60] (\x, {0.95*pow(1.5+1.8*\x, -0.55)});
  \node[font=\bfseries\scriptsize, anchor=west, inner sep=1pt, icmlpurple] at (0.38,0.74) {Heavy};
\end{tikzpicture}%
}

\newcommand{\MiniPlotGumbel}{%
\begin{tikzpicture}[x=1.25cm, y=1.25cm]
  \draw[<->, black!50, line width=1.1pt] (0,1.15) -- (0,0) -- (1.35,0);
  \draw[icmlblue, line width=2.0pt] plot[domain=0.10:1.25, samples=60] (\x, {0.8*exp(-1.3*\x)});
  \node[font=\bfseries\scriptsize, anchor=west, inner sep=1pt, icmlblue] at (0.38,0.74) {Light};
\end{tikzpicture}%
}

\newcommand{\MiniPlotWeibull}{%
\begin{tikzpicture}[x=1.25cm, y=1.25cm]
  \draw[->, black!50, line width=1.1pt] (0,0) -- (0,1.15);
  \draw[->, black!50, line width=1.1pt] (0,0) -- (1.35,0);
  \draw[densely dashed, icmlred, line width=1.5pt] (1.15, 0) -- (1.15, 1.15);
  \draw[icmlgreen, line width=2.0pt] plot[domain=0.10:0.98, samples=70] (0.12 + \x, {0.75-0.18*pow(1.05-\x, -0.5)});
  \node[font=\bfseries\scriptsize, anchor=north, inner sep=1pt, icmlred, fill=white] at (1.15,1.15) {Cap};
  \node[font=\bfseries\scriptsize, anchor=north, inner sep=1pt, icmlred, fill=white] at (1.15,-0.10) {$x_F$};
\end{tikzpicture}%
}

\newcommand{\plotconceptual}{%
\begin{figure*}[t]
\centering
\resizebox{0.9\textwidth}{!}{%
\begin{tikzpicture}[
  node distance=0.15cm and 0.85cm,
  font=\sffamily\footnotesize,
  >=Stealth,
  block/.style={
    draw=black!12,
    fill=white,
    rounded corners=4pt,
    line width=0.8pt,
    drop shadow={opacity=0.08, shadow xshift=1pt, shadow yshift=-1pt},
    align=center,
    inner sep=2pt,
    font=\small\sffamily
  },
  headbox/.style={block, text width=4.55cm, minimum height=0.95cm, inner sep=4pt},
  regimebox/.style={block, text width=4.35cm, inner sep=4.5pt},
  modelbox/.style={block, text width=4.15cm, minimum height=1.45cm},
  lossbox/.style={block, text width=4.75cm, minimum height=1.4cm},
  arrow/.style={->, thick, draw=black!60, rounded corners=5pt},
  stem/.style={thick, draw=black!60, rounded corners=5pt},
  op/.style={font=\bfseries\Large, text=black!45},
  branchlabel/.style={
    font=\footnotesize\bfseries\scshape,
    text=black!60,
    fill=white,
    inner sep=3pt,
    rounded corners=2pt,
    draw=black!12
  },
  labeltag/.style={
    font=\scriptsize\bfseries\scshape,
    text=white,
    rounded corners=2pt,
    inner sep=2.5pt
  }
]

\node[headbox] (trichotomy) {
  \textbf{Tail geometry (EVT)}\\
  {\FTGref}
};

\node[headbox, right=0.95cm of trichotomy] (choicefam) {
  \textbf{Geometry matched index}\\
  {\TICFref}
};

\draw[arrow] (trichotomy.east) to[out=0,in=180] (choicefam.west);

\node[regimebox, below=0.45cm of trichotomy, draw=icmlpurple!35] (frechet) {
  \textbf{Fr\'echet} ($\xi>0$)\\[0.20em]
  \MiniPlotFrechet\\[-0.10em]
  \mathtext{Heavy tails, unbounded}
};

\node[regimebox, below=0.48cm of frechet, draw=icmlblue!30] (gumbel) {
  \textbf{Gumbel} ($\xi=0$)\\[0.20em]
  \MiniPlotGumbel\\[-0.10em]
  \mathtext{Light tails, unbounded}
};

\node[regimebox, below=0.48cm of gumbel, draw=icmlgreen!35] (weibull) {
  \textbf{Weibull} ($\xi<0$)\\[0.20em]
  \MiniPlotWeibull\\[-0.10em]
  \mathtext{Finite endpoint, bounded scores}
};

\node[modelbox, draw=icmlpurple!45] (powermodel) at (choicefam.center |- frechet.center) {
  \textbf{Scale index}\\[0.50em]
  $\phi(s)=\rho(s)^{\alpha}$
};

\node[modelbox, draw=icmlblue!45] (logitmodel) at (choicefam.center |- gumbel.center) {
  \textbf{Shift index (softmax)}\\[0.50em]
  $\phi(s)=\exp(s/\tau)$
};

\node[modelbox, draw=icmlgreen!70, line width=1.05pt] (shortfallmodel) at (choicefam.center |- weibull.center) {
  \textbf{Shortfall index}\\[0.35em]
  $\phi(s)=(x_F-s)^{-\beta}$\\[0.20em]
  \mathtext{Cosine: $x_F=1$}
};

\node[op] (plusF) at ($ (frechet.east)!0.5!(powermodel.west) $) {+};
\node[op] (plusG) at ($ (gumbel.east)!0.5!(logitmodel.west) $) {+};
\node[op] (plusW) at ($ (weibull.east)!0.5!(shortfallmodel.west) $) {+};

\node[lossbox, right=0.90cm of powermodel, fill=icmlpurple!6, draw=icmlpurple] (powerloss) {
  \textbf{Top-1 Probability: Power Link}\\[0.55em]
  {\normalsize$\frac{\rho(s_{i^+})^{\alpha}}{\sum_j \rho(s_j)^{\alpha}}$}
};

\node[lossbox, right=0.90cm of logitmodel, fill=icmlblue!5, draw=icmlblue] (infonce) {
  \textbf{Top-1 Probability: Logit Link}\\[0.55em]
  {\normalsize$\frac{\exp(s_{i^+}/\tau)}{\sum_j \exp(s_j/\tau)}$}
};

\node[lossbox, right=0.90cm of shortfallmodel, fill=icmlgreen!10, draw=icmlgreen, line width=1.05pt] (weibit) {
  \textbf{Top-1 Probability: Weibit Link}\\[0.55em]
  {\normalsize$\frac{(x_F-s_{i^+})^{-\beta}}{\sum_j (x_F-s_j)^{-\beta}}$}
};

\node[op] (eqF) at ($ (powermodel.east)!0.7!(powerloss.west) $) {=};
\node[op] (eqG) at ($ (logitmodel.east)!0.7!(infonce.west) $) {=};
\node[op] (eqW) at ($ (shortfallmodel.east)!0.7!(weibit.west) $) {=};

\coordinate (tstart) at ($(trichotomy.south)+(0,-0.15cm)$);
\coordinate (tbusX)  at ($(trichotomy.west)+(-1.5cm,0)$);
\coordinate (tbusTop) at (tbusX |- frechet.west);
\coordinate (tbusMid) at (tbusX |- gumbel.west);
\coordinate (tbusBot) at (tbusX |- weibull.west);

\draw[stem] (trichotomy.south) -- (tstart) -| (tbusTop);
\draw[stem] (tbusTop) -- (tbusBot);
\draw[arrow] (tbusTop) -- (frechet.west);
\draw[arrow] (tbusMid) -- (gumbel.west);
\draw[arrow] (tbusBot) -- (weibull.west);

\node[branchlabel, text=icmlpurple] at ($ (tbusTop)!0.58!(frechet.west) + (0,0.35) $) {Heavy};
\node[branchlabel, text=icmlblue]   at ($ (tbusMid)!0.58!(gumbel.west)  + (0,0.35) $) {Light};
\node[branchlabel, text=icmlgreen]  at ($ (tbusBot)!0.58!(weibull.west) + (0,0.35) $) {Bounded};

\coordinate (cstart) at ($(choicefam.south)+(0,-0.15cm)$);
\coordinate (cbusX)  at ($(choicefam.east)+(0.1cm,0)$);
\coordinate (cbusTop) at (cbusX |- powermodel.east);
\coordinate (cbusMid) at (cbusX |- logitmodel.east);
\coordinate (cbusBot) at (cbusX |- shortfallmodel.east);

\draw[stem] (choicefam.south) -- (cstart) -| (cbusTop);
\draw[stem] (cbusTop) -- (cbusBot);
\draw[stem] (cbusTop) -- (powermodel.east);
\draw[stem] (cbusMid) -- (logitmodel.east);
\draw[stem] (cbusBot) -- (shortfallmodel.east);

\begin{scope}[on background layer]
  \node[fit=(logitmodel)(infonce)(shortfallmodel)(weibit),
        inner xsep=10pt,
        inner ysep=20pt,
        fill=icmlgreen!3,
        draw=icmlgreen!22,
        line width=1.05pt,
        rounded corners=10pt] (bgweince) {};

  \node[fit=(logitmodel)(infonce),
        inner xsep=7pt,
        inner ysep=11pt,
        fill=icmlblue!3,
        draw=icmlblue!28,
        line width=0.9pt,
        rounded corners=9pt] (bgsoftmax) {};
\end{scope}

\node[labeltag, fill=icmlblue, anchor=north west]
  at ([xshift=6pt,yshift=-4pt]bgsoftmax.north west) {InfoNCE};

\node[labeltag, fill=icmlgreen, anchor=south west]
  at ([xshift=6pt,yshift=6pt]bgweince.south west) {WEINCE};

\end{tikzpicture}%
}
\caption{Extreme value theory (\FTGref) yields three tail geometries. Pairing each geometry with a
compatible index model gives a corresponding top-1 score. The Gumbel row recovers the softmax
score used by InfoNCE. For bounded similarities, the Weibull row motivates a shortfall-based logit.
\textsc{WEINCE} combines the softmax and shortfall logits through interpolation, so InfoNCE is recovered as
the special case $\lambda=0$.}
\label{fig:tail_ratio_to_scoring_rules}
\end{figure*}
}

\definecolor{weinceblue}{RGB}{0,90,200}
\definecolor{infoncegray}{gray}{0.45}

\newcommand{\BASE}[1]{\textcolor{infoncegray}{#1}}
\newcommand{\WE}[1]{\textcolor{weinceblue}{#1}}

\newcommand{\algoweince}{
\begin{algorithm}[h]
\caption{\textsc{WEINCE}: Full logit interpolation}
\label{alg:weince_colored}
\footnotesize 

\begin{algorithmic}[1]
    \STATE \BASE{\textbf{Input:} Embeddings $\{z_i\}_{i=1}^{2N}$, positives $i^+$, temp. $\tau$}
    \STATE \WE{\textbf{Hyperparams:} $K_{\mathrm{tail}},\ \rho_0,\ m,\ \kappa_\rho,\ \kappa_{\mathrm{AIC}},\ \varepsilon$}
    \vspace{0.3em}

    \STATE \BASE{Calc. similarities $s_{ij}\gets z_i^\top z_j$ for all $i\neq j$}
    \vspace{0.3em}

    \FOR{$i=1$ \TO $2N$}
        \STATE \BASE{$\mathcal{C}(i)\gets \{1,\dots,2N\}\setminus\{i\}$}
        \STATE \BASE{$\mathcal{N}(i)\gets \mathcal{C}(i)\setminus\{i^+\}$}
        
        \STATE \WE{Calc. shortfalls $\delta_{ij}\gets 1-s_{ij}$ for $j\in\mathcal{N}(i)$}
        \STATE \WE{$\rho_i\gets \min_{j\in\mathcal{N}(i)}\delta_{ij}$}
        
        \STATE \WE{Run \textsc{TailFit} on smallest $K_{\mathrm{tail}}$ shortfalls $\to (\hat\beta_i,\Delta\mathrm{AIC}_i)$}
        
        \STATE \WE{$\lambda_i\gets \sigma\left(\kappa_\rho\log(\rho_0/\rho_i)\right) \cdot \sigma\left(\kappa_{\mathrm{AIC}}(\Delta\mathrm{AIC}_i-m)\right)$}

        \FOR{$j\in\mathcal{C}(i)$}
            \STATE \BASE{$\ell^{\mathrm{PL}}_{ij}\gets s_{ij}/\tau$}
            \STATE \WE{$\tilde s_{ij}\gets \min(s_{ij},\,1-\varepsilon)$}
            \STATE \WE{$\ell^{\mathrm{W}}_{ij}\gets -\hat\beta_i\log(\varepsilon + 1-\tilde s_{ij})$}
            \STATE \WE{$\ell_{ij}\gets (1-\lambda_i)\ell^{\mathrm{PL}}_{ij}+\lambda_i\ell^{\mathrm{W}}_{ij}$}
        \ENDFOR
    \ENDFOR
    \vspace{0.3em}

    \STATE \BASE{Compute cross entropy loss:}
    \STATE \BASE{$\mathcal{L}\gets -\frac{1}{2N}\sum_{i=1}^{2N}\log\frac{\exp(\ell_{i,i^+})}{\sum_{j\in\mathcal{C}(i)}\exp(\ell_{ij})}$}
    \RETURN $\mathcal{L}$
\end{algorithmic}

\vspace{0.3em}
\hrule
\vspace{0.3em}
\WE{\textbf{Subroutine \textsc{TailFit}:}}
\WE{\scriptsize 
\\
Given $K_{\mathrm{tail}}$ smallest shortfalls $\delta_{(1)}\le\cdots\le\delta_{(K_{\mathrm{tail}})}$:
\begin{itemize}
    \setlength\itemsep{0em} \setlength\leftmargin{1em}
    \item Set empirical CDF: $\widehat F(\delta_{(k)})=k/(K_{\mathrm{tail}}+1)$
    \item Fit Weibull: $\log\widehat F \approx a+\beta\log\delta \to$ get $\hat\beta, \mathrm{AIC}_W$
    \item Fit Gumbel: $\log(-\log\widehat F) \approx c-\kappa\log\delta \to$ get $\mathrm{AIC}_G$
    \item Return $\Delta\mathrm{AIC}=\mathrm{AIC}_G-\mathrm{AIC}_W$ and $\hat\beta$
\end{itemize}
}
\vspace{0.2em}
\hrule
\vspace{0.3em}

\noindent
\BASE{\rule{8pt}{8pt} Vanilla InfoNCE} \hfill \WE{\rule{8pt}{8pt} \textsc{WEINCE} Additions}

\end{algorithm}
}

\let\baraccent=\= 
\renewcommand{\=}[1]{\stackrel{#1}{=}} 

\providecommand{\cC}{\mathcal{C}}

\renewcommand{\P}{\mathsf{P}}

\mathchardef\mhyphen="2D 

\providecommand{\cH}{\mathcal{H}}

\newcommand{\interior}[1]{%
  {\kern0pt#1}^{\mathrm{o}}%
}

\usepackage[utf8]{inputenc} 
\usepackage[T1]{fontenc}    
\usepackage{hyperref}       
\usepackage{url}            
\usepackage{booktabs,comment}       
\usepackage{amsfonts}       
\usepackage{nicefrac}       
\usepackage{microtype}      
\usepackage{xcolor}         
\usepackage{amsmath}
\usepackage{amssymb}
\usepackage{amsthm}
\usepackage{bbm}
\usepackage{xcolor}
\usepackage{graphicx}
\usepackage{subcaption}
\usepackage{adjustbox}
\usepackage{url}

\usepackage{hyperref}
\hypersetup{
    colorlinks,
    linkcolor={blue!80!black},
    citecolor={green!50!black},
}
\usepackage{xcolor}
\colorlet{linkequation}{blue}

\usepackage{comment}
\usepackage{latexsym}
\usepackage{amsmath}
\usepackage{amssymb}
\usepackage{amsthm}
\usepackage{epsfig}
\usepackage{graphicx}
\usepackage{bm}
\usepackage[shortlabels]{enumitem}
\usepackage{graphicx}
\usepackage{bbm}
\usepackage{accents}
\usepackage{mathtools}

\renewcommand{\P}{\mathbb{P}}
\newcommand{\E}{\mathbb{E}}

\newcommand{\cN}{\mathcal{N}}

\newcommand{\R}{\mathbb{R}}

\DeclareMathOperator*{\argmax}{arg\,max}

\newtheoremstyle{myremark} 
    {\topsep}                    
    {\topsep}                    
    {\rm}                        
    {}                           
    {\bf}                        
    {.}                          
    {.5em}                       
    {}  

\theoremstyle{definition}
\newtheorem{example}{Example}

\DeclareSymbolFont{rsfs}{U}{rsfs}{m}{n}
\DeclareSymbolFontAlphabet{\mathscrsfs}{rsfs}

\def\bs{{\boldsymbol s}}

\def\cO{{\mathcal O}}

\def\cC{{\mathcal C}}

\def\cI{{\mathcal I}}

\def\cO{{\mathcal O}}

\def\cH{{\mathcal H}}

\def\cI{{\mathcal I}}

\usepackage{hyperref}
\usepackage{url}

\usepackage[many]{tcolorbox}
\usepackage{varwidth}

\usepackage{hyperref}
\hypersetup{
    colorlinks,
    linkcolor={blue!80!black},
    citecolor={green!50!black},
}

\usepackage{amsmath}
\numberwithin{equation}{section}
\usepackage{amssymb}
\usepackage{mathtools}
\usepackage{amsthm}

\usepackage{thmtools}
\usepackage{thm-restate}

\usepackage{tikz}
\usetikzlibrary{shapes, arrows, calc, positioning, arrows.meta}

\usepackage{wrapfig}
\usepackage{sidecap}

\usepackage{fancybox}
\newenvironment{fminipage}%
  {\begin{Sbox}\begin{minipage}}%
  {\end{minipage}\end{Sbox}\fbox{\TheSbox}}

\usepackage{longtable}
\usepackage{booktabs}
\usepackage{soul}
\setuldepth{hi}

\newcommand{\plottableftg}{
\begin{table*}[t]
\centering
\small
\caption{
\textbf{Tail geometry ($\xi$) and extreme value invariance.} 
A geometry matched indexing becomes additive in the tail coordinate: $T(X_s)=\eta(s)+\varepsilon$ with candidate
independent noise $\varepsilon$ (for example, $\varepsilon=\epsilon$ in the Gumbel row,
$\varepsilon=\log Z$ in the Fr\'echet row, and $\varepsilon=-\log W$ in the Weibull row). Here
$\alpha=1/\xi>0$ in the Fr\'echet case and $\beta=-1/\xi>0$ in the Weibull case.
}
\label{tab:geom_index_models}
\setlength{\tabcolsep}{4pt}
\setlength{\extrarowheight}{4pt}
\begin{tabular}{@{}l l l l l l@{}}
\hline
\multicolumn{1}{c}{Geometry} &
\multicolumn{1}{c}{Natural action} &
\multicolumn{1}{c}{Tail coordinate $T(x)$} &
\multicolumn{1}{c}{Index statistic $\eta(s)$} &
\multicolumn{1}{c}{Canonical model} &
\multicolumn{1}{c}{Top $1$ rule} \\
\hline
$\xi=0$ (Gumbel) &
translation &
$T(x)=x$ &
$\eta(s)=s$ &
$X_s=s+\epsilon$ &
$p_i\propto \exp\!\big(s_i/a_n\big)$ \\

$\xi>0$ (Fr\'echet) &
scaling &
$T(x)=\log x$ &
$\eta(s)=\log\rho(s)$ &
$X_s=\rho(s)\,Z$ &
$p_i\propto \rho(s_i)^{\alpha}$ \\

$\xi<0$ (Weibull) &
endpoint shortfall &
$T(x)=-\log(b-x)$ &
$\eta(s)=-\log q(s)$ &
$b-X_s=q(s)\,W$ &
$p_i\propto q(s_i)^{-\beta}$ \\
\hline
\end{tabular}
\end{table*}

}

\newcommand{\plottableknn}{
\begin{table*}
\centering
\scriptsize
\setlength{\tabcolsep}{2.6pt}
\renewcommand{\arraystretch}{1.08}
\caption{
\textbf{k-Nearest Neighbor (kNN) recall (R@k, \%).} 
We report the mean recall $\pm$ 95\% confidence interval halfwidth for the baseline (\textsc{InfoNCE}) and our method (\textsc{WEINCE}). 
Encoders are abbreviated as R18 (ResNet18), R50 (ResNet50), and ViT-S (ViT-Small). 
Best results are highlighted in bold. For the ResNet runs, \cref{tab:knn_r50} displays the remaining R@$50$ column in \cref{app:exp_details}.
}
\label{tab:knn}
\begin{tabular}{ll*{10}{r}}
\toprule
\multirow{2}{*}{Dataset} & \multirow{2}{*}{Enc.}
& \multicolumn{2}{c}{R@1} & \multicolumn{2}{c}{R@2} & \multicolumn{2}{c}{R@5} & \multicolumn{2}{c}{R@10} & \multicolumn{2}{c}{R@20} \\
\cmidrule(lr){3-4} \cmidrule(lr){5-6} \cmidrule(lr){7-8} \cmidrule(lr){9-10} \cmidrule(lr){11-12}
& & \textsc{InfoNCE} & \textsc{WEINCE} & \textsc{InfoNCE} & \textsc{WEINCE} & \textsc{InfoNCE} & \textsc{WEINCE} & \textsc{InfoNCE} & \textsc{WEINCE} & \textsc{InfoNCE} & \textsc{WEINCE} \\
\midrule

\multirow{2}{*}{STL10} & R18
& 68.22 $\pm$ 0.75 & \textbf{70.83 $\pm$ 1.16} & 79.48 $\pm$ 0.63 & \textbf{80.75 $\pm$ 1.19} & 90.19 $\pm$ 0.27 & \textbf{91.01 $\pm$ 0.19} & 95.18 $\pm$ 0.14 & \textbf{95.75 $\pm$ 0.16} & 97.91 $\pm$ 0.19 & \textbf{98.11 $\pm$ 0.27} \\

& R50
& 69.66 $\pm$ 0.64 & \textbf{71.75 $\pm$ 0.71} & 80.39 $\pm$ 0.46 & \textbf{82.03 $\pm$ 0.73} & 90.65 $\pm$ 0.21 & \textbf{91.31 $\pm$ 0.62} & 95.42 $\pm$ 0.19 & \textbf{95.64 $\pm$ 0.31} & 97.97 $\pm$ 0.04 & \textbf{98.18 $\pm$ 0.12} \\

\addlinespace

\multirow{2}{*}{CIFAR10} & R18
& 75.03 $\pm$ 0.22 & \textbf{76.27 $\pm$ 0.40} & 83.79 $\pm$ 0.10 & \textbf{84.95 $\pm$ 0.33} & 91.59 $\pm$ 0.24 & \textbf{92.31 $\pm$ 0.21} & 95.29 $\pm$ 0.05 & \textbf{96.06 $\pm$ 0.29} & 97.58 $\pm$ 0.13 & \textbf{98.28 $\pm$ 0.11} \\

& R50
& 74.74 $\pm$ 0.44 & \textbf{77.12 $\pm$ 0.33} & 83.55 $\pm$ 0.18 & \textbf{85.31 $\pm$ 0.68} & 91.45 $\pm$ 0.44 & \textbf{92.94 $\pm$ 0.14} & 95.29 $\pm$ 0.31 & \textbf{96.26 $\pm$ 0.16} & 97.62 $\pm$ 0.18 & \textbf{98.17 $\pm$ 0.20} \\

\addlinespace

\multirow{2}{*}{CIFAR100} & R18
& 36.98 $\pm$ 0.22 & \textbf{42.94 $\pm$ 0.77} & 47.42 $\pm$ 0.36 & \textbf{53.29 $\pm$ 0.56} & 61.72 $\pm$ 0.52 & \textbf{66.19 $\pm$ 0.51} & 72.18 $\pm$ 0.66 & \textbf{76.11 $\pm$ 0.97} & 81.45 $\pm$ 0.60 & \textbf{84.41 $\pm$ 0.72} \\

& R50
& 35.44 $\pm$ 0.40 & \textbf{42.70 $\pm$ 0.79} & 46.09 $\pm$ 0.71 & \textbf{52.85 $\pm$ 0.68} & 60.62 $\pm$ 0.42 & \textbf{66.26 $\pm$ 0.39} & 71.34 $\pm$ 0.43 & \textbf{75.72 $\pm$ 0.43} & 80.90 $\pm$ 0.70 & \textbf{84.19 $\pm$ 0.37} \\

\addlinespace

\multirow{2}{*}{Tiny-IN} & R18
& 17.29 & \textbf{18.84} & 25.29 & \textbf{26.64} & 38.26 & \textbf{39.12} & 49.87 & \textbf{50.14} & 61.82 & \textbf{61.90} \\

& ViT-S
& 19.30 & \textbf{23.53} & 27.29 & \textbf{32.04} & 40.36 & \textbf{45.15} & 51.17 & \textbf{55.96} & 62.63 & \textbf{66.97} \\

\bottomrule
\end{tabular}
\end{table*}
}

\newcommand{\plottableknnacc}{%
\begin{table}
\centering
\footnotesize
\setlength{\tabcolsep}{4.2pt}
\renewcommand{\arraystretch}{1.08}
\caption{
\textbf{kNN classifier accuracy at $k=50$ (\%).}
We report the mean top-1 accuracy $\pm$ 95\% confidence interval halfwidth for the baseline (\textsc{InfoNCE}) and our method (\textsc{WEINCE}). 
Encoders are abbreviated as R18 (ResNet18), R50 (ResNet50), and ViT-S (ViT-Small). 
Best results are highlighted in bold.
}
\label{tab:knnacc}
\begin{tabular}{llrr}
\toprule
\multirow{2}{*}{Dataset} & \multirow{2}{*}{Enc.}
& \multicolumn{2}{c}{kNN Acc} \\
\cmidrule(lr){3-4}
& & \textsc{InfoNCE} & \textsc{WEINCE} \\
\midrule

\multirow{2}{*}{STL10} & R18
& 73.13 $\pm$ 0.79 & \textbf{74.68 $\pm$ 0.59} \\

& R50
& 73.93 $\pm$ 0.51 & \textbf{75.72 $\pm$ 0.54} \\

\addlinespace

\multirow{2}{*}{CIFAR10} & R18
& 78.20 $\pm$ 0.39 & \textbf{79.60 $\pm$ 0.25} \\

& R50
& 77.98 $\pm$ 0.36 & \textbf{79.76 $\pm$ 0.42} \\

\addlinespace

\multirow{2}{*}{CIFAR100} & R18
& 40.73 $\pm$ 0.42 & \textbf{47.38 $\pm$ 0.28} \\

& R50
& 39.51 $\pm$ 0.53 & \textbf{46.41 $\pm$ 0.24} \\

\addlinespace

\multirow{2}{*}{Tiny-IN} & R18
& 24.41 & \textbf{25.13} \\

& ViT-S
& 25.92 & \textbf{31.81} \\

\bottomrule
\end{tabular}
\end{table}
}

\newcommand{\plotknnrfifty}{
\begin{table}
\centering
\footnotesize
\setlength{\tabcolsep}{2.6pt}
\renewcommand{\arraystretch}{1.08}
\caption{
\textbf{k-Nearest Neighbor (kNN) recall (R@50, \%).} 
We report the mean recall $\pm$ 95\% confidence interval halfwidth for the baseline (\textsc{InfoNCE}) and our method (\textsc{WEINCE}). 
Encoders are abbreviated as R18 (ResNet18) and R50 (ResNet50). 
Best results are highlighted in bold.
}
\label{tab:knn_r50}
\begin{tabular}{llrr}
\toprule
\multirow{2}{*}{Dataset} & \multirow{2}{*}{Enc.}
& \multicolumn{2}{c}{R@50} \\
\cmidrule(lr){3-4}
& & \textsc{InfoNCE} & \textsc{WEINCE} \\
\midrule

\multirow{2}{*}{STL10} & R18
& 99.48 $\pm$ 0.05 & \textbf{99.54 $\pm$ 0.08} \\

& R50
& 99.49 $\pm$ 0.09 & \textbf{99.50 $\pm$ 0.14} \\

\addlinespace

\multirow{2}{*}{CIFAR10} & R18
& 99.06 $\pm$ 0.21 & \textbf{99.47 $\pm$ 0.07} \\

& R50
& 99.06 $\pm$ 0.05 & \textbf{99.46 $\pm$ 0.07} \\

\addlinespace

\multirow{2}{*}{CIFAR100} & R18
& 90.75 $\pm$ 0.34 & \textbf{92.81 $\pm$ 0.31} \\

& R50
& 90.35 $\pm$ 0.45 & \textbf{92.43 $\pm$ 0.39} \\

\addlinespace

Tiny-IN & R18
& 76.46 & \textbf{76.70} \\

\bottomrule
\end{tabular}
\end{table}
}

\newcommand{\plottablelinear}{
\begin{table}
\centering
\scriptsize\setlength{\tabcolsep}{4.2pt}
\renewcommand{\arraystretch}{1.08}
\caption{
Downstream linear evaluation accuracy (\%). 
We report the mean accuracy $\pm$ 95\% confidence intervals comparing the vanilla InfoNCE baseline (\textsc{InfoNCE}) with our proposed method (\textsc{WEINCE}). 
Encoders are abbreviated as R18 (ResNet18), R50 (ResNet50), and ViT-S (ViT-Small). 
Best results are highlighted in bold.
}
\label{tab:linear}
\begin{tabular}{llrr}
\toprule
\multirow{2}{*}{Dataset} & \multirow{2}{*}{Enc.}
& \multicolumn{2}{c}{Lin.\ Acc} \\
\cmidrule(lr){3-4}
& & \textsc{InfoNCE} & \textsc{WEINCE} \\
\midrule

\multirow{2}{*}{STL10} & R18
& 76.54 $\pm$ 0.32 & \textbf{77.94} $\pm$ 0.27 \\
& R50
& 78.44 $\pm$ 0.31 & \textbf{80.02} $\pm$ 0.46 \\

\addlinespace
\multirow{2}{*}{CIFAR10} & R18
& 81.55 $\pm$ 0.40 & \textbf{81.89} $\pm$ 0.30 \\
& R50
& 82.13 $\pm$ 0.92 & \textbf{83.74} $\pm$ 0.18 \\

\addlinespace
\multirow{2}{*}{CIFAR100} & R18
& 50.01 $\pm$ 0.19 & \textbf{53.28} $\pm$ 0.28 \\
& R50
& 51.15 $\pm$ 0.43 & \textbf{55.97} $\pm$ 1.21 \\

\addlinespace
Imagenet32 & R18
& 24.29 $\pm$ 0.30 & \textbf{25.26} $\pm$ 0.28 \\

\addlinespace

\multirow{2}{*}{Tiny-IN} & R18
& \textbf{30.76} & 30.66 \\
& ViT-S
& 33.12 & \textbf{36.53} \\

\bottomrule
\end{tabular}
\end{table}
}

\newcommand{\papertitle}{When Softmax Fails at the Top: Extreme‑Value Corrections for InfoNCE}

\usepackage[accepted]{icml2026}

\usepackage{amsmath}
\usepackage{amssymb}
\usepackage{mathtools}
\usepackage{amsthm}

\usepackage[capitalize,noabbrev]{cleveref}

\theoremstyle{plain}
\newtheorem{theorem}{Theorem}[section]

\newtheorem{lemma}[theorem]{Lemma}
\newtheorem{corollary}[theorem]{Corollary}
\theoremstyle{definition}
\newtheorem{definition}[theorem]{Definition}
\newtheorem{assumption}[theorem]{Assumption}
\theoremstyle{remark}
\newtheorem{remark}[theorem]{Remark}

\usepackage[textsize=tiny]{todonotes}

\usepackage{float}

\icmltitlerunning{\papertitle}

\begin{document}

\twocolumn[
\icmltitle{\papertitle}

\icmlsetsymbol{equal}{*}
\begin{icmlauthorlist}
\icmlauthor{Melihcan Erol}{equal,mit}
\icmlauthor{Suat Evren}{equal,mit}
\icmlauthor{Oktay Ozel}{bu}
\icmlauthor{Alexander Morgan}{mit}
\icmlauthor{Jongha Jon Ryu}{mit}
\icmlauthor{Lizhong Zheng}{mit}
\end{icmlauthorlist}
\icmlaffiliation{mit}{Massachusetts Insititute of Technology, Cambridge, Massachusetts}
\icmlaffiliation{bu}{Boston University, Boston, Massachusetts}
\icmlcorrespondingauthor{Melihcan Erol}{hsmerol@mit.edu}
\icmlcorrespondingauthor{Suat Evren}{evrenis@mit.edu}
\icmlkeywords{InfoNCE, SimCLR, Contrastive Learning, Unsupervised Learning, ICML}
\vskip 0.3in]

\printAffiliationsAndNotice{\icmlEqualContribution} 
\begin{abstract}
InfoNCE is the standard contrastive learning objective, but its softmax form is not only a computational convenience: it also encodes a statistical assumption about how the top-scoring example is selected. Using extreme value theory, we show that this assumption is often misaligned with the normalized embedding setting used in modern contrastive learning. Motivated by this mismatch, we propose \textsc{WEINCE}, a simple modification of InfoNCE that uses anchor-wise online batch statistics to blend the usual softmax logits with an endpoint shortfall correction, adding no trainable parameters. Across five vision benchmarks, \textsc{WEINCE} yields consistent improvements in frozen-feature evaluation. These results show that a more faithful statistical treatment of hard negatives can improve contrastive objectives.\footnote{Code: \href{https://github.com/hsme98/weince}{github.com/hsme98/weince}.}
\end{abstract}

\section{Introduction}
\plotconceptual
Contrastive learning has become a central approach to self-supervised representation learning, and InfoNCE or closely related softmax-based objectives sit at the core of many influential methods in vision, language, and multimodal learning~\cite{jaiswal2021survey,lekhac2020contrastive,oord2018cpc,chen2020simple,he2020moco,gao2021simcse,radford2021learning}. The standard interpretation is that the model learns to assign high similarity to related views of the same example and low similarity to views from different examples~\cite{oord2018cpc,chen2020simple,he2020moco}. This approach has proved highly effective for learning transferable representations from unlabeled data~\cite{jaiswal2021survey,lekhac2020contrastive,chen2020simple,he2020moco,gao2021simcse,radford2021learning}.

In many of these systems, similarities are computed after normalizing the learned representations, typically through cosine similarity or an equivalent scaled inner product on the unit sphere~\cite{chen2020simple,wang2020,gao2021simcse,radford2021learning}. This places the loss in a bounded similarity regime. At the same time, prior work shows that contrastive learning is highly sensitive to the negative set. SimCLR reports strong gains from larger batch sizes and longer training~\cite{chen2020simple}, theoretical analyses emphasize the role of the number of negatives in representation quality~\cite{arora2019theoretical}, and several works show that hard negatives receive disproportionate weight under contrastive losses~\cite{wang2021behaviour,kalantidis2020hnm,robinson2021hardneg}. Taken together, these observations suggest that the high-similarity tail of the negative distribution deserves special attention.

In most of this literature, the softmax in InfoNCE appears simply as part of the loss definition. Contrastive learning and InfoNCE are often justified through information-theoretic arguments, for example by showing that minimizing the loss maximizes a lower bound on mutual information~\cite{oord2018cpc, chen2020simple, lu2023fmicl, tian2020, poole2019, alshammari2025icon,ryu2026contrastive}. 
In this paper, we show that the softmax implicitly assumes a specific assumption on the distribution of similarities. 
In particular, we borrow the insights from discrete choice theory, where  softmax probabilities arise from the Plackett--Luce model for top-1 selection~\cite{luce1959individual,plackett1975analysis}. This provides a new perspective on InfoNCE: when the positive example wins against a set of negatives, the loss is implicitly fitting a statistical model for that winning event. This viewpoint lets us ask whether the hidden assumption behind softmax is compatible with the normalized embedding regime used in modern contrastive learning.

We answer this question negatively in the regime that matters most. The softmax assumption is not well aligned with normalized embeddings near the top of the similarity range, where the hardest negatives live. This leads to \textsc{WEINCE} (\emph{Weibull-Enhanced InfoNCE}), a simple modification of InfoNCE that uses online batch statistics to interpolate between standard softmax logits and an endpoint shortfall correction. The interpolation is deliberate: it recovers InfoNCE when anchor-wise evidence for endpoint behavior is weak and moves toward the Weibull shortfall link only when the current negative set is near the score cap. We support this view with theory, diagnostics, and experiments, and we show consistent improvements in frozen-feature evaluation across five vision benchmarks and SimCSE sentence embeddings.

Our contributions are threefold. First, we make explicit the statistical assumption hidden inside the standard InfoNCE softmax. Second, we show why this assumption becomes problematic in normalized embedding spaces and how the mismatch appears in likelihood and gradient-allocation diagnostics. Third, we introduce \textsc{WEINCE}, a practical replacement of InfoNCE that utilizes anchor-wise tail evidence to interpolate between softmax and Weibull shortfall logits.

\section{Revisiting InfoNCE via Extreme-Value Theory}

Self-supervised representation learning aims to learn an encoder $f$ from \emph{unlabeled} data so
that its representations transfer well to supervised downstream tasks, especially when labeled data
is scarce. A standard pipeline first pretrains an encoder backbone with a self-supervised objective
and then either fine-tunes the encoder or trains a lightweight prediction head using the available
labels.

Contrastive learning implements self-supervision by constructing multiple views of the same
underlying data point and treating these views as a positive pair, while treating views from other
data points as negatives. A widely used contrastive objective in this setting is the InfoNCE loss.

\begin{definition}[InfoNCE loss]
\label{def:iNCE}
Let $(X,Y_0)\sim p_{XY}$ denote two correlated views (for example, an input and an augmentation),
and let $Y_1,\dots,Y_K \stackrel{\mathrm{i.i.d.}}{\sim} p_X$ be negatives drawn independently of
$(X,Y_0)$. Write $U=f(X)$ and $V_i=f(Y_i)$ for $i\in\{0,1,\dots,K\}$. The population InfoNCE loss is
\begin{align}
\mathcal{L}(f)
\triangleq
-\E\!\left[
\log
\frac{\exp(s(U,V_0)/\tau)}
{\sum_{i=0}^{K}\exp(s(U,V_i)/\tau)}
\right],
\label{eq:iNCE}
\end{align}
with empirical counterpart
\begin{align}
\widehat{\mathcal{L}}(f)
\triangleq
-\frac{1}{m}\sum_{j=1}^{m}
\log
\frac{\exp(s(u_j,v_{j0})/\tau)}
{\sum_{i=0}^{K}\exp(s(u_j,v_{ji})/\tau)}.
\label{eq:iNCEemp}
\end{align}
where $s(\cdot,\cdot)$ is cosine similarity
\[
s(u,v)
=
\frac{\langle u,v\rangle}{\|u\|_2\|v\|_2},
\]
and $\tau>0$ is the temperature parameter.
\end{definition}

\subsection{InfoNCE as a Preference Model Estimation Objective}

Each term in~\cref{eq:iNCEemp} can be viewed as the negative log-likelihood of a \emph{top-1 choice}
event: for an anchor view $u_j$, the positive $v_{j0}$ is observed to ``win'' among the candidate set
$\{v_{ji}\}_{i=0}^{K}$ (positive plus $K$ negatives). This connects InfoNCE to learning-to-rank and
discrete choice models.\footnote{In the classical parametric case this reduces to the Plackett--Luce
(multinomial logit) model.}

\paragraph{A general random-utility generative model.}
Fix an anchor $j\in\{1,\dots,m\}$ and let $\{s_{ji}\}_{i=0}^{K}$ denote the (latent) scores assigned to
the $K\!+\!1$ candidates. A general way to model top-1 outcomes is through a \emph{random utility model}
in which each candidate $i$ is associated with a latent utility $U_{ji}$, drawn independently across
candidates from a conditional family
\[
U_{ji} \,\mid\, s_{ji} \;\sim\; P_{U\mid S=s_{ji}},
\qquad i\in\{0,1,\dots,K\},
\]
and the observed outcome reveals only the winner
\[
I_j \;=\; \argmax_{0\le i\le K} U_{ji}.
\]
Under this lens, a contrastive dataset provides outcomes $\cI \triangleq \{I_j\}_{j=1}^{m}$, and the
learning problem is to parameterize the scores via an encoder and maximize the likelihood of the
observed winners. Since the candidate indices are arbitrary up to relabeling, we may, without loss of
generality, index the positive candidate as $0$ for every anchor.

\paragraph{Recovering InfoNCE as a special case.}
InfoNCE corresponds to a particular \emph{parametric} choice of the conditional model $P_{U\mid S}$,
namely an \emph{additive} random-utility model with i.i.d.\ Gumbel noise:
\begin{equation}
U_{ji} = s_{ji} + \epsilon_{ji},
\qquad
\epsilon_{ji}\stackrel{\mathrm{i.i.d.}}{\sim}\mathrm{Gumbel}(0,\tau).
\label{eq:assump_pl}
\end{equation}
Under~\cref{eq:assump_pl}, the top-1 probability takes the softmax form
\[
\P\!\left(I_j = k \,\big|\, \{s_{ji}\}_{i=0}^{K}\right)
=
\frac{\exp(s_{jk}/\tau)}{\sum_{i=0}^{K} \exp(s_{ji}/\tau)}.
\]
We parameterize scores using an encoder $f:\mathcal{V}\to\mathbb{R}^d$ via cosine similarity,
\[
s_{ji}
=
\frac{\langle f(u_j), f(v_{ji})\rangle}{\|f(u_j)\|_2\|f(v_{ji})\|_2},
\]
so the log-likelihood of observing $\cI$ becomes
\begin{align}
\ell(f)
\triangleq&
\log \mathbb{P}(\cI) \nonumber \\
=& \sum_{j=1}^{m} \log \mathbb{P}(I_j = 0) \nonumber  \\
=&
\sum_{j=1}^{m}
\log
\frac{\exp(s(u_j,v_{j0})/\tau)}
{\sum_{i=0}^{K}\exp(s(u_j,v_{ji})/\tau)}.
\label{eq:pl_loglik_infonce}
\end{align}
Up to the factor $1/m$, the negative log-likelihood $-\ell(f)$ is exactly the empirical InfoNCE
objective in~\cref{eq:iNCEemp}.

\paragraph{Implicit assumptions behind the softmax link.}
This likelihood view highlights what InfoNCE implicitly assumes about how top-1 outcomes are
generated. The softmax form is not only tied to the \emph{Gumbel} distributional choice, but also to
the \emph{additive-in-score} structure in~\cref{eq:assump_pl}. More generally, however, the latent
``utility'' driving which candidate wins need not be an additive perturbation of the score: one could
instead posit an unknown conditional model $P_{U\mid S}$ (potentially non-additive, heteroskedastic,
or operating in a different tail coordinate altogether). This leads to the guiding question of the
paper:
\begin{quote}
\emph{When is the softmax top-1 likelihood implied by~\cref{eq:assump_pl} a good approximation to the
top-1 outcomes generated by an unknown conditional utility model $P_{U\mid S}$, and what replaces it
when it is not?}
\end{quote}

At this level of generality, the question is underdetermined: there are many ways to specify
$P_{U\mid S}$, and different choices induce different top-1 link functions. To obtain principled
guidance, we turn to asymptotics.\footnote{A recurring theme in statistics is that large-sample or high-dimensional limits reveal
\emph{universal} structure. Classical inference for sums rests on the central limit theorem
(e.g.,~\cite{vanderVaart1998}), while the Tracy--Widom law is a canonical universality result for the
fluctuations of extreme eigenvalues of large random matrices (e.g.,~\cite{tracy1994level,AGZ2010,johnstone2001distribution}).} In modern contrastive learning, the number of negatives $K$
can be large, and the top-1 event is governed by the extreme behavior of $\max_i U_{ji}$,
or equivalently by the tail behavior of the utility family. Just as the CLT provides a universal limit theory for sums, extreme value theory (EVT)~\citep{De_Haan2006-bt} provides a universal limit theory for maxima. 

This motivates our next step: analyze the asymptotic regime where $K\to\infty$, using EVT to identify which aspects of
$P_{U\mid S}$ matter for top-1 probabilities and to derive geometry-consistent alternatives to the
softmax link.

\subsection{Understanding Large-$K$ Behavior of InfoNCE via Extreme Value Theory}
\label{sec:asymptotics}
\subsubsection{A tail geometry trichotomy}
\label{subsec:ms_trichotomy}
We now turn to extreme value
theory (EVT), which characterizes the distributional behavior of maxima. The foundational result is
the Fisher--Tippett--Gnedenko theorem~\cite{gnedenko1943distribution}, which can
be understood as a central limit theorem for maxima.

\begin{theorem}[Fisher--Tippett--Gnedenko]
\label{thm:ftg}
Let $X_1,X_2,\ldots$ be i.i.d.\ random variables with common CDF $F$, and let
$M_n=\max_{1\le i\le n}X_i$. If there exist normalizing sequences $a_n>0$ and $b_n\in\mathbb{R}$ such
that
\[
\mathbb{P}\!\left(\frac{M_n-b_n}{a_n}\le x\right)\to G(x)
\]
for some non-degenerate CDF $G$, then on any compact set
$K\subset\{x: G(x)\in(0,1)\}$ there exists $\xi\in\mathbb{R}$ such that
\begin{equation}
\label{eq:ftg:canon}
\frac{\bar F(b_n+a_n x)}{\bar F(b_n)} \to \nu_{\xi}(x),
\end{equation}
where $\bar F = 1-F$ and
\[
\nu_{\xi}(x)
=
\begin{cases}
\exp(-x), & x\in\mathbb{R}, \qquad \xi=0,\\
(1+\xi x)^{-1/\xi}, & 1+\xi x>0, \qquad \xi\neq 0.
\end{cases}
\]
\end{theorem}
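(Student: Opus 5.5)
We prove the statement in the standard way. First identify the limit $G$ up to affine change of variables as a generalized extreme value law. Then convert convergence of maxima into convergence of tail ratios. Finally upgrade pointwise convergence to uniform convergence on compacts.

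\textbf{Step 1 (max-stability).} Since $\P(M_n\le b_n+a_nx)=F^n(b_n+a_nx)\to G(x)$, we look at blocks. For each fixed $k\in\mathbb{N}$, consider $M_{nk}$ both as the maximum of $nk$ variables and as the maximum of $k$ independent copies of $M_n$. Khintchine's convergence-of-types lemma then gives constants $A_k>0$, $B_k$ with $G^k(A_kx+B_k)=G(x)$. Thus $G$ is max-stable. Extending $k$ to real $t>0$ through $\lfloor nt\rfloor$ yields measurable families $A(t),B(t)$ satisfying the Cauchy-type equations $A(st)=A(s)A(t)$ and $B(st)=A(s)B(t)+B(s)$. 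Their measurable solutions are $A(t)=t^{\xi}$ and $B(t)=c(t^\xi-1)/\xi$ (with $B(t)=c\log t$ when $\xi=0$). Solving $G^t(A(t)x+B(t))=G(x)$ then shows that, after an affine normalization of $a_n,b_n$ (which we are free to absorb), $G(x)=\exp(-\nu_\xi(x))$ on $\{1+\xi x>0\}$.

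\textbf{Step 2 (from maxima to tails).} On $\{G\in(0,1)\}$, taking logs gives $n\log F(b_n+a_nx)\to\log G(x)\in(-\infty,0)$. This forces $\bar F(b_n+a_nx)\to0$, so $-\log F\sim\bar F$, and hence $n\bar F(b_n+a_nx)\to\nu_\xi(x)$. Dividing the statement at $x$ by the one at $x=0$, where $\nu_\xi(0)=1$, gives \cref{eq:ftg:canon} pointwise. If $0$ is not interior to the support of $G$ after normalization, we first shift $b_n$ by $a_nx_0$ for some $x_0$ in the support and re-normalize.

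\textbf{Step 3 (uniformity on compacts).} Each map $x\mapsto \bar F(b_n+a_nx)/\bar F(b_n)$ is monotone non-increasing, and the limit $\nu_\xi$ is continuous and strictly decreasing on the relevant interval. A P\'olya/Dini-type argument therefore upgrades pointwise to uniform convergence on any compact $K\subset\{G\in(0,1)\}$. The argument is the usual one: take a fine grid on $K$, sandwich each $x$ between grid points, and use monotonicity.

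\textbf{Main obstacle.} The main difficulty is Step 1. It requires showing that $A,B$ are measurable and that the functional equations have only power/logarithmic solutions, and it requires careful handling of the convergence-of-types lemma along non-integer $t$. Rather than re-deriving this, I would cite the classical treatment in \citep{De_Haan2006-bt} (Theorem 1.1.3 and Theorem 1.1.6 there), which gives exactly the equivalence between max-domain attraction and the tail ratio limit $n\bar F(b_n+a_nx)\to\nu_\xi(x)$. Steps 2 and 3 are then short.
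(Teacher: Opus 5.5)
Your proposal is correct and takes essentially the same route as the paper, which obtains this statement as the i.i.d.\ special case of its generalized FTG theorem. In both, block maxima plus convergence of types give max-stability, and the functional equations force a GEV limit once an affine change of $(a_n,b_n)$ is absorbed; the paper makes the same normalization when it sets $c=1$ and $\psi(0)=1$, and without it the limit is only $\nu_\xi(x/\sigma)$ for some $\sigma>0$. Taking logarithms then gives $n\bar F(b_n+a_nx)\to-\log G(x)$, whose ratio at $x$ and at $0$ is the tail-ratio limit; the only differences are technical: you extend to real $t$ via $\lfloor nt\rfloor$ and measurable Cauchy solutions where the paper uses rationals and continuity of $G$, and your monotone (P\'olya-type) uniformity argument is more explicit than the paper's one-line remark.
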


This single parameter $\xi$ describes three qualitatively different tail geometries
(see Figure~\ref{fig:ftg}):
\begin{enumerate}
\item \textbf{Fr\'echet ($\xi>0$):} heavy tails (power law type) with unbounded right tail.
\item \textbf{Gumbel ($\xi=0$):} rapidly or exponentially decaying tails under the appropriate normalization; the limiting law has support on all $\mathbb{R}$.
\item \textbf{Weibull ($\xi<0$):} finite right endpoint with regular variation in the shortfall to that endpoint; after affine normalization the support is bounded above.
\end{enumerate}
Strictly speaking, a domain of attraction is a property of a tail law, not of one realized finite set of scores. Moreover, bounded support alone does not force the Weibull domain: finite-endpoint distributions with sufficiently rapid decay near the endpoint can still be Gumbel-type. Thus, for bounded cosine similarities, the relevant question is local and anchor-dependent: whether the active hard-negative tail behaves more like a translation/Gumbel tail or like an endpoint shortfall/Weibull tail.

\begin{figure}[t]
\centering
\includegraphics[width=0.6\linewidth]{figures/GEV.pdf}
\caption{Examples of distributions exhibiting the three tail geometries (Fr\'echet, Gumbel, Weibull)
that arise as limits for normalized maxima under Theorem~\ref{thm:ftg}.}
\label{fig:ftg}
\end{figure}

\begin{remark}
An equivalent statement of the theorem is that the limit law $G$ belongs to the generalized extreme
value family. The theorem does not identify the absolute tail scale of $F$ since the normalization
sequences $(a_n,b_n)$ absorb that. What remains universal is the relative tail geometry captured by
the tail ratio limit~\eqref{eq:ftg:canon}.
\end{remark}

\paragraph{An extension to non-identical utilities.} For our purposes, the candidates need not be identically distributed, because their utility laws depend on their scores. We therefore use a non-identically distributed extension of the FTG tail-ratio characterization.

The formal statement and proof are given in Appendix~\ref{app:ftg_proof}. The result requires a mild homogeneity condition that rules out pathological index orderings; this is natural in our setting because negative examples are randomly drawn without replacement.

\subsubsection{Canonical index models generating a tail geometry}
\label{subsec:geom_index_models}
\plottableftg

In the previous section, we show that, in the large candidate regime, maxima exhibit one of
three universal tail geometries (Fr\'echet, Gumbel, or Weibull). This suggests a modeling principle
that we use throughout the paper: rather than committing a priori to a specific parametric form such
as additive Gumbel noise as in \cref{eq:assump_pl}, we can first ask which tail geometry the latent utilities plausibly follow
and then use a ranking model that is compatible with that geometry. Equivalently, each admissible
tail geometry can be paired with a canonical random utility model whose induced maxima lie in that
geometry. Since the observed label is a top-1 outcome, selecting a model with mismatched effective
geometry can distort the relative probabilities in the extreme tail and can be statistically
inefficient for estimating the underlying scores from winner observations.\footnote{This is the standard quasi maximum likelihood interpretation formalized in
\cref{thm:qmle_white}; see also \cite{white1982mle,gourieroux1984pseudo,vanderVaart1998}.}

These canonical ranking models can be viewed as generalizations of the Plackett--Luce model in \cref{eq:assump_pl}, obtained by
working in an appropriate tail coordinate. Concretely, there exists a monotone transform $T$ and an
index map $\eta(\cdot)$ such that utilities satisfy
\begin{equation}
\label{eq:tail_coord_index}
T(X_i)=\eta(s_i)+\varepsilon_i,
\end{equation}
where $\varepsilon_i$ is candidate independent noise. Inverting the tail coordinate yields three
representative forms:
\begin{enumerate}
\item[(a)] \textbf{Multiplicative utility (Fr\'echet):} $X_i=\rho(s_i)\,Z_i$.
\item[(b)] \textbf{Additive utility (Gumbel):} $X_i=s_i+\epsilon_i$.
\item[(c)] \textbf{Shortfall to endpoint (Weibull):} $X_i=b-q(s_i)\,W_i$.
\end{enumerate}
Here $Z_i,\epsilon_i,W_i$ are i.i.d.\ noise variables whose tails lie in the Fr\'echet, Gumbel, and
Weibull domains, respectively.

\subsubsection{Asymptotically Correct Link Functions}

Recall the general formulation $U \mid S=s \sim P_{U\mid S=s}$ introduced earlier. Let
$F_s$ be the CDF of $U\mid S=s$ and $\bar F_s(t)\triangleq \mathbb{P}(U>t\mid S=s)$ its survival
function. The next result shows that, for large candidate sets, the top-$1$ selection probabilities
depend on $P_{U\mid S}$ only through these tail probabilities evaluated at the normalization level
$b_n$ from Theorem~\ref{thm:ftg_noniid}. The assumptions and proof are in
Appendix~\ref{app:asymp_top1_prob}.

\begin{restatable}[Tail mass proportionality]{theorem}{tailMassProp}
\label{lem:mass_prop}
Let $b_n\in\mathbb{R}$ and $a_n>0$ be as in Theorem~\ref{thm:ftg_noniid}. Then under Assumption~\ref{assump:tail_mass}, uniformly in $1\le i\le n$,
\[
p_i
\triangleq
\mathbb{P}(I_n=i\mid \bs_n)
=
\frac{\bar F_{s_i}(b_n)}{\sum_{r=1}^{n}\bar F_{s_r}(b_n)}\,(1+o(1)).
\]
\end{restatable}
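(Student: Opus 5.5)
We start from the exact winner probability. With independent utilities $U_1,\dots,U_n$, $U_r\sim F_{s_r}$ (continuous, so ties have probability zero),
\[
p_i=\mathbb{P}(I_n=i\mid \bs_n)=\int \prod_{r\neq i}F_{s_r}(t)\,dF_{s_i}(t).
\]
We then change variables to the normalized scale $t=b_n+a_n x$. The proof splits into two pieces: an approximation of the product $\prod_{r\neq i}F_{s_r}$, and an approximation of the measure $dF_{s_i}$ near $b_n$. We assume that Assumption~\ref{assump:tail_mass} provides the following, as in the non-identical extension Theorem~\ref{thm:ftg_noniid}.

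(i) A uniform tail-ratio limit $\bar F_{s_r}(b_n+a_nx)/\bar F_{s_r}(b_n)\to\nu_\xi(x)$, with the same $\xi$ for all $r$ and uniformly in $r\le n$ on compacts.

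(ii) A uniform asymptotic negligibility condition $\max_r \bar F_{s_r}(b_n)\to 0$, with $\Lambda_n\triangleq\sum_r\bar F_{s_r}(b_n)$ bounded away from $0$ and $\infty$. Without loss of generality we normalize $b_n$ so that $\Lambda_n\to 1$.

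\textbf{Step 1 (product).} Use $\log F=\log(1-\bar F)=-\bar F(1+O(\bar F))$ together with uniform negligibility. This gives
\[
\prod_{r\neq i}F_{s_r}(b_n+a_nx)=\exp\!\Big(-\sum_{r}\bar F_{s_r}(b_n+a_nx)\Big)(1+o(1)).
\]
Removing the $r=i$ term costs only $o(1)$. Applying (i) termwise, the sum equals $\Lambda_n\nu_\xi(x)(1+o(1))$ uniformly on compact $x$-sets.

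\textbf{Step 2 (integral).} Write $dF_{s_i}(b_n+a_nx)=-d_x\bar F_{s_i}(b_n+a_nx)$. Dividing by $\bar F_{s_i}(b_n)$, the rescaled measures converge weakly (uniformly in $i$) to $-d\nu_\xi(x)$ by (i). Hence
\[
\frac{p_i}{\bar F_{s_i}(b_n)}=\int e^{-\Lambda_n\nu_\xi(x)}\,\bigl(-d\nu_\xi(x)\bigr)\,(1+o(1))=\frac{1-e^{-\Lambda_n\nu_\xi(x_{\min})}}{\Lambda_n}\Big|\to \frac{1}{\Lambda_n}(1+o(1)).
\]
The last equality uses the substitution $v=\nu_\xi(x)$ over the full range $v\in(0,\infty)$, which gives $\int_0^\infty e^{-\Lambda v}dv=1/\Lambda$. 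This yields $p_i=\bar F_{s_i}(b_n)/\Lambda_n\,(1+o(1))$ uniformly in $i$, which is the claim.

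\textbf{Main obstacle (tails of the integral).} The convergence in (i) is only on compact subsets of $\{G\in(0,1)\}$, so the hard step is controlling the two pieces of the integral outside a compact $[x_-,x_+]$, uniformly in $i$.

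For the upper region $x>x_+$, bound the product by $1$. The contribution is then at most $\bar F_{s_i}(b_n+a_nx_+)/\bar F_{s_i}(b_n)\to\nu_\xi(x_+)$, which is small for large $x_+$ (or exactly zero past the Weibull endpoint).

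For the lower region $x<x_-$, bound $dF_{s_i}$ by the total mass $F_{s_i}(b_n+a_nx_-)\le 1$, relative to $\bar F_{s_i}(b_n)$. This bound is too crude, so I would instead use monotonicity. On this region
\[
\prod_{r\ne i}F_{s_r}(t)\le \prod_{r\neq i}F_{s_r}(b_n+a_nx_-)\approx e^{-\Lambda\nu_\xi(x_-)},
\]
which is tiny, and the $dF_{s_i}$-mass of $\{x<x_-\}$ is at most $1$. This still does not beat $\bar F_{s_i}(b_n)$, which tends to $0$. To handle it, I would chain over a geometric grid $x_-=x_0>x_1>\dots$. On each cell the product bound is $e^{-\Lambda\nu_\xi(x_{k})}$ and the mass is at most $\bar F_{s_i}(b_n+a_nx_{k+1})$. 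The ratio of this mass to $\bar F_{s_i}(b_n)$ is controlled by a uniform Potter-type bound, which I would expect Assumption~\ref{assump:tail_mass} to supply; this is where the homogeneity condition enters. Summing over cells then gives a contribution that vanishes as $x_-\to-\infty$ (or toward the lower end of the support), uniformly in $i$. Letting $x_\pm$ go to their limits completes the argument.
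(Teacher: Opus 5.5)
\textbf{Where the proposal breaks.} Your compact-window analysis and your upper-region bound are correct. The compact part also takes a slightly leaner route than the paper. You integrate the monotone product against the Stieltjes measure $-d_x\bar F_{s_i}(b_n+a_nx)/\bar F_{s_i}(b_n)$, which needs only the tail-ratio half of (A2). The paper instead writes $p_i=w_{i,n}\int_D G_{i,n}(x)\,dx$ with $G_{i,n}(x)=\frac{a_nf_{s_i}(b_n+a_nx)}{\bar F_{s_i}(b_n)}\prod_{j\neq i}F_{s_j}(b_n+a_nx)$, and uses the density-ratio half of (A2) to get $G_{i,n}(x)\to v(x)e^{-\Lambda_n\bar\nu(x)}$ pointwise, uniformly in $i$.

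The gap is the lower region. You rightly call it the crux, but you close it with a ``uniform Potter-type bound'' that you expect the hypotheses to supply through the homogeneity condition. They do not. Assumption~\ref{assump:tail_mass} contains no such bound. Block homogeneity (Assumption~\ref{assump:block_hom}) concerns only block maxima at the $b_n+a_nx$ scale, and these are insensitive to mass that one candidate places far below $b_n$. Your chaining would in fact need Potter-type control off compacts for two quantities: the individual ratio $\bar F_{s_i}(b_n+a_nx)/\bar F_{s_i}(b_n)$, and the aggregate $\sum_r\bar F_{s_r}(b_n+a_nx)$ inside the product bound.

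This control cannot be derived from your (i)--(ii), and without it the conclusion is false. Take all candidates except $i$ to be i.i.d.\ standard Gumbel, so $a_n=1$ and $b_n=\log n$. Fix $\delta\in(0,1)$, set $M_n=\log\log\log n$, and let $U_i$ be the mixture of $(1-\delta)\times$Gumbel and $\delta\times\mathrm{Uniform}[\log n-M_n-1,\log n-M_n]$. Then (A1)--(A3) all hold, because the uniform chunk eventually lies below every compact window. Yet $p_i\ge\delta\exp(-e^{M_n+1})=\delta(\log n)^{-e}$, while $w_{i,n}/\Lambda_n\approx(1-\delta)/n$, so the ratio tends to infinity.

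\textbf{What closes the gap.} The missing ingredient is (A4): an envelope $\Phi\in L^1(D)$ with $0\le G_{i,n}\le\Phi$ on $D$ for all large $n$ and all $i$. It fails in the example above, since there $\int G_{i,n}\to\infty$. With (A4), the lower region is bounded by $\int_{\{x<x_-\}}\Phi$ uniformly in $i$ and $n$, so no grid or Potter bounds are needed.

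The paper in fact skips the compact/tail split entirely. It applies dominated convergence on all of $D$ with dominating function $\Phi+v\,e^{-\underline\Lambda\bar\nu}$. It then evaluates $\int_D v\,e^{-\Lambda_n\bar\nu}\,dx=1/\Lambda_n$ using (A3), which is exactly your $\int_0^\infty e^{-\Lambda v}\,dv$ computation.

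Because (A4) is phrased through densities, invoking it means adopting the density half of (A2) as well. Your Stieltjes route therefore gains generality only if you replace (A4) by a density-free uniform-integrability hypothesis such as
\[
\lim_{x_-\downarrow l}\ \limsup_{n\to\infty}\ \sup_{i}\ \frac{1}{\bar F_{s_i}(b_n)}\int_{\{t<b_n+a_nx_-\}}\prod_{r\neq i}F_{s_r}(t)\,dF_{s_i}(t)=0 .
\]
Separately, your normalization $\Lambda_n\to1$ is unnecessary, and not quite ``without loss of generality'', since $b_n$ is fixed by the statement. Your final formula already carries the factor $1/\Lambda_n$.
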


Theorem~\ref{lem:mass_prop} reduces the large $K$ top-$1$ link induced by $P_{U\mid S}$ to the tail
weights $\bar F_{s_i}(b_n)=\mathbb{P}(U>b_n\mid S=s_i)$. The FTG tail ratio characterization
(Theorem~\ref{thm:ftg}) implies that, within a fixed EVT domain of attraction (equivalently, for a
fixed tail geometry parameter $\xi$), these tail weights have universal asymptotic ratios. In
\cref{lem:weights_from_tailratios} we make this dependence on $\xi$ explicit, yielding
geometry-dependent top-$1$ links. Evaluating the resulting tail weights for the canonical models in
Section~\ref{subsec:geom_index_models} gives the closed form probabilities in
Table~\ref{tab:geom_index_models}. The table should be read conditionally: different anchors,
training stages, or negative pools may exhibit different effective tail geometries, and a finite
minibatch cannot identify the asymptotic domain with certainty. Thus the theory motivates an
adaptive link, rather than a global replacement of softmax by a single alternative. In the next
section, we test whether winner outcomes observed during InfoNCE training show systematic endpoint
evidence beyond the softmax link implied by \cref{eq:assump_pl}.
\subsection{InfoNCE May Be Misspecified for Contrastive Representation Learning}
\label{sec:rethinking}

Recall from \cref{eq:assump_pl} that InfoNCE coincides with maximum likelihood when utilities follow
an additive random utility model with i.i.d.\ Gumbel noise, which yields the softmax top-$1$ link.
The discussion in \cref{sec:asymptotics} suggests that this is only one possible effective regime:
even when the conditional utility model $P_{U\mid S}$ is unknown, the winner event for large
candidate sets is governed by tail behavior, and under approximate max-stability it must fall into
one of three tail geometries indexed by $\xi$ (Gumbel $\xi=0$, Fr\'echet $\xi>0$, Weibull $\xi<0$).
From this asymptotic standpoint, diagnosing misspecification means asking how much endpoint
geometry is active, and in which choice sets, rather than assigning one geometry globally. This leads
to the following empirical question:
\begin{quote}
\emph{Is there systematic endpoint evidence beyond the softmax/Gumbel link, and when does it become important for top-$1$ prediction?}
\end{quote}

\subsubsection{Empirical evidence: endpoint behavior in the score tail}
We begin with a direct look at a representative aggregate tail of high negative similarities.
Applying the standard Peaks-Over-Threshold (POT) methodology to cosine similarities
from a frozen encoder (\cref{fig:pot_analysis}), we fit a Generalized Pareto distribution to exceedances above a high threshold.
For this aggregate diagnostic, the fitted shape parameter is $\hat\xi = -0.39$,
indicating Weibull-type endpoint behavior ($\xi < 0$) in the near-cap tail.
This establishes an endpoint component in the hard-negative regime, but it does not imply that every finite anchor choice set should be modeled by a pure Weibull link.
The Gumbel special case ($\xi = 0$) is a poor upper-tail fit:
its density (red dashed in \cref{fig:pot}) assigns positive mass beyond the
score ceiling, while the Weibull-GPD (black dashed) correctly
drops to zero at the cap. The QQ plot (\cref{fig:qq}) gives the same message:
the Weibull-GPD quantiles lie close to the diagonal, while the Gumbel
deviates sharply in the upper tail.

\begin{figure}[h]
    \centering
    \begin{subfigure}[t]{0.48\linewidth}
        \centering
        \includegraphics[width=\textwidth]{figures/exceedance_plot_single_3_epoch_70_gpd_fit.pdf}
        \label{fig:pot}
    \end{subfigure}
    \hfill
    \begin{subfigure}[t]{0.48\linewidth}
        \centering
        \includegraphics[width=\textwidth]{figures/pot_fit_qq_single_3_epoch_70.pdf}
        \label{fig:qq}
    \end{subfigure}
    \caption{POT analysis of a representative aggregate score tail from a frozen encoder ($\hat\xi = -0.39$). \textbf{Left:} Exceedance density. Weibull-GPD (black) drops to zero at the cap (green); Gumbel-GPD (red) does not. \textbf{Right:} QQ plot. Weibull-GPD (blue) matches the upper tail; Gumbel (red) overestimates upper quantiles.}
    \label{fig:pot_analysis}
\end{figure}

\subsubsection{A likelihood-based link selection test}
\label{sec:evt:link-test}
The POT analysis is marginal and tail-level; it does not decide whether every anchor should use a pure endpoint link.
We now ask how much this component matters for top-$1$ prediction relative to the standard translation/Gumbel coordinate.
We compare the standard softmax link (translation
coordinate) against a nested one-parameter family that interpolates between the translation
coordinate and an endpoint (shortfall) coordinate.
Here, since heavy tail (Fr\'echet) behavior is not a natural hypothesis here because cosine similarity
scores are bounded above ($s\le 1$), we focus on translation (Gumbel) versus endpoint
(Weibull) regimes in the main experiments. We defer an ablation of the Fr\'echet case to
Appendix~\ref{app:frechet}.

Concretely, for each frozen checkpoint, we build an evaluation bank consisting of $32$ batches of
size $B=256$, yielding $2B$ augmented views per batch. For each anchor view $i$ in a batch, the
candidate set $\mathcal{N}_i$ is the other $2B-1$ views in the same batch, and the observed winner
is its paired positive $i^+$. We compute cosine similarity scores $s_{ik}$ for
$k\in\mathcal{N}_i$ and treat the labels $\{i^+\}$ as top-$1$ outcomes. This evaluation does not
require access to latent utilities.

We split the evaluation bank into held-in batches (used to fit link parameters) and held-out
batches (used only for reporting). We compare:
\begin{itemize}
\item \textbf{$\mathbf{\mathcal{H}_0}$ (softmax / Gumbel link):}
Under the additive random utility model with i.i.d.\ Gumbel noise, the top-$1$ probability is the
softmax link
\begin{equation}
\label{eq:h0_softmax}
p_0(Y_i=j \mid \{s_{ik}\}_{k\in\cN_i})
=
\frac{\exp(s_{ij}/\tau_0)}
{\sum_{k\in\cN_i}\exp(s_{ik}/\tau_0)}.
\end{equation}
We fit the temperature $\tau_0>0$ by maximizing the held-in top-$1$ log likelihood.

\item \textbf{$\mathbf{\mathcal{H}_1}$ (blended tail coordinate link):}
To capture endpoint geometry, define the shortfall coordinate
$g(x)\triangleq -\log(1-x)$ for $x<1$ (cosine endpoint $b=1$), and the nested transform
\begin{equation}
\label{eq:blend_transform}
T_\lambda(x)
\triangleq
(1-\lambda)x + \lambda g(x),
\qquad
\lambda\in[0,1].
\end{equation}
The induced top-$1$ link is again softmax,
\begin{equation}
\label{eq:h1_blend}
p_1(Y_i=j \mid \{s_{ik}\}_{k\in\cN_i})
=
\frac{\exp(T_\lambda(s_{ij})/\tau_1)}
{\sum_{k\in\cN_i}\exp(T_\lambda(s_{ik})/\tau_1)}.
\end{equation}
This family is nested since $\lambda=0$ recovers $\mathcal{H}_0$, while $\lambda=1$ gives the pure shortfall coordinate. Intermediate values quantify a partial endpoint component rather than an all-or-nothing replacement of softmax. We fit $(\lambda,\tau_1)$ on held-in batches by maximizing the top-$1$ log likelihood.
\end{itemize}

\paragraph{Evaluation metric.}
On held-out batches we report the mean log likelihood gain per anchor,
\[
\overline{\Delta}
=
\E\big[\log p_1(Y_i=i^+) - \log p_0(Y_i=i^+)\big],
\]
estimated by the held-out sample average (see Appendix~\ref{app:lrt} for details). Since
$\overline{\Delta}$ is a mean log ratio, $\exp(\overline{\Delta})$ is the ratio of geometric mean
probabilities assigned to the positive under $\mathcal{H}_1$ versus $\mathcal{H}_0$. We assess
significance using a one-sided clustered $t$ test over batches, and we also report bootstrap-based
confidence intervals (Appendix~\ref{app:lrt}).

\begin{figure}[t]
\centering
\includegraphics[width=1.0\linewidth]{figures/combined_forest_plot.pdf}
\caption{
Likelihood-based link selection across datasets and backbones (R18: ResNet18; R50: ResNet50; C10/100: CIFAR10/CIFAR100; S10: STL10).
Left: estimated blend weight $\hat\lambda$ (mean across seeds, 95\% confidence interval).
Right: held-out mean log likelihood gain $\overline{\Delta}$ (nats per anchor; mean across seeds,
95\% confidence interval). Numeric values are reported in \cref{tab:lrt_test} in Appendix~\ref{app:lrt}; epoch-wise trends
(\cref{fig:lrt_epochs}) are provided in Appendix~\ref{app:lrt}.
}
\label{fig:forest_lrt}
\end{figure}

\paragraph{Results.}
Across CIFAR-10, CIFAR-100, and STL-10, and across ResNet-18 and ResNet-50 backbones, the fitted blend
weight is consistently nonzero but also far from one ($\hat\lambda\approx 0.33$ to $0.50$), indicating a systematic
endpoint-shortfall component beyond pure translation geometry, rather than a wholesale replacement of softmax (\cref{fig:forest_lrt}). This is consistent with the FTG view: many anchors contribute near-cap hard negatives, while others remain translation-like or statistically ambiguous.
The held-out improvement is substantial
($\overline{\Delta}\approx 0.22$ to $0.36$ nats per anchor), corresponding to a geometric mean
increase in assigned probability of $\exp(\overline{\Delta})\approx 1.25$ to $1.43$. The gain is
statistically significant under batch-clustered testing and remains stable across training epochs
(Appendix~\ref{app:lrt}).

Additional ablations in Appendix~\ref{app:lrt} decouple choice-set hardness from negative-pool size.
Including easier negatives dilutes both $\hat\lambda$ and $\overline\Delta$, while increasing the pool from which a fixed hard set is selected drives $\hat\lambda$ upward.
This matches the EVT prediction that bounded-endpoint effects become stronger as harder near-cap competitors become available.

\paragraph{Implication for training.}
The link-selection test is more than a diagnostic of held-out likelihood. A consistently nonzero
endpoint weight means that a substantial fraction of anchors exhibit Weibull endpoint behavior, so a
pure softmax link gives a statistically misspecified winner model on those anchors. In the quasi-MLE
sense, softmax then targets the closest member of the wrong link family rather than the true top-$1$
law. This misspecification can lead to unfavorable gradient allocation, because contrastive training
places gradient mass according to the fitted winner probabilities. We give a formal 
misspecification statement, a synthetic noise-floor illustration, and the risk-versus-gradient
diagnostic in Appendix~\ref{app:geom_misspec}. The method below uses this evidence constructively: rather
than replacing softmax wholesale, \textsc{WEINCE} we propose below adapts the correction anchor by anchor.

\section{\textsc{WEINCE}: Correcting Softmax at the Extremes}
\label{sec:tlambda}
Section~\ref{sec:rethinking} shows that bounded-similarity top-$1$ outcomes often contain an endpoint shortfall component, especially for near-cap hard negatives.
Because finite minibatches mix endpoint-like, translation-like, and ambiguous anchors, we use an adaptive objective that corrects InfoNCE logits only when anchor-wise tail evidence supports it.
The full training step is summarized in \cref{alg:weince_colored}; implementation details are deferred to Appendix~\ref{app:weince}.

\paragraph{Logit level tail interpolation.}
Consider a batch of $2N$ $\ell_2$ normalized embeddings $\{z_i\}_{i=1}^{2N}$ and cosine
scores $s_{ij}\triangleq z_i^\top z_j\in[-1,1]$. For each anchor $i$, let $i^+$ denote its paired
positive and let $\cN_i =\{1,\dots,2N\}\setminus\{i\}$ be the candidate set (positive plus
negatives).

InfoNCE uses Plackett--Luce (softmax) logits
\[
\ell^{\mathrm{PL}}_{ij}\triangleq \frac{s_{ij}}{\tau},
\]
while a Weibull shortfall model suggests logits of the form
\[
\ell^{\mathrm{W}}_{ij}\triangleq -\beta \log(x_F-s_{ij})
\qquad (s_{ij}<1).
\]
with $x_F=1$ for cosine similarity, according to the tail-index model given in \cref{eq:tail_coord_index} and \cref{tab:geom_index_models}.
We define interpolated logits
\begin{equation}
\ell_{ij}
\triangleq
(1-\lambda_i)\,\ell^{\mathrm{PL}}_{ij}
+
\lambda_i\,\ell^{\mathrm{W}}_{ij},
\qquad
\lambda_i\in[0,1],
\label{eq:weince_logits}
\end{equation}
where $\lambda_i$ is an anchor-wise interpolation weight and $\beta$ is estimated online
(anchor-wise in our implementation, as described below). This nesting is intentional: $\lambda_i=0$ recovers vanilla InfoNCE, $\lambda_i=1$ gives a pure endpoint-shortfall link, and intermediate values represent anchors for which the tail evidence is mixed. The resulting loss is standard cross entropy:
\begin{equation}
\mathcal{L}_{\textsc{WEINCE}}(\theta)
\triangleq
-\frac{1}{2N}\sum_{i=1}^{2N}
\log
\frac{\exp(\ell_{i,i^+})}{\sum_{j\in \cN_i}\exp(\ell_{ij})}.
\label{eq:weince_loss}
\end{equation}
In practice, $\lambda_i$ is estimated separately for each anchor rather than fixed globally, so ambiguous anchors remain close to InfoNCE while endpoint-like anchors receive larger shortfall corrections.

\paragraph{Endpoint shortfall for cosine scores.}
For bounded cosine similarity the endpoint is $x_F=1$.
The Weibull row of \cref{tab:geom_index_models} gives $p_j\propto q(s_{ij})^{-\beta}$.
Locally near the endpoint, any differentiable shortfall scale with $q(s)\to0$ is equivalent up to a multiplicative constant, so we use $q(s)=x_F-s$ and hence $\ell^W_{ij}=-\beta\log(x_F-s_{ij})$.
A detailed derivation is given in Appendix~\ref{app:weince}.

\paragraph{How $\lambda_i$ and $\beta$ are chosen.}
Equation~\eqref{eq:weince_logits} requires an anchor-wise mixing weight $\lambda_i$ and a shortfall
slope $\beta$.
This is where \textsc{WEINCE} differs from a pure Weibit loss: because the active finite-sample tail regime can vary across anchors, $\lambda_i$ stays close to zero unless the current anchor shows both near-cap and Weibull-like evidence.
We estimate these quantities \emph{online} from the current minibatch similarity matrix,
using only the negative shortfalls $\delta_{ij}\triangleq 1-s_{ij}$ for $j\in\mathcal N(i)$:
\begin{enumerate}
    \item[(i)]  a \emph{cap-proximity} (hardness) signal $\rho_i=\min_{j\in\mathcal N(i)}\delta_{ij}$;
    \item[(ii)]  a \emph{tail-shape} signal obtained by fitting a Weibull line to the $K_{\mathrm{tail}}$ smallest
shortfalls and comparing it against a Gumbel proxy fit via an AIC-style score.
\end{enumerate}
We use the fitted Weibull slope $\hat\beta_i$ in $\ell^{\mathrm{W}}_{ij}=-\hat\beta_i\log(1-s_{ij})$
and set $\lambda_i$ as a smooth function increasing in both ``near-cap'' evidence (small $\rho_i$)
and ``Weibull-like'' evidence (positive $\Delta\mathrm{AIC}_i$). Anchors far from the cap or with Gumbel-like tail fits receive $\lambda_i$ close to zero, while anchors with strong near-cap and Weibull-like evidence receive larger endpoint corrections.
For stability, we clip $s_{ij}$ away from $1$ inside $\log(1-s_{ij})$, and we treat
$(\lambda_i,\hat\beta_i)$ as \texttt{stop\_grad} statistics (no backprop through the tail fits).
The complete estimation rules for $\lambda_i$ and $\hat\beta_i$ are summarized in \cref{alg:weince_colored}, with additional derivation details in Appendix~\ref{app:weince}.

\algoweince

\paragraph{Computational overhead.}
All tail statistics are computed from the batch similarity matrix with no additional forward passes or trainable parameters.
In a representative Tiny-ImageNet/ResNet-18 run with batch size 256 on an NVIDIA L40S over 770 steps, the mean step time changes from $253.3$\,ms for InfoNCE to $255.0$\,ms for \textsc{WEINCE}, corresponding to a $+0.67\%$ end-to-end overhead and a throughput change from $1011$ to $1004$ samples/s.
The loss-stage computation itself increases from $0.65$\,ms to $2.11$\,ms because of the per-anchor tail fit, but this stage remains below $1\%$ of total step time.


\section{Experiments}

\label{sec:experiments}
We evaluate \textsc{WEINCE} as a drop-in replacement for the standard InfoNCE loss across vision and NLP settings.
For vision, we use the official SimCLR implementation \cite{chen2020simple, spijkervet2020simclr},
follow the dataset-specific SimCLR settings (augmentations, optimizer,
temperature $\tau{=}0.5$, and training schedule), and modify only the contrastive loss.
Most vision experiments use ResNet-18 and ResNet-50 encoders with the standard SimCLR projection head; the Tiny-ImageNet rebuttal sweep also includes a ViT-Small encoder.
For the NLP setup, we evaluate on unsupervised SimCSE~\cite{gao2021simcse} with BERT-base-uncased and cosine similarity ($\tau{=}0.05$), replacing only the logit computation.
In both cases, \textsc{WEINCE} introduces no additional trainable parameters; the mixing weight~$\lambda_i$ and tail index~$\hat\beta_i$ are stop-gradient batch statistics.

We evaluate representation quality using two standard frozen-feature protocols:
(i) downstream linear evaluation and (ii) $k$-nearest-neighbor (kNN) evaluation.
To avoid tuning on the test set, we create a stratified validation split by holding out 20\% of the
labeled training set and using the remaining 80\% to train the evaluator or build the kNN bank. When multiple hyperparameter configurations are compared, we select the configuration
using validation performance only and report the corresponding test-set results.

\plottablelinear

\plottableknn

\subsection{Vision benchmarks}

Results on CIFAR-10/100~\cite{krizhevsky2009learning}, STL-10~\cite{coates2011stl10}, ImageNet-32~\cite{deng2009imagenet, chrabaszcz2017downsampled}, and Tiny-ImageNet are summarized by linear accuracy in \cref{tab:linear} and kNN recall in \cref{tab:knn}.
\textsc{WEINCE} consistently improves over InfoNCE across datasets and backbones, with the largest linear gains on CIFAR-100 ($+3.27$\% / $+4.82$\% for ResNet-18/50).
The Tiny-ImageNet rebuttal results show the same pattern: ResNet-18 gains mainly in kNN retrieval (R@1 $+1.55$, R@2 $+1.35$), while ViT-Small improves both linear accuracy ($+3.41$\%) and kNN recall (R@1 $+4.23$, R@20 $+4.34$).
Complete evaluation details are provided in Appendix~\ref{app:exp_details}.

\subsection{NLP benchmark: cross-domain generality}
\label{sec:simcse}

The theoretical argument for \textsc{WEINCE} applies to any objective that computes a softmax over bounded cosine similarities.
To test this beyond vision, we evaluate on unsupervised SimCSE~\cite{gao2021simcse}, which uses InfoNCE with cosine similarity ($\tau{=}0.05$) over sentence embeddings from BERT-base-uncased.
Dropout serves as the augmentation, making the contrastive structure closely parallel to SimCLR.
We train on 1M Wikipedia sentences and evaluate on the STS Benchmark (Spearman correlation).

\noindent On STS-B, InfoNCE achieves a Spearman score of $71.74 \pm 4.23$ (reported as Spearman $\times 100$), while \textsc{WEINCE} achieves $76.36 \pm 3.17$, a gain of $+4.63$ points ($p<0.05$, paired one-sided $t$-test), with no architecture or training-schedule changes.
This suggests that the bounded-endpoint correction is not vision-specific.

\section{Conclusion}
\label{sec:conclusion}
We interpreted InfoNCE as a top-$1$ Plackett--Luce likelihood and used extreme value theory to expose the tail-geometry assumption behind its softmax link.
For bounded cosine similarities, the hardest negatives often live near a finite endpoint, where Weibull shortfall behavior gives a better local description than a pure translation/Gumbel link.
Our diagnostics show that this mismatch affects both held-out top-$1$ likelihood and gradient allocation.
\textsc{WEINCE} addresses the mismatch with a drop-in logit interpolation whose anchor-wise weight and tail index are estimated online from batch shortfalls, adding no trainable parameters and negligible overhead.
Across five vision benchmarks and SimCSE sentence embeddings, this endpoint-aware correction consistently improves frozen-feature evaluation.

\section*{Impact Statement}
This work aims to improve self-supervised representation learning by better modeling hard negatives in contrastive objectives, potentially reducing label requirements and annotation cost.
Like other improvements to general-purpose representation learning, it could also strengthen downstream systems with harmful or inequitable uses, including surveillance, profiling, misinformation, or biased decision pipelines.
The method does not introduce new data collection or a deployed system, but applications should still be audited for fairness, privacy, robustness, and dataset provenance.
\clearpage
\newpage
\bibliography{references}
\bibliographystyle{icml2026}

\newpage
\onecolumn
\appendix

\section{Proof of the Generalized FTG Theorem} \label{app:ftg_proof}

\subsection{Motivation}

The classical Fisher--Tippett--Gnedenko (FTG) theorem characterizes all possible limiting distributions for the maximum of i.i.d.\ random variables under affine normalization.
In our contrastive learning setting, however, the utilities $\{X_i\}$ associated with different negative samples are not identically distributed as each negative $i$ has a different underlying score $s_i$ that shifts or scales its utility distribution.
To justify using extreme value theory in this heterogeneous setting, we show that a generalized version of the FTG theorem that applies to independent but non-identical distributions.

The key insight is that the classical proof relies on a \emph{max-stability} property of the limit, which can be established under weaker conditions than identical distributions.
Specifically, if the individual contributions to the maximum are \emph{infinitesimal} (no single term dominates) and the score distribution is sufficiently \emph{homogeneous across large blocks}, then the limit must still be a generalized extreme value (GEV) distribution.

\subsection{Statement of the Generalized FTG Theorem}

We first state the main result, then outline its proof structure.

\begin{assumption}[Block homogeneity]
\label{assump:block_hom}
Let $m \in \mathbb{N}$ and let $\pi$ be a uniform random permutation of $[mn]$.
Define the block maximum $M_n^{\pi} = \max_{1 \le i \le n} X_{\pi(i)}$.
Then there exists a non-degenerate CDF $H_m$ such that
\[
\mathbb{P}\!\left(\frac{M_n^{\pi} - b_n}{a_n} \le x \,\middle|\, \boldsymbol{s}, \pi\right)
\xrightarrow[n \to \infty]{\mathbb{P}_{\pi}} H_m(x).
\]
\end{assumption}

\begin{theorem}[Generalized FTG for non-identical utilities]
\label{thm:ftg_noniid}
Let $\{X_i\}_{i \ge 1}$ be independent random variables with continuous CDFs $X_i \sim F_{s_i}$ indexed by deterministic scores $\boldsymbol{s} = \{s_i\}$.
Let $M_n = \max_{1 \le i \le n} X_i$.
Suppose there exist sequences $a_n > 0$, $b_n \in \mathbb{R}$, and a non-degenerate continuous CDF $G$ such that for all $x$,
\[
\mathbb{P}\!\left(\frac{M_n - b_n}{a_n} \le x \,\middle|\, \boldsymbol{s}\right) \to G(x).
\]
Assume further:
\begin{enumerate}
\item[\emph{(I)}] \textbf{Infinitesimality:} No single candidate dominates the maximum:
\[
\max_{1 \le i \le n} \bar{F}_{s_i}(b_n + a_n x) \to 0
\]
for all $x$ with $G(x) \in (0,1)$.

\item[\emph{(H)}] \textbf{Block homogeneity:} Assumption~\ref{assump:block_hom} holds for each fixed $m \ge 2$.
\end{enumerate}

Then $G$ is a generalized extreme value (GEV) distribution: there exists $\xi \in \mathbb{R}$ such that
\[
G(x) = \exp\!\Big(-(1 + \xi x)^{-1/\xi}\Big), \qquad 1 + \xi x > 0,
\]
with the convention $G(x) = \exp(-e^{-x})$ when $\xi = 0$.

Moreover, on any compact set $K \subset \{x : G(x) \in (0,1)\}$, the aggregated tail mass
\[
\bar{F}_n(t) := \frac{1}{n} \sum_{i=1}^{n} \bar{F}_{s_i}(t)
\]
satisfies
\[
\frac{\bar{F}_n(b_n + a_n x)}{\bar{F}_n(b_n)} \to (1 + \xi x)^{-1/\xi}.
\]
\end{theorem}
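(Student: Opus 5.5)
Our route is to reduce to the classical max-stability characterization of GEV laws via a Poissonization step, and then read off the tail-ratio statement from the same Poisson approximation.

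\emph{Step 1: Poissonization.} By independence,
\[
\mathbb{P}\bigl(M_n\le b_n+a_nx\mid\bs\bigr)=\prod_{i=1}^n\bigl(1-\bar F_{s_i}(b_n+a_nx)\bigr).
\]
Infinitesimality (I) lets us use $\log(1-u)=-u(1+O(u))$ uniformly in $i$. Hence, for $x$ with $G(x)\in(0,1)$,
\[
\sum_{i=1}^n \bar F_{s_i}(b_n+a_nx)\;=\;n\bar F_n(b_n+a_nx)\;\to\;-\log G(x)=:\Lambda(x)\in(0,\infty).
\]
This identity, total tail mass converging to $\Lambda$, is the workhorse for both conclusions.

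\emph{Step 2: max-stability.} Fix $m\ge2$ and consider the first $mn$ variables. We split them with a uniform random permutation $\pi$ into $m$ blocks of size $n$. The overall maximum $M_{mn}$ is the maximum of the $m$ block maxima, and these are conditionally independent given $(\bs,\pi)$. By (H), each block's normalized maximum converges in probability over $\pi$ to $H_m$. By exchangeability of the blocks under a uniform $\pi$, every block has the same limit. Therefore
\[
\mathbb{P}\bigl(M_{mn}\le b_n+a_nx\mid\bs\bigr)\to H_m(x)^m.
\]
A dominated-convergence argument averages over $\pi$; the left side does not depend on $\pi$.

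The same Poisson sums identify $H_m$. A single block of a random split of the $mn$ candidates carries, in expectation, a $1/m$ fraction of the total tail mass. Concentration under sampling without replacement, where each term is $o(1)$ by (I), makes this hold in probability. So $H_m=G_m^{1/m}$, where $G_m$ is the limit law of $M_{mn}$ under the $(a_n,b_n)$ normalization.

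Separately, $M_{mn}$ normalized by $(a_{mn},b_{mn})$ converges to $G$. Khinchine's convergence-of-types theorem then gives constants $\alpha_m>0$ and $\beta_m$ with $G_m(x)=G(\alpha_m x+\beta_m)$. We also need the relation between $H_m$ and $G$ itself. The first $n$ indices are one particular block, and (H) says typical blocks behave alike. We therefore expect $H_m=G$, which yields
\[
G(x)^m=G(\alpha_m x+\beta_m)\quad\text{for all } m\ge2.
\]
This is exactly max-stability. The classical theorem (de Haan--Ferreira, Thm.\ 1.1.3) then forces $G$ to be a GEV law with some $\xi$.

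\emph{Step 3: tail ratio.} Step 1 gives
\[
\frac{\bar F_n(b_n+a_nx)}{\bar F_n(b_n)}\to\frac{\Lambda(x)}{\Lambda(0)}.
\]
After the affine normalization $G(0)=e^{-1}$, this limit equals $(1+\xi x)^{-1/\xi}$. Uniformity on compact $K$ follows from monotonicity in $x$ and continuity of the limit, by the Dini/P\'olya argument.

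\emph{Main obstacle.} The delicate point is Step 2's identification of $H_m$ with $G$ and of the block limits with one another. It requires that the deterministic ordering of $\bs$ produce no drift between the first $n$ and a random $n$ out of $mn$ scores. To handle it, I would use the Poisson representation to rewrite each block's limit as the limit of its share of the total tail mass. I would then invoke (H) to say the first block is typical, i.e.\ its tail mass equals $1/m$ of that of the first $mn$ indices. If this cannot be extracted from (H) as stated, I would strengthen (H) slightly to say that the limit does not depend on which $n$-subset is chosen.
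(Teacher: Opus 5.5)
Your architecture is the paper's. The product formula with (I) gives $n\bar F_n(b_n+a_nx)\to-\log G(x)$. A uniformly random split of the first $mn$ indices into $m$ blocks, combined with convergence of types, yields an affine relation. Max-stability then forces GEV, and the tail ratio is read off from the same sums. The only difference is bookkeeping. The paper applies convergence of types to the block maximum $M^{\pi}_{P_1}$: it compares the $(a_n,b_n)$-limit $H_m$ with the $(a_{mn},b_{mn})$-limit $G^{1/m}$, which it obtains from concentration of block tail sums. You apply it to $M_{mn}$ itself, comparing the limit $H_m^m$ from the $m$ conditionally independent blocks with the limit $G$. This makes the tail-sum concentration unnecessary.

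The gap is the one you flag, and it cannot be closed. Both routes need $H_m=G$, but (H) only asserts that \emph{some} non-degenerate $H_m$ exists, and the statement as written is in fact false. Take $0<\alpha_1<\alpha_2$ and $\gamma=\alpha_2/\alpha_1$. Let $X_i$ have CDF $\exp(-x^{-\alpha_1})$ for odd $i$ and $\exp(-i^{\gamma-1}x^{-\alpha_2})$ for even $i$ (on $x>0$), and set $b_n=0$, $a_n=n^{1/\alpha_1}$. Then $\mathbb{P}(M_n\le a_ny)\to G(y)=\exp(-\tfrac12y^{-\alpha_1}-\tfrac{1}{2\gamma}y^{-\alpha_2})$. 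Every individual tail at level $a_ny$ among the first $mn$ indices is $O(1/n)$. This gives (I), and by sampling-without-replacement concentration of the block tail sum it also gives (H), with $H_m(y)=\exp(-\tfrac12y^{-\alpha_1}-\tfrac{m^{\gamma-1}}{2\gamma}y^{-\alpha_2})\neq G$. This $G$ is not GEV, since $-\log G$ is not a power of an affine function; it lies in Mejzler's class of limits for independent non-identical maxima.

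The paper's proof has the same hole: it asserts ``(H) also gives $H_m=G$'' without justification. So an extra hypothesis is needed, and it should be stated minimally. Require $H_m=G$, i.e.\ the first $n$ indices form a typical block of the first $mn$. Given negligibility of the individual tails, this is equivalent to $\sum_{i\le mn}\bar F_{s_i}(b_n+a_nx)\to-m\log G(x)$. With that hypothesis your Step 2 closes at once. Your proposed ``the limit does not depend on which $n$-subset is chosen'', read as every subset, is too strong. It already fails for a two-valued translation model, where the $n$ indices with the larger score carry more tail mass, so it would exclude the very heterogeneity the theorem is meant to allow.

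A smaller point in Step 3: max-stability gives a GEV law only up to location and scale, and $G(0)=e^{-1}$ fixes only the location. To obtain exactly $(1+\xi x)^{-1/\xi}$ you must also absorb the scale into $a_n$; the paper does this silently by ``rescaling coordinates''.
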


\subsection{Proof Structure}

The proof proceeds in four main steps:

\begin{enumerate}
\item \textbf{Convergence of types} (\cref{lem:convergence_of_types}): We establish that if two affine normalizations of the same sequence converge to non-degenerate limits, then the normalizing constants must converge and the limits must be affinely related.
This is used to connect the block maxima limit $H_m$ from condition (H) to the overall limit $G$.

\item \textbf{Permutation block argument} (\cref{lem:perm_maxstab_general}): Under conditions (I) and (H), we show that the limit $G$ of normalized maxima is max-stable.
This is the key probabilistic lemma that bridges the gap between non-identical distributions and the classical theory.

\item \textbf{Max-stability implies GEV} (\cref{lem:maxstable_gev}): Any max-stable distribution must be a GEV distribution.
This functional equation characterization completes the classification.

\item \textbf{Tail ratio characterization} (\cref{lem:convergence:equivalent}): The convergence of normalized maxima to a GEV distribution is equivalent to convergence of the tail ratio to a canonical form.
This provides the connection to our contrastive learning application.
\end{enumerate}

\subsection{Fundamental Results from Extreme Value Theory}

We begin with two classical results from extreme value theory.
These lemmas are fundamental tools whose proofs are well-known; we include them in \cref{subsec:omitted_proofs} for completeness.

\subsubsection{Convergence of Types}

The first result ensures that normalizing constants are essentially unique (up to affine transformation) when a non-degenerate limit exists.
In our proof, this lemma serves a crucial role: it allows us to conclude that the block maxima limit $H_m$ from condition (H) must be affinely related to the overall limit $G$, thereby extracting the precise normalizing constants $\alpha_m$ and $\beta_m$ needed to establish max-stability.

\begin{lemma}[Convergence of types]\label{lem:convergence_of_types}
Let $\{Y_n\}$ be random variables and let $a_n > 0$, $b_n \in \mathbb{R}$, $c_n > 0$, $d_n \in \mathbb{R}$ be sequences such that
\[
\frac{Y_n - b_n}{a_n} \Rightarrow G
\qquad\text{and}\qquad
\frac{Y_n - d_n}{c_n} \Rightarrow H,
\]
where both $G$ and $H$ are non-degenerate CDFs. Then there exist constants $\alpha > 0$ and $\beta \in \mathbb{R}$ such that
\[
\frac{a_n}{c_n} \to \alpha,
\quad
\frac{b_n - d_n}{c_n} \to \beta,
\quad\text{and}\quad
H(x) = G(\alpha x + \beta).
\]
\end{lemma}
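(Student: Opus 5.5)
The plan is the classical Khintchine argument, written in terms of CDFs and quantile functions. Let $G_n$ be the CDF of $W_n := (Y_n-b_n)/a_n$. Then $(Y_n-d_n)/c_n = A_n W_n + B_n$ with $A_n := a_n/c_n$ and $B_n := (b_n-d_n)/c_n$. The task is to show that $A_n$ and $B_n$ converge to some $\alpha>0$ and $\beta$. Once that is done, Slutsky's lemma gives $A_nW_n+B_n \Rightarrow \alpha W+\beta$ with $W\sim G$. Uniqueness of weak limits then identifies $H$ as the law of $\alpha W+\beta$.

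\textbf{Step 1: convergence of quantiles.} Use the generalized inverses $G^{\leftarrow}$ and $H^{\leftarrow}$. Weak convergence $W_n\Rightarrow G$ implies $G_n^{\leftarrow}(u)\to G^{\leftarrow}(u)$ at every continuity point $u\in(0,1)$ of $G^{\leftarrow}$. The quantile function of $A_nW_n+B_n$ is $A_nG_n^{\leftarrow}(u)+B_n$, since $A_n>0$. It therefore converges to $H^{\leftarrow}(u)$ at continuity points of $H^{\leftarrow}$.

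\textbf{Step 2: choose two good quantile levels.} Non-degeneracy of $G$ and $H$ lets us pick $u_1<u_2$ in $(0,1)$ that are continuity points of both quantile functions and satisfy $G^{\leftarrow}(u_1)<G^{\leftarrow}(u_2)$ and $H^{\leftarrow}(u_1)<H^{\leftarrow}(u_2)$. Such points exist because the discontinuities are countable. Moreover, a nondegenerate quantile function is not constant on $(0,1)$, so we can take $u_1$ near $0$ and $u_2$ near $1$.

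\textbf{Step 3: solve for the constants.} Subtracting the two quantile limits gives
\[
A_n\bigl(G_n^{\leftarrow}(u_2)-G_n^{\leftarrow}(u_1)\bigr)\to H^{\leftarrow}(u_2)-H^{\leftarrow}(u_1)>0 .
\]
The bracketed term tends to $G^{\leftarrow}(u_2)-G^{\leftarrow}(u_1)>0$. Hence
\[
A_n\to\alpha := \frac{H^{\leftarrow}(u_2)-H^{\leftarrow}(u_1)}{G^{\leftarrow}(u_2)-G^{\leftarrow}(u_1)} > 0 .
\]
Then $B_n = H^{\leftarrow}_n(u_1) - A_nG_n^{\leftarrow}(u_1)$, where $H^{\leftarrow}_n$ denotes the quantile function of $A_nW_n+B_n$, converges to $\beta := H^{\leftarrow}(u_1)-\alpha G^{\leftarrow}(u_1)$.

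\textbf{Step 4: identify $H$ and reconcile the convention.} Slutsky gives $H(x)=\mathbb{P}(\alpha W+\beta\le x)=G((x-\beta)/\alpha)$. Note that this is $G((x-\beta)/\alpha)$, not $G(\alpha x+\beta)$. The statement's form $H(x)=G(\alpha x+\beta)$ holds after relabeling the constants as $\alpha' = 1/\alpha$ and $\beta' = -\beta/\alpha$. This is exactly the kind of affine relation needed later. I will state explicitly which normalization is meant, either by swapping the roles of the two sequences or by using the reciprocals, so that the ratios $a_n/c_n$ and $(b_n-d_n)/c_n$ match the claimed limits.

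\textbf{Main obstacle.} The main care point is Step 2: ensuring that the chosen quantile levels are continuity points where both quantile functions strictly separate. That is where non-degeneracy of both $G$ and $H$ is used, since without it $A_n$ could tend to $0$ or $\infty$. The final bookkeeping of which direction the affine map goes in Step 4 also needs attention.
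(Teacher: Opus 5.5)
Your argument is correct and is essentially the paper's proof. Both choose two levels that are continuity points of $G^{\leftarrow}$ and $H^{\leftarrow}$ with strict separation. Both subtract the two quantile limits to get $a_n/c_n\to\alpha$, read off the shift from one level, and then identify $H$. You do the last step via Slutsky; the paper computes it directly. Your argument that such levels exist (take $u_1$ near $0$ and $u_2$ near $1$, avoiding countably many discontinuities) is more careful than the paper's bare assertion.

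Your Step 4 observation is also right, and it applies to the paper itself. In the paper's final display the argument is $(c_n/a_n)x+(d_n-b_n)/a_n$, which tends to $(x-\beta)/\alpha$. So its concluding equality $H(x)=G(\alpha x+\beta)$ is a slip.

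Do not promise to reconcile the literal statement through some choice of normalization, though. Once $\alpha=\lim a_n/c_n$ and $\beta=\lim (b_n-d_n)/c_n$ are fixed, $H(x)=G((x-\beta)/\alpha)$ is forced. For example, take $Y_n\equiv W\sim\mathcal{N}(0,1)$, $a_n=1$, $c_n=2$ and $b_n=d_n=0$, and let $\Phi$ be the standard normal CDF. Then $\alpha=1/2$ and $\beta=0$, but $H(x)=\Phi(2x)\neq\Phi(x/2)=G(\alpha x+\beta)$.

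So the statement is false as written. What you prove, and what should be stated, is $H(x)=G((x-\beta)/\alpha)$, equivalently $G(x)=H(\alpha x+\beta)$. That is all the downstream max-stability argument needs, since there only the existence of some affine relation matters.
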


\subsubsection{Max-Stability Implies GEV}

The second fundamental result characterizes max-stable distributions.
A distribution $G$ is \emph{max-stable} if for each integer $m \ge 1$, there exist constants $\alpha_m > 0$ and $\beta_m \in \mathbb{R}$ such that $G(x)^m = G(\alpha_m x + \beta_m)$.
Intuitively, this means the maximum of $m$ i.i.d.\ copies from $G$, when properly normalized, has the same distribution as a single draw.

Once we establish that $G$ is max-stable (via the permutation block argument), this result immediately yields that $G$ must be GEV.

\begin{lemma}[Max-stability implies GEV]\label{lem:maxstable_gev}
Let $G$ be a non-degenerate CDF that is max-stable: for each integer $m \ge 1$, there exist $\alpha_m > 0$ and $\beta_m \in \mathbb{R}$ such that
\[
G(x)^m = G(\alpha_m x + \beta_m).
\]
Then $G$ is a generalized extreme value (GEV) distribution: there exists $\xi \in \mathbb{R}$ such that
\[
G(x) = \exp\!\Big(-(1 + \xi x)^{-1/\xi}\Big), \qquad 1 + \xi x > 0,
\]
with the convention $\lim_{\xi \to 0}(1 + \xi x)^{-1/\xi} = e^{-x}$, giving $G(x) = \exp(-e^{-x})$ when $\xi = 0$.
\end{lemma}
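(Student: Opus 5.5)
The plan is the classical route: extend max-stability from integer to real exponents, encode $G$ through a quantile-type function, and solve the resulting Cauchy-type functional equation. Since max-stability is preserved under affine reparametrization $x\mapsto (x-\mu)/\sigma$, the conclusion is to be read up to type. At the end we will absorb a location and a scale so that $G$ takes the displayed normalized form.

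\textbf{Step 1 (real exponents).} First, for each $n$, write $x=\alpha_n y+\beta_n$ in $G^n(y)=G(\alpha_n y+\beta_n)$. This gives $G^{1/n}(x)=G\big((x-\beta_n)/\alpha_n\big)$. Composing with the $m$-th power then gives, for every rational $t=m/n>0$,
\[
G^{t}(x)=G(\alpha(t)x+\beta(t)), \qquad \alpha(t)>0.
\]
For real $t>0$, take rationals $t_k\to t$. Then $G^{t_k}\to G^{t}$ pointwise, and $G^t$ is non-degenerate. Apply \cref{lem:convergence_of_types} to a random variable $Y$ with law $G^{t}$, using the normalizations $(\alpha(t_k),\beta(t_k))$ together with the trivial one. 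This shows that $\alpha(t_k),\beta(t_k)$ converge, and that the limits $(\alpha(t),\beta(t))$ satisfy the identity for all real $t>0$. The same lemma shows that $\alpha,\beta$ are uniquely determined. Consequently $\alpha(st)=\alpha(s)\alpha(t)$ up to the appropriate orientation, and $\alpha,\beta$ are measurable; indeed they are monotone/continuous in $t$, being limits.

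\textbf{Step 2 (quantile transform).} Define $U(s)=G^{\leftarrow}(e^{-1/s})$ for $s>0$, using the left-continuous inverse. $U$ is nondecreasing and non-constant, because $G$ is non-degenerate. Evaluating the identity at $x=U(s)$ gives $G(\alpha(t)U(s)+\beta(t))=e^{-t/s}$, hence
\[
U(s/t)=\alpha(t)U(s)+\beta(t).
\]
Write $y=1/t$, $a(y)=\alpha(1/y)$, and $V(s)=U(s)-U(1)$. Setting $s=1$ and subtracting yields
\[
V(sy)=a(y)V(s)+V(y).
\]
Exchanging $s$ and $y$ gives $V(s)\,(1-a(y))=V(y)\,(1-a(s))$.

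\textbf{Step 3 (solve the equation).} There are two cases.
\begin{enumerate}
\item If $a\equiv1$, then $V(sy)=V(s)+V(y)$ with $V$ monotone. Hence $V(s)=c\log s$ with $c>0$ (non-degeneracy forces $c\ne0$). Inverting gives $G(x)=\exp(-e^{-(x-U(1))/c})$, which is Gumbel.
\item Otherwise, fix $s_0$ with $a(s_0)\neq1$. Then $V(y)=c\,(a(y)-1)$ for a constant $c\neq0$. Substituting back gives $a(sy)=a(s)a(y)$, and monotonicity gives $a(y)=y^{\xi}$ with $\xi\neq0$. Thus $V(s)=c'(s^\xi-1)/\xi$. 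Inverting on $\{0<G<1\}$ gives $G(x)=\exp\!\big(-(1+\xi (x-\mu)/\sigma)^{-1/\xi}\big)$.
\end{enumerate}
Finally, absorb $(\mu,\sigma)$ to reach the stated form.

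I expect the main obstacle to be Step 1 together with the regularity needed in Step 3. Two points need care. First, the passage from rationals to reals must really produce measurable (indeed monotone) $a$; here I will lean on \cref{lem:convergence_of_types} and the uniqueness of types to get continuity of $t\mapsto\alpha(t)$. Second, inverting $V$ back to $G$ must handle endpoints of the support and flat pieces of $G$. For the latter, I will work on the open interval where $0<G<1$ and use continuity of $G$, which itself follows from the max-stable identity with $\alpha(t)\ne1$ or from the Gumbel translation structure.
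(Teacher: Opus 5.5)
Your proof is correct, but it takes a different route from the paper's. The paper stays in the $x$-domain. It sets $T_t(x)=G^{-1}(G(x)^{1/t})$ and extends affineness from rational to real $t$ ``by continuity of $G$''. It then solves the Cauchy equations $\alpha(st)=\alpha(s)\alpha(t)$ and $\beta(st)=\alpha(s)\beta(t)+\beta(s)$ for the norming functions first, explicitly assuming $\alpha$ continuous. Only after that does it recover $G$ from $\psi(T_t(x))=\psi(x)/t$ with $\psi=-\log G$. That is short, but it also tacitly uses that $G$ is continuous and strictly increasing on its support, and neither of these is a hypothesis of the lemma.

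You instead linearize max-stability for the left-continuous inverse $U$, which exists for every CDF. Your symmetrization $V(s)(1-a(y))=V(y)(1-a(s))$ then collapses everything to one multiplicative Cauchy equation whose regularity comes for free: when $a\not\equiv 1$, $a=1+V/c$ is monotone because $U$ is. So the worry in your last paragraph is unnecessary. Step 1 only needs existence and uniqueness of $\alpha(t),\beta(t)$ for real $t$, which convergence of types gives; it does not need their continuity (and ``being limits'' would not give that anyway). Continuity and strict monotonicity of $G$ then follow from the explicit, continuous, strictly increasing $U$ rather than being assumed.

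Two small repairs are needed.
\begin{itemize}
\item In Step 1, the fixed random variable should have law $G$, not $G^t$. Its normalizations by $(\alpha(t_k),\beta(t_k))$ have laws $G^{t_k}\to G^t$, while the trivial normalization gives $G$. With law $G^t$, the normalized laws $G^t(\alpha(t_k)x+\beta(t_k))$ are not known to converge.
\item In Step 2, evaluating at $x=U(s)$ requires $G(U(s))=e^{-1/s}$ and no flat piece to its left. That would be circular if continuity is only deduced after Step 3. Instead, combine the Galois property $G(y)\ge p \iff y\ge G^{\leftarrow}(p)$ with $G^{t}(x)=G(\alpha(t)x+\beta(t))$. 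This gives $\{y: G(y)\ge e^{-t/s}\}=\{\alpha(t)x+\beta(t): x\ge U(s)\}$, hence $U(s/t)=\alpha(t)U(s)+\beta(t)$, with no continuity or flat-piece issues.
\end{itemize}
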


\subsection{The Permutation Block Argument}

The key technical contribution is establishing max-stability under our generalized conditions.
The idea is to partition the $mn$ variables into $m$ blocks via a random permutation, show that each block maximum behaves like the original maximum (using conditions (I) and (H)), and deduce that the limit must satisfy the max-stability functional equation.

The role of this lemma is to bridge the gap between the non-identical setting and the classical theory.
While the classical FTG proof exploits the product structure $F^n$ directly, we instead use a probabilistic symmetrization argument, where the random permutation ensures that each block is \emph{representative} of the whole, and infinitesimality ensures that tail behavior is sufficiently uniform.

\begin{lemma}[Permutation block argument]\label{lem:perm_maxstab_general}
Under the hypotheses of Theorem~\ref{thm:ftg_noniid}, the limit $G$ is max-stable.
More precisely, for each fixed integer $m \ge 2$, there exist $\alpha_m > 0$ and $\beta_m \in \mathbb{R}$ such that
\[
G(x)^m = G(\alpha_m x + \beta_m).
\]
\end{lemma}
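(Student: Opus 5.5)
The plan is to compare the maximum of all $mn$ variables with the maxima of $m$ random blocks of size $n$. We then use convergence of types (\cref{lem:convergence_of_types}) to extract the constants $\alpha_m,\beta_m$.

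\textbf{Step 1: the full maximum over $mn$ variables.} Write $M_{mn}=\max_{1\le i\le mn}X_i$. By hypothesis, applied along the subsequence $mn$,
\[
\mathbb{P}\!\left(\frac{M_{mn}-b_{mn}}{a_{mn}}\le x \,\middle|\, \boldsymbol{s}\right)\to G(x).
\]

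\textbf{Step 2: block decomposition.} Let $\pi$ be a uniform random permutation of $[mn]$, and split $[mn]$ into the blocks $B_k=\{\pi((k-1)n+1),\dots,\pi(kn)\}$ for $k=1,\dots,m$. Since $M_{mn}=\max_k M^{(k)}$, where $M^{(k)}=\max_{i\in B_k}X_i$, independence gives, conditionally on $(\boldsymbol{s},\pi)$,
\[
\mathbb{P}(M_{mn}\le b_n+a_nx\mid\boldsymbol{s})=\prod_{k=1}^m \mathbb{P}(M^{(k)}\le b_n+a_nx\mid \boldsymbol{s},\pi).
\]
The left side does not depend on $\pi$. Each block is, marginally in $\pi$, a uniform random $n$-subset, so Assumption~\ref{assump:block_hom}, applied to each block by relabeling via a permutation, shows that each factor converges in $\mathbb{P}_\pi$-probability to $H_m(x)$. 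Taking a subsequence along which all $m$ factors converge almost surely, the product tends to $H_m(x)^m$. Hence $(M_{mn}-b_n)/a_n\Rightarrow H_m^m$ (conditionally on $\boldsymbol{s}$). We will use infinitesimality (I) here, via $-\log\mathbb{P}(M^{(k)}\le t)\approx\sum_{i\in B_k}\bar F_{s_i}(t)$, to justify working with log-products and to guarantee that $H_m$ is non-degenerate and continuous, so that pointwise convergence passes to weak convergence.

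\textbf{Step 3: relating $H_m$ to $G$.} We also need to connect $H_m$ with $G$ itself. We take the $m=1$ analogue, noting that a random $n$-block of the first $mn$ indices should have the same normalized limit as the first $n$ indices under (H). The cleanest route is to take $H_m=G$, identifying the block limit with $G$ via convergence of types applied to $M_n$. This identification is the step I expect to be the main obstacle, since Assumption~\ref{assump:block_hom} as stated only gives some $H_m$. My first attempt is to argue that $H_m$ and $G$ are both limits of maxima of $n$ terms drawn from the same score population, and to use infinitesimality to show that the sums $\sum_{i\in B_k}\bar F_{s_i}(b_n+a_nx)$ concentrate around $\tfrac1m\sum_{i\le mn}\bar F_{s_i}$. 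This yields $H_m^m=G_{\text{full}}$ directly, where $G_{\text{full}}$ is the limit of $\mathbb{P}(M_{mn}\le b_n+a_nx)$. The remaining fallback is to accept $H_m$ as affinely equivalent to $G$ by convergence of types.

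\textbf{Step 4: max-stability.} We now have two normalizations of the same variable $M_{mn}$ with non-degenerate limits:
\[
\frac{M_{mn}-b_{mn}}{a_{mn}}\Rightarrow G,\qquad \frac{M_{mn}-b_n}{a_n}\Rightarrow H_m^m.
\]
By \cref{lem:convergence_of_types}, $a_{mn}/a_n\to\alpha_m>0$, $(b_{mn}-b_n)/a_n\to\beta_m$, and $H_m(x)^m=G(\alpha_m x+\beta_m)$. With $H_m=G$ (or after absorbing the affine relation between $H_m$ and $G$ into the constants, which is legitimate because affine maps compose), this gives $G(x)^m=G(\alpha_m x+\beta_m)$, which is max-stability for every $m\ge2$.
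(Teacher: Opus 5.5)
\textbf{There is a genuine gap, at your Step 3.}

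Your Steps 2 and 4 are sound, and they use convergence of types differently from the paper. You apply Lemma~\ref{lem:convergence_of_types} to the full maximum $M_{mn}$, and you get its $(a_n,b_n)$-limit $H_m^m$ from the exact factorization $\mathbb{P}(M_{mn}\le t)=\prod_{k}\mathbb{P}(M^{(k)}\le t\mid\pi)$. The paper instead applies the lemma to the block maximum $M^{\pi}_{P_1}$, and gets its $(a_{mn},b_{mn})$-limit $G^{1/m}$ from concentration of the block tail sums; that is where (I) and the hypergeometric variance bound enter. Your route needs neither (I) nor any log-product approximation: the factorization is exact, and non-degeneracy of $H_m$ is already part of (H). So those remarks in Step 2 can be dropped. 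However, both routes end at the same relation, $H_m(x)^m=G(\alpha x+\beta)$. Max-stability follows only after $H_m$ is identified with $G$ up to an affine change, and that is your Step 3.

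None of your three justifications for Step 3 works.
\begin{itemize}
\item Convergence of types compares two normalizations of the \emph{same} random variable. The maximum over a random $n$-subset of $[mn]$ is not $M_n=\max_{i\le n}X_i$, so the lemma relates nothing here.
\item The concentration argument only reproduces $H_m^m=\lim_n\mathbb{P}(M_{mn}\le b_n+a_nx)$, which is Step 2 again. It says nothing about the first $n$ indices.
\item Your fallback, ``accept $H_m$ as affinely equivalent to $G$ by convergence of types,'' has the same defect as the first point.
\end{itemize}

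In fact the identification cannot be derived from Assumption~\ref{assump:block_hom} as literally stated. Take $X_i=i+Z_i$ with $Z_i$ i.i.d.\ and $\mathbb{P}(Z_i>z)=1/z$ for $z\ge 1$, and take $a_n=b_n=n$. Then:
\begin{itemize}
\item the limit is $G(x)=x/(1+x)$ for $x>0$, which is not max-stable;
\item (I) holds, since $\max_{i\le n}\bar F_{s_i}(n(1+x))=1/(nx)\to0$;
\item (H) holds with $H_m(x)=\bigl(1-\frac{m}{1+x}\bigr)_+^{1/m}$.
\end{itemize}
This $H_m$ is of the type of $G^{1/m}$ but not of $G$: it vanishes like a $1/m$ power at its left endpoint, while $G$ vanishes linearly. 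So ``$H_m=G$'' (or at least ``same type'') has to be an assumption, not a lemma. That is exactly how the paper closes this step: it reads (H) as saying that block maxima have the same normalized limit $G$ as the full maximum, which is natural when candidates arrive in random order. State that explicitly as the hypothesis you use and drop the attempted derivations. Your Steps 2 and 4 then give $G(x)^m=G(\alpha_m x+\beta_m)$ immediately.
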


\begin{proof}
Fix $m \ge 2$ and $x$ with $G(x) \in (0,1)$.
Let $\pi$ be a uniform random permutation of $[mn]$, and partition $[mn]$ into $m$ blocks $P_k := \{(k-1)n + 1, \ldots, kn\}$ for $k = 1, \ldots, m$.
Define block maxima $M_{P_k}^\pi := \max_{i \in P_k} X_{\pi(i)}$.

\medskip
\noindent\textbf{Step 1: Tail sum concentration.}

Let $u := a_{mn} x + b_{mn}$ and define $p_i := \bar{F}_{s_i}(u)$ and $S := \sum_{i=1}^{mn} p_i$.
By the assumed convergence $\mathbb{P}(M_{mn} \le u) \to G(x)$ and the product formula for independent maxima:
\[
\prod_{i=1}^{mn} (1 - p_i) \to G(x) \in (0,1).
\]
Taking logarithms and using $\log(1-p) = -p + O(p^2)$ for small $p$:
\[
-S + O\!\left(\sum_i p_i^2\right) \to \log G(x) =: -\lambda,
\]
where $\lambda = -\log G(x) \in (0, \infty)$.

By condition (I), $p_{\max} := \max_i p_i \to 0$, so $\sum_i p_i^2 \le p_{\max} \cdot S \to 0$.
Thus $S \to \lambda$.

\medskip
\noindent\textbf{Step 2: Block tail sums under random permutation.}

For block $P_k$, define the random block tail sum $S_k^\pi := \sum_{i \in P_k} p_{\pi(i)}$.
Note that $\sum_{k=1}^m S_k^\pi = S$.

Under the uniform random permutation, $\mathbb{E}[S_k^\pi] = S/m$.
Using the hypergeometric sampling formula for the variance:
\[
\mathrm{Var}(S_k^\pi) \le \frac{1}{m} \sum_{i=1}^{mn} p_i^2 \le \frac{p_{\max} \cdot S}{m} \to 0.
\]
By Chebyshev's inequality, $S_k^\pi \xrightarrow{\mathbb{P}} \lambda/m$ for each $k$.

\medskip
\noindent\textbf{Step 3: Block maxima CDFs.}

The block maximum CDF is:
\[
\mathbb{P}(M_{P_k}^\pi \le u \mid \pi) = \prod_{i \in P_k} (1 - p_{\pi(i)}).
\]
Taking logarithms:
\[
\log \mathbb{P}(M_{P_k}^\pi \le u \mid \pi) = -S_k^\pi + O(n \cdot p_{\max}^2).
\]
Since $S_k^\pi \xrightarrow{\mathbb{P}} \lambda/m$ and $n \cdot p_{\max}^2 \le p_{\max} \cdot S \to 0$:
\[
\mathbb{P}(M_{P_k}^\pi \le u \mid \pi) \xrightarrow{\mathbb{P}} e^{-\lambda/m} = G(x)^{1/m}.
\]

\medskip
\noindent\textbf{Step 4: Applying convergence of types.}

By condition (H), the normalized block maximum satisfies:
\[
\mathbb{P}\!\left(\frac{M_{P_1}^\pi - b_n}{a_n} \le y \,\middle|\, \pi\right) \xrightarrow{\mathbb{P}} H_m(y)
\]
for some non-degenerate $H_m$.

From Step 3, we also have (writing $u = a_n y_n + b_n$ for appropriate $y_n$):
\[
\mathbb{P}(M_{P_1}^\pi \le a_{mn} x + b_{mn} \mid \pi) \xrightarrow{\mathbb{P}} G(x)^{1/m}.
\]

We now apply Lemma~\ref{lem:convergence_of_types}.
Consider the random variable $Y_n := M_{P_1}^\pi$ (viewed marginally over $\pi$).
We have two normalizations converging to non-degenerate limits:
\begin{itemize}
\item Using $(a_n, b_n)$: converges to $H_m$ by condition (H).
\item Using $(a_{mn}, b_{mn})$: converges to $G^{1/m}$ (a non-degenerate distribution) by Step 3.
\end{itemize}
By Lemma~\ref{lem:convergence_of_types}, there exist $\alpha_m > 0$ and $\beta_m \in \mathbb{R}$ such that:
\[
\frac{a_{mn}}{a_n} \to \alpha_m, \qquad \frac{b_{mn} - b_n}{a_n} \to \beta_m,
\]
and $H_m(y) = G^{1/m}(\alpha_m y + \beta_m)$.

Since condition (H) also gives $H_m = G$ (block maxima converge to the same limit as the full maximum when properly normalized), we obtain:
\[
G(y) = G^{1/m}(\alpha_m y + \beta_m).
\]
Substituting $y = \alpha_m x + \beta_m$ and rearranging:
\[
G(x)^m = G(\alpha_m x + \beta_m),
\]
which is max-stability.
\end{proof}

\subsection{Tail Ratio Characterization}

The following lemma connects the distributional convergence to convergence of tail ratios, which is the form most useful for our contrastive learning application.
Its role is to translate the GEV classification into a statement about how tail probabilities scale, enabling us to characterize the asymptotic behavior of the softmax denominators that appear in InfoNCE.

\begin{lemma}[Tail ratio characterization] \label{lem:convergence:equivalent}
Let $F$ be a continuous CDF with tail $\bar{F} = 1 - F$.
Suppose there exist $b_n \in \mathbb{R}$, $a_n > 0$, and a non-degenerate continuous CDF $G$ such that for all continuity points $x$ of $G$,
\[
F(b_n + a_n x)^n \to G(x).
\]
Define $\psi(x) := -\log G(x)$ and assume $\psi(0) \in (0, \infty)$.
Then for every compact $K$ contained in $\{x : G(x) \in (0,1)\}$,
\[
\sup_{x \in K} \left| \frac{\bar{F}(b_n + a_n x)}{\bar{F}(b_n)} - \frac{\psi(x)}{\psi(0)} \right| \to 0.
\]

Moreover, if $F$ admits a density $f$ and the convergence above is locally uniform with $\bar{\nu}(x) := \psi(x)/\psi(0)$ absolutely continuous, then
\[
\sup_{x \in K} \left| \frac{a_n f(b_n + a_n x)}{\bar{F}(b_n)} - v(x) \right| \to 0,
\]
where $v(x) := -\bar{\nu}'(x)$ a.e.
\end{lemma}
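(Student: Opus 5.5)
The approach is to pass from the distributional convergence $F(b_n+a_nx)^n\to G(x)$ to a statement about $n\bar F(b_n+a_nx)$, and then to take ratios. Fix a compact $K\subset\{x:G(x)\in(0,1)\}$.

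\textbf{Step 1: pointwise limit of $n\bar F(b_n+a_nx)$.} For each $x$ with $G(x)\in(0,1)$, taking logarithms gives $n\log(1-\bar F(b_n+a_nx))\to\log G(x)=-\psi(x)\in(-\infty,0)$. This forces $\bar F(b_n+a_nx)\to0$: otherwise, along a subsequence with $\bar F\ge c>0$, the left side would be at most $-nc\to-\infty$. Once $\bar F\to0$, the expansion $\log(1-p)=-p(1+O(p))$ yields
\[
n\bar F(b_n+a_nx)\to\psi(x).
\]
In particular this holds at $x=0$. The hypothesis $\psi(0)\in(0,\infty)$ means $G(0)\in(0,1)$, so $n\bar F(b_n)\to\psi(0)>0$. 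Dividing the two limits gives pointwise convergence of the ratio $\bar F(b_n+a_nx)/\bar F(b_n)$ to $\psi(x)/\psi(0)$.

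\textbf{Step 2: upgrade to uniform convergence on $K$.} This is the main obstacle, and I would handle it with a monotonicity (Pólya/Dini-type) argument.
\begin{itemize}
\item Each function $x\mapsto n\bar F(b_n+a_nx)$ is nonincreasing in $x$.
\item The limit $\psi$ is continuous and finite on the open set $\{G\in(0,1)\}$, because $G$ is continuous.
\item Since $K$ is compact inside that open set, it lies in a finite union of compact intervals inside it. On each such interval, pick a fine grid $x_0<\dots<x_J$ with $\psi(x_{j})-\psi(x_{j+1})<\varepsilon$.
\item For $x\in[x_j,x_{j+1}]$, monotonicity sandwiches $n\bar F(b_n+a_nx)$ between its values at $x_{j+1}$ and $x_j$. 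These converge to values within $\varepsilon$ of $\psi(x)$.
\end{itemize}
This gives $\sup_K|n\bar F(b_n+a_nx)-\psi(x)|\to0$. Dividing by $n\bar F(b_n)\to\psi(0)>0$, and using that $\psi$ is bounded on $K$, gives the first claim.

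\textbf{Step 3: the density statement.} Set $\nu_n(x):=\bar F(b_n+a_nx)/\bar F(b_n)$. Then $\nu_n'(x)=-a_nf(b_n+a_nx)/\bar F(b_n)$ a.e. The assumed locally uniform convergence of $\nu_n$ to the absolutely continuous $\bar\nu$ does not by itself control derivatives, so I would read the hypothesis "convergence locally uniform" as applying to the densities. Under that reading, I would proceed as follows.
\begin{itemize}
\item Write $\nu_n(x)-\nu_n(y)=\int_x^y a_nf(b_n+a_nt)/\bar F(b_n)\,dt$.
\item Pass to the limit to identify any uniform limit of the density ratios with $-\bar\nu'=v$ a.e.
\item Conclude uniform convergence on $K$ by uniqueness of this limit.
\end{itemize}
An alternative is to assume that $f$ is eventually monotone near the right endpoint. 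In that case, the monotone convergence argument of Step 2 applied to difference quotients, together with continuity of $v$, gives the uniform statement directly. I expect this second claim to be the delicate part, since derivatives of uniformly convergent functions need not converge without such an extra regularity input.
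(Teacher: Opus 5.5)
Your Steps 1--2 follow the paper's proof of the first claim. You take logarithms, use $\log(1-y)=-y+o(y)$ to get $n\bar F(b_n+a_nx)\to\psi(x)$, and divide by the value at $x=0$. Your Step 2 is more careful than the paper. The paper says only that the uniform version ``follows by uniformity of the remainder in $\log(1-y)$ and $\sup_K\bar F(b_n+a_nx)\to 0$.'' That converts uniform convergence of $n\log F(b_n+a_nx)$ into uniform convergence of $n\bar F(b_n+a_nx)$. It does not explain why the hypothesized pointwise convergence $F(b_n+a_nx)^n\to G(x)$ is uniform on $K$. The missing ingredient is monotonicity in $x$ plus continuity of $G$ (a P\'olya-type argument), and your grid sandwich supplies exactly that.

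For the density statement, the paper's entire argument is that it ``follows by differentiating the tail ratio along the scale $a_n$.'' That is precisely the interchange of limit and derivative you decline to make, and you are right to decline. As literally stated, the claim is false, so any proof needs an extra hypothesis. Here is a counterexample:
\begin{itemize}
\item For $x\ge 1$ take $\bar F(x)=\exp\bigl(-x+h(x)\bigr)$ with $h(x)=\sin(x^2)/(4x)$, and extend below $1$ to any continuous survival function with a density.
\item Then $h'(x)=\tfrac12\cos(x^2)-\sin(x^2)/(4x^2)$, so $f=(1-h')\bar F>0$.
\item With $a_n=1$ and $b_n=\log n$, we get $n\bar F(b_n+x)=e^{-x}e^{h(b_n+x)}\to e^{-x}$ uniformly on compacts. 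So $G$ is Gumbel, $\bar\nu(x)=e^{-x}$ is smooth, and $v(x)=e^{-x}$.
\item Nevertheless
\[
\frac{f(b_n+x)}{\bar F(b_n)}=\Bigl(1-\tfrac12\cos\bigl((\log n+x)^2\bigr)+o(1)\Bigr)e^{-x}.
\]
\item Since $(\log n)^2\to\infty$ with increments tending to $0$, this has $\liminf$ equal to $\tfrac12 e^{-x}$ and $\limsup$ equal to $\tfrac32 e^{-x}$. It does not converge to $v(x)$.
\end{itemize}

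Both of your repairs are sound.
\begin{itemize}
\item Reading the hypothesis as locally uniform convergence of the density ratios reduces the claim to identifying the limit. Your integral identity for $\nu_n(x)-\nu_n(y)$ shows that limit equals $v$ a.e.
\item Eventual monotonicity of $f$ makes $\nu_n$ convex or concave on $K$ for large $n$. The difference-quotient sandwich, plus continuity of $v$, then gives the uniform statement. This is the classical monotone density theorem.
\end{itemize}
The downstream tail-mass theorem takes density-ratio convergence directly as assumption (A2), so this strengthening costs the paper nothing.
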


\begin{proof}
Fix $x$ with $G(x) \in (0,1)$.
From $F(b_n + a_n x)^n \to G(x)$, we have $F(b_n + a_n x) \to 1$ and $\bar{F}(b_n + a_n x) \to 0$.
Taking logs:
\[
n \log F(b_n + a_n x) \to \log G(x) = -\psi(x).
\]
Since $\log(1 - y) = -y + o(y)$ as $y \downarrow 0$:
\[
n \log(1 - \bar{F}(b_n + a_n x)) = -n \bar{F}(b_n + a_n x)(1 + o(1)).
\]
Thus $n \bar{F}(b_n + a_n x) \to \psi(x)$.
In particular, $n \bar{F}(b_n) \to \psi(0) \in (0, \infty)$, and dividing:
\[
\frac{\bar{F}(b_n + a_n x)}{\bar{F}(b_n)} \to \frac{\psi(x)}{\psi(0)}.
\]
The locally uniform version follows by uniformity of the remainder in $\log(1-y)$ and the fact that $\sup_{x \in K} \bar{F}(b_n + a_n x) \to 0$ for compact $K \subset \{G \in (0,1)\}$.

If $\bar{\nu}$ is absolutely continuous and convergence is locally uniform, the density ratio follows by differentiating the tail ratio along the scale $a_n$.
\end{proof}

\subsection{Completing the Proof of Theorem~\ref{thm:ftg_noniid}}

We now put the pieces together to complete the proof of the generalized FTG theorem.

\begin{proof}[Proof of Theorem~\ref{thm:ftg_noniid}]
By Lemma~\ref{lem:perm_maxstab_general}, conditions (I) and (H) imply that the limit $G$ is max-stable.
By Lemma~\ref{lem:maxstable_gev}, any max-stable distribution is a GEV distribution, establishing the first claim.

For the tail ratio characterization, we apply Lemma~\ref{lem:convergence:equivalent} to the aggregated distribution.
Define the effective CDF
\[
\bar{F}_n(t) := \frac{1}{n} \sum_{i=1}^{n} \bar{F}_{s_i}(t).
\]
The convergence $\mathbb{P}(M_n \le b_n + a_n x) \to G(x)$ implies (by the same logarithmic argument as in the proof of Lemma~\ref{lem:perm_maxstab_general}) that $n \bar{F}_n(b_n + a_n x) \to \psi(x) = -\log G(x)$.
Dividing by the value at $x = 0$:
\[
\frac{\bar{F}_n(b_n + a_n x)}{\bar{F}_n(b_n)} \to \frac{\psi(x)}{\psi(0)} = (1 + \xi x)^{-1/\xi},
\]
where the last equality uses that $G$ is GEV with parameter $\xi$.
\end{proof}

\subsection{The Classical FTG Theorem as a Corollary}

The classical theorem follows as a special case when all distributions are identical.

\begin{corollary}[Classical FTG for i.i.d.\ maxima]\label{cor:classical_ftg}
Let $X_1, X_2, \ldots$ be i.i.d.\ with common continuous CDF $F$, and let $M_n = \max_{1 \le i \le n} X_i$.
If there exist sequences $a_n > 0$, $b_n \in \mathbb{R}$, and a non-degenerate CDF $G$ such that
\[
\mathbb{P}\!\left(\frac{M_n - b_n}{a_n} \le x\right) \to G(x)
\]
at all continuity points of $G$, then $G$ is a GEV distribution.

Conversely, each $\xi \in \mathbb{R}$ arises as the limit for some $F$ in the corresponding domain of attraction.
\end{corollary}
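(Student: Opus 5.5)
The first claim follows from \cref{thm:ftg_noniid} applied with all $s_i$ equal, so that $F_{s_i}=F$ for every $i$. I will check that hypotheses (I) and (H) hold automatically in the i.i.d.\ case and then read off the GEV conclusion.

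Condition (I) holds because $\max_i \bar F_{s_i}(b_n+a_nx)=\bar F(b_n+a_nx)$. From $F(b_n+a_nx)^n\to G(x)\in(0,1)$ we get $n\log F(b_n+a_nx)\to\log G(x)$, which forces $F(b_n+a_nx)\to 1$, i.e.\ $\bar F(b_n+a_nx)\to 0$.

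Condition (H) holds because, under identical laws, a block of $n$ indices selected by any permutation $\pi$ of $[mn]$ has conditional CDF exactly $F(b_n+a_ny)^n$. This does not depend on $\pi$ and converges to $G(y)$ by hypothesis. So (H) holds with $H_m=G$, which is exactly the identification used in Step~4 of \cref{lem:perm_maxstab_general}.

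The theorem is stated for continuity points only, while \cref{thm:ftg_noniid} assumes a continuous $G$ and convergence for all $x$. I would close this gap as follows.
\begin{itemize}
\item Run the argument of \cref{lem:perm_maxstab_general} directly in the i.i.d.\ setting, without invoking (H). Write $F^{mn}(a_{mn}x+b_{mn})\to G(x)$ at continuity points, so that $F^n(a_{mn}x+b_{mn})\to G(x)^{1/m}$.
\item Apply \cref{lem:convergence_of_types} to $M_n$, using the two normalizations $(a_n,b_n)$ and $(a_{mn},b_{mn})$. This yields $G(x)^{1/m}=G(\alpha x+\beta)$ for some $\alpha>0$, $\beta\in\R$, which after reparametrization is the max-stability relation $G^m(x)=G(\alpha_m x+\beta_m)$.
\item Conclude with \cref{lem:maxstable_gev}. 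Since every GEV law is continuous, continuity of $G$ is recovered a posteriori.
\end{itemize}

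For the converse, I exhibit, for each $\xi$, a CDF attracted to $G_\xi$.
\begin{itemize}
\item For $\xi=0$, take $F(x)=\exp(-e^{-x})$ with $a_n=1$, $b_n=\log n$. Then $F^n(x+\log n)=\exp(-ne^{-x-\log n})=G_0(x)$ exactly.
\item For $\xi\neq 0$, take $F=G_\xi$ itself. Max-stability gives $G_\xi^n(a_nx+b_n)=G_\xi(x)$ with $a_n=n^{\xi}$ and $b_n=(n^\xi-1)/\xi$. This is verified from $n\,(1+\xi(a_nx+b_n))^{-1/\xi}=n\,(n^\xi(1+\xi x))^{-1/\xi}=(1+\xi x)^{-1/\xi}$.
\end{itemize}
Each $G_\xi$ is therefore in its own domain of attraction, with exact rather than merely asymptotic equality.

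The main obstacle is the mismatch between the continuity-point formulation here and the continuity assumption in \cref{thm:ftg_noniid}. The route above avoids it by proving max-stability directly from \cref{lem:convergence_of_types}, which needs only weak convergence. The remaining steps are routine verifications.
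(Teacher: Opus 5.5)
Your proposal is correct. The converse is the same as the paper's: take $F=G_\xi$ with the same normalizations. Your checks of (I) and (H) also match the paper's, and your direct justification of $H_m=G$ is cleaner than the paper's appeal to \cref{lem:convergence_of_types}.

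The real difference is in the forward direction. The paper declares the result a special case of \cref{thm:ftg_noniid} and stops after checking (I) and (H). You notice that the corollary's hypotheses do not meet the theorem's. The corollary assumes convergence only at continuity points of a merely non-degenerate $G$, while the theorem requires a continuous $G$ and convergence at every $x$. The paper's reduction passes over this mismatch silently. You avoid it by writing out the i.i.d. specialization of the theorem's proof. Since $F^n(a_{mn}x+b_{mn})=[F^{mn}(a_{mn}x+b_{mn})]^{1/m}\to G(x)^{1/m}$, \cref{lem:convergence_of_types} applied to $M_n$ with normalizations $(a_n,b_n)$ and $(a_{mn},b_{mn})$ gives max-stability immediately. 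Steps~1--3 in the proof of \cref{lem:perm_maxstab_general} (tail-sum concentration and the permutation variance bound) are then not needed. This is the classical textbook argument. It is shorter and rigorous under the corollary's weaker hypotheses. The paper's route buys only the expository point that the classical theorem is formally a special case of the generalized one.

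One caveat: the continuity issue is moved, not removed, if you rely on the paper's proof of \cref{lem:maxstable_gev}. Its Step~2 uses continuity of $G$ (to define $T_t=G^{-1}(G^{1/t})$ and pass from rational to real $t$), and Step~1 uses strict monotonicity. Your argument is complete if that lemma is taken in the generality in which it is stated. That is the standard fact that every non-degenerate max-stable law is GEV, which holds without assuming continuity.
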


\begin{proof}
\textbf{Forward direction:}
This is the special case of Theorem~\ref{thm:ftg_noniid} where all $F_{s_i} = F$.
\begin{itemize}
\item Condition (I) holds: $\max_i \bar{F}(u) = \bar{F}(u) \to 0$ for any $u$ in the interior of $G$'s support, since convergence of $F(u)^n$ to $G(x) \in (0,1)$ requires $F(u) \to 1$.
\item Condition (H) holds: Under random permutation, each block of $n$ elements is still an i.i.d.\ sample from $F$, so block maxima have exactly the same distribution as the original $M_n$.
The required non-degenerate limit $H_m$ exists and equals $G$ by Lemma~\ref{lem:convergence_of_types} applied to $\{M_n\}$ and $\{M_{mn}\}$.
\end{itemize}

\textbf{Converse direction:}
For each $\xi \in \mathbb{R}$, take $F = G_\xi$ (the GEV distribution itself).
Then with appropriate normalizing sequences:
\begin{itemize}
\item If $\xi = 0$: Take $a_n = 1$ and $b_n = \log n$. Then $F(x + \log n)^n = \exp(-e^{-x}) = F(x)$.
\item If $\xi \neq 0$: Take $a_n = n^\xi$ and $b_n = (n^\xi - 1)/\xi$. Then $F(a_n x + b_n)^n = F(x)$.
\end{itemize}
In both cases, convergence holds trivially.
\end{proof}

\subsection{Omitted Proofs} \label{subsec:omitted_proofs}

For completeness, we include proofs of the two fundamental lemmas from extreme value theory.

\begin{proof}[Proof of Lemma~\ref{lem:convergence_of_types}]
For a random variable $Z$, define its quantile function $Q_Z(p) := \inf\{x : \mathbb{P}(Z \le x) \ge p\}$ for $p \in (0,1)$.
Since $G$ and $H$ are non-degenerate, we can choose $0 < p < q < 1$ such that $Q_G(p) < Q_G(q)$ and $Q_H(p) < Q_H(q)$, with $Q_G$ and $Q_H$ continuous at both $p$ and $q$.

By weak convergence, the quantiles of the normalized sequences converge:
\[
\frac{Q_{Y_n}(r) - b_n}{a_n} \to Q_G(r),
\qquad
\frac{Q_{Y_n}(r) - d_n}{c_n} \to Q_H(r),
\]
for $r \in \{p, q\}$.

From the first convergence, $Q_{Y_n}(r) = a_n Q_G(r) + b_n + o(a_n)$.
Substituting into the second:
\[
\frac{a_n Q_G(r) + b_n - d_n + o(a_n)}{c_n} \to Q_H(r).
\]
Taking the difference between $r = q$ and $r = p$:
\[
\frac{a_n}{c_n} \bigl(Q_G(q) - Q_G(p)\bigr) + o(a_n/c_n) \to Q_H(q) - Q_H(p).
\]
Since $Q_G(q) - Q_G(p) > 0$ and $Q_H(q) - Q_H(p) > 0$, we obtain
\[
\frac{a_n}{c_n} \to \alpha := \frac{Q_H(q) - Q_H(p)}{Q_G(q) - Q_G(p)} > 0.
\]
Then from the $r = p$ equation:
\[
\frac{b_n - d_n}{c_n} \to Q_H(p) - \alpha Q_G(p) =: \beta.
\]
Finally, for any continuity point $x$ of $G$:
\begin{align*}
H(x) &= \lim_n \mathbb{P}\!\left(\frac{Y_n - d_n}{c_n} \le x\right) \\
&= \lim_n \mathbb{P}\!\left(\frac{Y_n - b_n}{a_n} \le \frac{c_n}{a_n} x + \frac{d_n - b_n}{a_n}\right) \\
&= G(\alpha x + \beta). \qedhere
\end{align*}
\end{proof}

\begin{proof}[Proof of Lemma~\ref{lem:maxstable_gev}]
\noindent\textbf{Step 1: Functional equations for $\alpha_m$ and $\beta_m$.}

From max-stability, $G(x)^{mn} = G(\alpha_{mn} x + \beta_{mn})$.
Applying max-stability twice:
\begin{align*}
G(x)^{mn} &= \bigl(G(x)^m\bigr)^n \\ 
&= G(\alpha_m x + \beta_m)^n \\
&= G(\alpha_n(\alpha_m x + \beta_m) + \beta_n).
\end{align*}
Comparing, and using that $G$ is strictly increasing on its support:
\begin{align}
\alpha_{mn} &= \alpha_m \alpha_n, \label{eq:alpha_mult} \\
\beta_{mn} &= \alpha_n \beta_m + \beta_n. \label{eq:beta_cocycle}
\end{align}

\noindent\textbf{Step 2: Extension to continuous parameter.}

Define $T_t(x) := G^{-1}(G(x)^{1/t})$ for $t > 0$.
Then $T_t$ satisfies the semigroup property $T_{st} = T_s \circ T_t$, and for integer $m$, $T_m(x) = \alpha_m x + \beta_m$ is affine.
Since $T_{1/m} = T_m^{-1}$ is also affine, $T_q$ is affine for all positive rationals.
By continuity of $G$, $T_t$ is affine for all $t > 0$:
\[
T_t(x) = \alpha(t) x + \beta(t),
\]
where $\alpha(t)$ and $\beta(t)$ satisfy the continuous versions of \eqref{eq:alpha_mult}--\eqref{eq:beta_cocycle}:
\begin{align}
\alpha(st) &= \alpha(s)\alpha(t), \label{eq:alpha_cont} \\
\beta(st) &= \alpha(s)\beta(t) + \beta(s). \label{eq:beta_cont}
\end{align}

\noindent\textbf{Step 3: Solving for $\alpha(t)$ and $\beta(t)$.}

From \eqref{eq:alpha_cont} with $\alpha$ continuous and positive: $\alpha(t) = t^\xi$ for some $\xi \in \mathbb{R}$.

Substituting into \eqref{eq:beta_cont}: $\beta(st) = s^\xi \beta(t) + \beta(s)$.

\textit{Case $\xi = 0$:} The equation becomes $\beta(st) = \beta(s) + \beta(t)$, giving $\beta(t) = c \log t$ for some $c \in \mathbb{R}$.

\textit{Case $\xi \neq 0$:} Setting $\gamma(t) = \beta(t)/t^\xi$, the solution is $\beta(t) = c(t^\xi - 1)/\xi$ for some $c \in \mathbb{R}$.

By rescaling coordinates, we may take $c = 1$.

\noindent\textbf{Step 4: Solving for $G$.}

Define $\psi(x) := -\log G(x)$ where $G(x) \in (0,1)$.
Max-stability $G(x) = G(T_t(x))^t$ becomes $\psi(T_t(x)) = \psi(x)/t$.

\textit{Case $\xi = 0$:} We have $T_t(x) = x + \log t$.
Setting $t = e^y$: $\psi(x + y) = e^{-y} \psi(x)$.
With $x = 0$: $\psi(y) = \psi(0) e^{-y}$.
Normalizing so $\psi(0) = 1$:
\[
\psi(x) = e^{-x} \implies G(x) = \exp(-e^{-x}).
\]

\textit{Case $\xi \neq 0$:} We have $T_t(x) = t^\xi x + (t^\xi - 1)/\xi$.
Let $z := 1 + \xi x$ and $u(z) := \psi((z-1)/\xi)$.
Then $1 + \xi T_t(x) = t^\xi z$, so $u(t^\xi z) = u(z)/t$.
Setting $c = t^\xi$: $u(cz) = c^{-1/\xi} u(z)$.
With $z = 1$: $u(c) = u(1) c^{-1/\xi}$.
Thus $u(z) = u(1) z^{-1/\xi}$, and normalizing:
\[
\psi(x) = (1 + \xi x)^{-1/\xi} \implies G(x) = \exp\!\left(-(1 + \xi x)^{-1/\xi}\right). \qedhere
\]
\end{proof}

\section{Asymptotics of Top-1 Probability} 
We start by giving the assumptions needed for  \cref{lem:mass_prop}. Then, we restate the result, and give its proof.
\label{app:asymp_top1_prob}
\begin{assumption} \label{assump:tail_mass}
Let $b_n \in \R$ and $a_n > 0$ be sequences as defined in \cref{lem:mass_prop}. Moreover, let us define the total tail mass:
\begin{align*}
\Lambda_n \triangleq \sum_{j\in\cC_n} \P(X_{j,n}>b_n\mid \bs_n)
\end{align*}
There exist an interval $D=(l,r)$ (with $-\infty\le l<r\le\infty$),
functions $\bar\nu:D\to(0,\infty)$ and $v:D\to(0,\infty)$, and an envelope $\Phi\in L^1(D)$ such
that:
\begin{itemize}
\item[(A1)] \textbf{Rare exceedances and bounded total mass:}
\[
\max_{i\in\cC_n} w_{i,n}\to 0,
\quad
0<\liminf_n \Lambda_n
\le
\limsup_n \Lambda_n
<
\infty.
\]

\item[(A2)] \textbf{Common overshoot shape on scale $a_n$:}
for every compact $K\subset D$,
\begin{align*}
\sup_{i\in\cC_n}\sup_{x\in K}
\left|
\frac{\bar F_{s_{i,n}}(b_n+oa_n x)}{\bar F_{s_{i,n}}(b_n)}-\bar\nu(x)
\right|
&\to 0,\\
\sup_{i\in\cC_n}\sup_{x\in K}
\left|
\frac{a_n f_{s_{i,n}}(b_n+a_n x)}{\bar F_{s_{i,n}}(b_n)}-v(x)
\right|
&\to 0.
\end{align*}

\item[(A3)] \textbf{$\bar\nu$ is absolutely continuous and $v=-\bar\nu'$:}
$\bar\nu$ is absolutely continuous on $D$ and
\[
\bar\nu(x)=\int_{t=x}^{r} v(t)\,dt,
\quad x\in D,
\]
with $\bar\nu(x)\to\infty$ as $x\downarrow l$ and $\bar\nu(x)\to 0$ as $x\uparrow r$.

\item[(A4)] \textbf{Envelope bound:}
for all large $n$, all $i\in\cC_n$ and all $x\in D$,
{\footnotesize $$
0\le
\frac{a_n f_{s_{i,n}}(b_n+a_n x)}{\bar F_{s_{i,n}}(b_n)}
\prod_{j\in\cC_n\setminus\{i\}} F_{s_{j,n}}(b_n+a_n x)
\le \Phi(x)
$$
}
\end{itemize}
\end{assumption}

Now, we restate \cref{lem:mass_prop}, and prove it.
\tailMassProp*
\begin{proof}
Let us denote the tail mass with
\[
w_{i,n}
\triangleq
\mathbb{P}(X_{i}>b_n\mid \bs)
=
\bar F_{s_{i}}(b_n).
\]
Conditioning on $X_{i,n}$ gives
\[
\P(I_n=i\mid \bs_n)
=
\int_{-\infty}^{\infty} f_{s_{i,n}}(u)
\prod_{j\in\cC_n\setminus\{i\}} F_{s_{j,n}}(u)\,du.
\]
Change variables $u=u_n+a_n x$ and factor out $w_{i,n}=\bar F_{s_{i,n}}(u_n)$:
\[
\P(I_n=i\mid \bs_n)
=
w_{i,n}\int_D G_{i,n}(x)\,dx,
\]
where
\[
G_{i,n}(x)
\triangleq
\frac{a_n f_{s_{i,n}}(u_n+a_n x)}{\bar F_{s_{i,n}}(u_n)}
\prod_{j\in\cC_n\setminus\{i\}} F_{s_{j,n}}(u_n+a_n x).
\]

Fix $x\in D$ and write $p_{j,n}(x):=\bar F_{s_{j,n}}(u_n+a_n x)$.
By (A2),
\[
p_{j,n}(x)=w_{j,n}\bigl(\bar\nu(x)+o(1)\bigr)
\]
uniformly in $j$. Since $x$ is fixed and $\bar\nu$ is finite on compacts inside $D$, (A1) implies
$\max_j p_{j,n}(x)\to 0$ and
\[
\sum_{j\in\cC_n} p_{j,n}(x)
=
\bar\nu(x)\Lambda_n+o(1).
\]
where we denote:
\[
\Lambda_n
\triangleq
\sum_{1 \le j \le n} w_{j,n}
\]
as introduced in \cref{assump:tail_mass}.

For all large $n$, $\max_j p_{j,n}(x)\le 1/2$, and using
$\log(1-y)=-y+O(y^2)$ uniformly on $y\in[0,1/2]$,
\begin{align*}
\sum_{j\neq i}&\log F_{s_{j,n}}(u_n+a_n x)
=
\sum_{j\neq i}\log\bigl(1-p_{j,n}(x)\bigr)\\
&=
-\sum_{j\neq i}p_{j,n}(x)
+O\Bigl(\sum_{j\neq i}p_{j,n}(x)^2\Bigr),
\end{align*}
and
\[
\sum_{j\neq i}p_{j,n}(x)^2
\le
\Bigl(\max_{j}p_{j,n}(x)\Bigr)\sum_{j\neq i}p_{j,n}(x)
\to 0.
\]
Therefore,
\[
\prod_{j\neq i}F_{s_{j,n}}(u_n+a_n x)
=
\exp\!\bigl(-\bar\nu(x)\Lambda_n\bigr)\,(1+o(1)).
\]
Also by (A2),
\[
\frac{a_n f_{s_{i,n}}(u_n+a_n x)}{\bar F_{s_{i,n}}(u_n)}\to v(x).
\]
Hence for each fixed $x\in D$,
\[
\sup_{i\in\cC_n}
\left|G_{i,n}(x)-v(x)e^{-\Lambda_n\bar\nu(x)}\right|\to 0.
\]

By (A1), there exist $0<\underline\Lambda<\overline\Lambda<\infty$ such that
$\underline\Lambda\le \Lambda_n\le\overline\Lambda$ for all large $n$, hence
\[
0\le v(x)e^{-\Lambda_n\bar\nu(x)}\le v(x)e^{-\underline\Lambda\bar\nu(x)}.
\]
Moreover, by (A4), $0\le G_{i,n}(x)\le \Phi(x)$ on $D$ for all large $n$. Thus,
\[
\left|G_{i,n}(x)-v(x)e^{-\Lambda_n\bar\nu(x)}\right|
\le
\Phi(x)+v(x)e^{-\underline\Lambda\bar\nu(x)}.
\]
Since $\Phi\in L^1(D)$ and, by (A3),
\[
\int_D v(x)e^{-\underline\Lambda\bar\nu(x)}\,dx
=
\frac{1}{\underline\Lambda}
\Bigl[e^{-\underline\Lambda\bar\nu(r^-)}-e^{-\underline\Lambda\bar\nu(l^+)}\Bigr]
=
\frac{1}{\underline\Lambda},
\]
dominated convergence yields
\[
\sup_{i\in\cC_n}
\left|
\int_D G_{i,n}(x)\,dx
-
\int_D v(x)e^{-\Lambda_n\bar\nu(x)}\,dx
\right|
\to 0.
\]
Therefore, uniformly in $i$,
\[
\P(I_n=i\mid \bs_n)
=
w_{i,n}\left(\int_D v(x)e^{-\Lambda_n\bar\nu(x)}\,dx+o(1)\right).
\]
By (A3), $\bar\nu'(x)=-v(x)$ a.e.\ on $D$, so
\[
\frac{d}{dx}e^{-\Lambda_n\bar\nu(x)}
=
\Lambda_n v(x)e^{-\Lambda_n\bar\nu(x)}.
\]
Integrating over $D$ and using the boundary limits gives
\[
\Lambda_n\int_D v(x)e^{-\Lambda_n\bar\nu(x)}\,dx
=
e^{-\Lambda_n\bar\nu(r^-)}-e^{-\Lambda_n\bar\nu(l^+)}
=
1-0
=
1.
\]
Hence $\int_D v(x)e^{-\Lambda_n\bar\nu(x)}\,dx=1/\Lambda_n$, and thus
$\P(I_n=i\mid \bs_n)=w_{i,n}/\Lambda_n\,(1+o(1))$ uniformly in $i$.
\end{proof}

\subsection{Computing tail masses from tail ratios under geometry-matched indexing} \label{app:computing_tail_mass}

The previous proposition provides \emph{tail ratios} in the extreme-value window.
To apply \cref{lem:mass_prop} we also need
a common-factor representation for the \emph{tail masses}
$w_{i,n}=\bar F_{s_{i,n}}(b_n)$ across the index $i$.
The following lemma records the corresponding calculations under the three canonical
geometry-matched index models (Table~\ref{tab:geom_index_models} in the main text).

\begin{theorem}[Tail masses under geometry-matched index models]
\label{lem:weights_from_tailratios}
Let $\{s_{i,n}\}_{i\in\cC_n}$ be deterministic scores and let $w_{i,n}:=\bar F_{s_{i,n}}(b_n)$.

\begin{enumerate}
\item[(a)] \textbf{Translation (Gumbel-type indexing).}
Assume $F_s(x)=F_0(x-s)$ for a continuous baseline $F_0$.
Assume there exist $b_n\in\R$, $a_n>0$ and a function $\bar\nu$ such that for every compact
$K\subset D$,
\[
\sup_{x\in K}\left|
\frac{\bar F_0(b_n+a_n x)}{\bar F_0(b_n)}-\bar\nu(x)
\right|\to 0.
\]
If $t_{i,n}:=s_{i,n}/a_n$ satisfy $\sup_{i\in\cC_n}|t_{i,n}|\le T$ for some $T<\infty$ with
$[-T,T]\subset D$, then uniformly in $i\in\cC_n$,
\[
w_{i,n}
=
\bar F_0(b_n)\,\bar\nu(-t_{i,n})\,(1+o(1)).
\]
In particular, if $\bar\nu(x)=e^{-x}$ (Gumbel), then
$w_{i,n}=\bar F_0(b_n)\exp(s_{i,n}/a_n)(1+o(1))$.

\item[(b)] \textbf{Scaling (Fr\'echet-type indexing).}
Assume $X_s=\rho(s)\,Z$ where $Z$ is a nonnegative baseline random variable with tail
$\bar F_Z$ and $\rho(s)>0$.
Assume there exist $b_n\to\infty$ and $\alpha>0$ such that for every compact $K\subset(0,\infty)$,
\[
\sup_{y\in K}\left|
\frac{\bar F_Z(b_n y)}{\bar F_Z(b_n)}-y^{-\alpha}
\right|\to 0.
\]
If $\rho(s_{i,n})\in[\rho_{\min},\rho_{\max}]$ for some $0<\rho_{\min}\le \rho_{\max}<\infty$, then
uniformly in $i\in\cC_n$,
\[
w_{i,n}
=
\bar F_Z(b_n)\,\rho(s_{i,n})^{\alpha}\,(1+o(1)).
\]

\item[(c)] \textbf{Endpoint shortfall (Weibull-type indexing).}
Assume there is a common right endpoint $b\in\R$ and
\[
b-X_s=q(s)\,Z,
\]
where $q(s)>0$ and $Z\ge 0$ is a baseline random variable with CDF $H$.
Assume there exist $\beta>0$, $\lambda>0$ and $z_0>0$ such that
\[
\sup_{0<z\le z_0}
\left|
\frac{H(z)}{\lambda z^\beta}-1
\right|\to 0.
\]
Let $t_n\downarrow 0$ be any sequence such that $t_n/q(s_{i,n})\le z_0$ for all large $n$ and all
$i\in\cC_n$, and set $b_n:=b-t_n$.
Then uniformly in $i\in\cC_n$,
\[
w_{i,n}
=
\P(X_{s_{i,n}}>b-t_n)
=
\lambda t_n^\beta\,q(s_{i,n})^{-\beta}\,(1+o(1)).
\]
\end{enumerate}
\end{theorem}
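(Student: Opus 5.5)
The three parts are direct substitutions. In each, I will rewrite $w_{i,n}=\bar F_{s_{i,n}}(b_n)$ in terms of the baseline tail evaluated at a point lying in the extreme-value window. Then I invoke the assumed locally uniform tail-ratio (or small-ball) convergence on a compact set that contains all the relevant points for every $i\in\cC_n$. Uniformity in $i$ is never an extra argument: it comes from the fact that the score-dependent arguments range over one fixed compact set, and the convergence is uniform on that compact.

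\emph{Part (a).} Translation gives $\bar F_{s}(b_n)=\bar F_0(b_n-s)=\bar F_0(b_n+a_n(-t_{i,n}))$ with $t_{i,n}=s_{i,n}/a_n$. Since $-t_{i,n}\in[-T,T]=:K\subset D$, the hypothesis yields
\[
\frac{w_{i,n}}{\bar F_0(b_n)}=\bar\nu(-t_{i,n})+o(1)
\]
uniformly in $i$. Next I convert the additive error into a multiplicative one. Here $\bar\nu$ is positive and bounded below on $K$; I will use $\bar\nu:D\to(0,\infty)$ together with continuity, or take monotonicity of tail ratios so that $\min_K\bar\nu=\bar\nu(T)>0$. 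That lower bound gives the $(1+o(1))$ form. The Gumbel case is then just $\bar\nu(-t)=e^{t}$.

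\emph{Part (b).} Here $w_{i,n}=\P(Z>b_n/\rho(s_{i,n}))=\bar F_Z(b_n y_{i,n})$ with $y_{i,n}=\rho(s_{i,n})^{-1}\in[\rho_{\max}^{-1},\rho_{\min}^{-1}]=:K\subset(0,\infty)$. Uniform convergence on $K$ gives
\[
\frac{w_{i,n}}{\bar F_Z(b_n)}=\rho(s_{i,n})^{\alpha}+o(1).
\]
Since $y^{-\alpha}\ge\rho_{\min}^{\alpha}>0$ on $K$, this becomes the multiplicative form. This is the regular-variation step, written in the form of the hypothesis.

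\emph{Part (c).} With $b_n=b-t_n$, I compute
\[
w_{i,n}=\P\bigl(q(s_{i,n})Z<t_n\bigr)=H\bigl(t_n/q(s_{i,n})\bigr),
\]
using continuity of $H$ at that point so that $<$ versus $\le$ does not matter (the leading behaviour $\lambda z^\beta$ makes atoms asymptotically negligible). Set $z_{i,n}=t_n/q(s_{i,n})$. By assumption $z_{i,n}\le z_0$ for all large $n$ and all $i$, so $H(z_{i,n})=\lambda z_{i,n}^\beta(1+\epsilon_{i,n})$ with $|\epsilon_{i,n}|\le\sup_{0<z\le z_0}|H(z)/(\lambda z^\beta)-1|$. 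This gives $\lambda t_n^\beta q(s_{i,n})^{-\beta}(1+o(1))$.

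\emph{Main obstacle.} The one delicate point is in (c). As literally written, the sup over $(0,z_0]$ is a fixed number, so "$\to0$" must be read as $z_0=z_0(n)\downarrow0$, or equivalently as $H(z)/(\lambda z^\beta)\to1$ as $z\downarrow0$. In that reading I need $\max_i z_{i,n}\to0$. This holds if $q(s_{i,n})$ is bounded below uniformly, since $t_n\to0$. I will state that interpretation and that lower bound on $q$ explicitly and then conclude. A milder version of the same issue arises in (a): I need $\bar\nu$ bounded away from $0$ on $K$ to pass from additive to relative error.
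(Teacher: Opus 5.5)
Your proposal is correct and follows the paper's proof essentially step for step. In each part you rewrite $w_{i,n}$ as the baseline tail (or the small-ball CDF $H$) at an argument that ranges over one fixed compact or small set, then apply the assumed uniform convergence there. Your extra care fills steps the paper passes over silently: you bound $\bar\nu$ away from zero on $[-T,T]$ to turn the additive $o(1)$ into a relative one, you distinguish $\P(Z<z)$ from $H(z)$, and you observe that the ill-posed ``$\to 0$'' in (c) really requires $\max_i t_n/q(s_{i,n})\to 0$, not just $t_n/q(s_{i,n})\le z_0$.
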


\begin{proof}
\textbf{(a)} By the translation model,
\[
w_{i,n}
=
\bar F_{s_{i,n}}(b_n)
=
\bar F_0(b_n-s_{i,n})
=
\bar F_0(b_n+a_n(-t_{i,n})).
\]
Since $-t_{i,n}\in[-T,T]\subset D$ uniformly, the assumed tail ratio gives
\[
\frac{w_{i,n}}{\bar F_0(b_n)}
=
\frac{\bar F_0(b_n+a_n(-t_{i,n}))}{\bar F_0(b_n)}
=
\bar\nu(-t_{i,n})+o(1)
\]
uniformly in $i$.

\textbf{(b)} By $X_s=\rho(s)Z$,
\[
w_{i,n}
=
\P(\rho(s_{i,n})Z>b_n)
=
\bar F_Z\!\left(\frac{b_n}{\rho(s_{i,n})}\right).
\]
Set $y_{i,n}:=1/\rho(s_{i,n})\in[1/\rho_{\max},1/\rho_{\min}]$, a compact subset of $(0,\infty)$.
Then
\[
\frac{w_{i,n}}{\bar F_Z(b_n)}
=
\frac{\bar F_Z(b_n y_{i,n})}{\bar F_Z(b_n)}
=
y_{i,n}^{-\alpha}+o(1)
=
\rho(s_{i,n})^{\alpha}+o(1),
\]
uniformly in $i$.

\textbf{(c)} By $b-X_s=q(s)Z$,
\begin{align*}
w_{i,n} =& \P(X_{s_{i,n}}>b-t_n) = \P(b-X_{s_{i,n}}<t_n) \\ 
=& \P\!\left(Z<\frac{t_n}{q(s_{i,n})}\right) = H\!\left(\frac{t_n}{q(s_{i,n})}\right)  
\end{align*}
Since $t_n/q(s_{i,n})\le z_0$ uniformly for all large $n$,
the uniform expansion of $H$ yields
\[
w_{i,n}
=
\lambda\left(\frac{t_n}{q(s_{i,n})}\right)^\beta(1+o(1))
=
\lambda t_n^\beta\,q(s_{i,n})^{-\beta}(1+o(1)),
\]
uniformly in $i$.
\end{proof}


\section{Geometry mismatch and misspecification for top-1 links}
\label{app:geom_misspec}

\subsection{Quasi maximum likelihood under link misspecification}
\label{app:qmle_explainer}

In the main text we compare different parametric top 1 links $p_\theta(\cdot\mid \bs)$ for winner
observations. It is helpful to separate two cases.

\paragraph{Correct specification.}
If the data are generated from the assumed link family, meaning there exists a parameter
$\theta_\star$ such that $p_\star(\cdot\mid \bs)=p_{\theta_\star}(\cdot\mid \bs)$ almost surely, then
maximum likelihood targets $\theta_\star$ and, under standard regularity conditions, the estimator
is asymptotically efficient.

\paragraph{Misspecification and quasi maximum likelihood.}
If the true conditional winner law $p_\star(\cdot\mid \bs)$ is not contained in the assumed family,
then the maximum likelihood estimator is still well defined, but it no longer targets a true
parameter. Instead, it converges to a \emph{pseudo true} parameter
\[
\theta^\star \in \arg\max_{\theta}\; \mathbb{E}\big[\log p_\theta(Y\mid \bs)\big],
\]
which is the best approximation to $p_\star$ within the chosen link family (equivalently, it
minimizes a KL divergence in expectation). In this case the estimator is called a quasi maximum
likelihood estimator, and its large sample covariance is given by a robust sandwich expression. We
refer to \cite{white1982mle,gourieroux1984pseudo,vanderVaart1998} for general statements and
conditions. The formal statement is \cref{thm:qmle_white}; we include the proof sketch here for
completeness.

\begin{theorem}[Quasi maximum likelihood under misspecification, after White]
\label{thm:qmle_white}
Let $Z_j=(Y_j,\bs_j)$ be i.i.d.\ from $P_\star$, and let
$p_\theta(\cdot\mid\bs)$ be a possibly misspecified conditional top-$1$ link.
If
$\widehat\theta_m\in\argmax_{\theta\in\Theta}\sum_{j=1}^m
\log p_\theta(Y_j\mid\bs_j)$, then under standard regularity conditions
$\widehat\theta_m\to\theta^\star$, where
\[
\theta^\star\in\argmax_{\theta\in\Theta}
\mathbb E_{P_\star}\log p_\theta(Y\mid\bs),
\]
and
\[
\sqrt m(\widehat\theta_m-\theta^\star)
\Rightarrow
\mathcal N(0,H^{-1}JH^{-1}).
\]
Here, with $\ell_\theta(Z)=\log p_\theta(Y\mid\bs)$,
$H=-\mathbb E\nabla_\theta^2\ell_{\theta^\star}(Z)$ and
$J=\mathbb E[\nabla_\theta\ell_{\theta^\star}(Z)\nabla_\theta\ell_{\theta^\star}(Z)^\top]$.
Correct specification gives $J=H$; under misspecification the estimator targets the pseudo-true KL projection.
\end{theorem}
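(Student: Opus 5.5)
The proof follows the classical M-estimation route: first consistency via a uniform law of large numbers plus identifiability of the pseudo-true maximizer, then asymptotic normality via a Taylor expansion of the score. The ``standard regularity conditions'' I will take to mean the following. $\Theta$ is compact, or $\theta^\star$ lies in the interior of a compact neighbourhood on which we localize. The map $\theta\mapsto\log p_\theta(y\mid\bs)$ is continuous, and twice continuously differentiable near $\theta^\star$. It has an integrable envelope, $\sup_\theta|\ell_\theta(Z)|\le L(Z)$ with $\mathbb E L<\infty$. The maximizer $\theta^\star$ of $Q(\theta)\triangleq\mathbb E_{P_\star}\ell_\theta(Z)$ is unique and interior. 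The matrix $H$ is nonsingular, $J$ is finite, and $\nabla^2_\theta\ell_\theta$ is dominated near $\theta^\star$ by an integrable function. Nothing about top-$1$ links is used beyond these; the finite candidate set makes $\log p_\theta$ a smooth function of the logits, so these conditions are natural for the softmax and blended links of \cref{eq:h0_softmax}--\cref{eq:h1_blend}.

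\emph{Consistency.} Write $Q_m(\theta)=m^{-1}\sum_j\ell_\theta(Z_j)$. Under the continuity and envelope conditions, the uniform strong law gives $\sup_\theta|Q_m(\theta)-Q(\theta)|\to0$ almost surely, and $Q$ is continuous by dominated convergence. Fix an open neighbourhood $U$ of $\theta^\star$. Uniqueness of the maximizer together with compactness of $\Theta\setminus U$ gives $\sup_{\Theta\setminus U}Q<Q(\theta^\star)-\epsilon$ for some $\epsilon>0$. Since $Q_m(\widehat\theta_m)\ge Q_m(\theta^\star)$, uniform convergence forces $\widehat\theta_m\in U$ eventually. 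This is Wald's argument.

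It also supplies the KL interpretation. Write $\mathbb E\log p_\star(Y\mid\bs)-Q(\theta)=\mathbb E\,\mathrm{KL}\bigl(p_\star(\cdot\mid\bs)\,\|\,p_\theta(\cdot\mid\bs)\bigr)$; the first term does not depend on $\theta$. Hence $\theta^\star$ is the expected-KL projection of $p_\star$ onto the family.

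\emph{Asymptotic normality.} Since $\widehat\theta_m$ is eventually interior, the score equation $\sum_j\nabla\ell_{\widehat\theta_m}(Z_j)=0$ holds. A mean-value expansion row by row gives
$0=m^{-1/2}\sum_j\nabla\ell_{\theta^\star}(Z_j)+\bigl(m^{-1}\sum_j\nabla^2\ell_{\tilde\theta_m}(Z_j)\bigr)\sqrt m(\widehat\theta_m-\theta^\star)$ for intermediate points $\tilde\theta_m$.
\begin{enumerate}
\item The first-order condition at the interior maximizer, with differentiation under the integral justified by domination, gives $\mathbb E\nabla\ell_{\theta^\star}=0$. The CLT then yields $m^{-1/2}\sum_j\nabla\ell_{\theta^\star}(Z_j)\Rightarrow\mathcal N(0,J)$.
\item The averaged Hessian converges in probability to $-H$. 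This follows from the uniform law of large numbers on a shrinking neighbourhood, continuity of $\nabla^2\ell$, and consistency.
\item Since $H$ is invertible, Slutsky's theorem gives $\sqrt m(\widehat\theta_m-\theta^\star)\Rightarrow\mathcal N(0,H^{-1}JH^{-1})$.
\end{enumerate}

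\emph{Correct specification.} If $p_\star=p_{\theta_\star}$, the KL term vanishes at $\theta_\star$, so $\theta^\star=\theta_\star$. Differentiating $\sum_y p_\theta(y\mid\bs)=1$ twice under the conditional expectation gives the information equality $\mathbb E[\nabla\ell\nabla\ell^\top]=-\mathbb E\nabla^2\ell$, that is $J=H$. For a finite candidate set the sum over $y$ is a finite sum, so this step is trivial.

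\emph{Main obstacle.} The delicate step is item 2, controlling the Hessian at the random intermediate points $\tilde\theta_m$. I would handle it with a local dominating function $\sup_{\|\theta-\theta^\star\|\le\delta}\|\nabla^2\ell_\theta\|\le M(Z)$, $\mathbb E M<\infty$, which gives a local uniform LLN. Identifiability of $\theta^\star$ must simply be assumed: under misspecification the KL projection need not be unique. For example, the temperature and the blend weight may trade off in \cref{eq:h1_blend}.
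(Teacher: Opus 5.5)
Your proposal is correct and takes the same route as the paper's sketch: expand the score equation around $\theta^\star$, control the averaged Hessian by a law of large numbers and the score by a multivariate CLT, and obtain the sandwich limit via Slutsky, with your Wald consistency step and information-equality argument for $J=H$ filling in what the paper leaves to the cited references. One minor aside: your example of non-identifiability in the blended link does not work, because $(\lambda,\tau_1)\mapsto\bigl((1-\lambda)/\tau_1,\lambda/\tau_1\bigr)$ is injective and the resulting conditional-logit likelihood is concave in these coefficients (strictly so for nondegenerate scores); the real caveat there is interiority, since the null $\lambda=0$ lies on the boundary, not uniqueness.
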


\begin{proof}[Proof sketch of \cref{thm:qmle_white}]
This is a standard result for $M$ estimators. With
$\ell_\theta(Z)=\log p_\theta(Y\mid\bs)$, the score equation
$\sum_{j=1}^{m}\nabla_\theta \ell_{\widehat\theta_m}(Z_j)=0$ is expanded around $\theta^\star$ via
a Taylor approximation. A law of large numbers controls the Hessian term and a multivariate central
limit theorem controls the score term, yielding the stated asymptotic normality with sandwich
covariance. See \cite{white1982mle,vanderVaart1998,gourieroux1984pseudo}.
\end{proof}

\paragraph{Finite-sample illustration.}
The theorem says that a misspecified likelihood can become statistically stable around the wrong
object: the pseudo-true KL projection. \Cref{fig:noise_floor} illustrates this phenomenon in the
same top-$1$ language used in the paper. Data are generated from an endpoint-shortfall law, and the
target is the true top-vs-reference log-odds. The correctly specified Weibit estimator has the
expected $\cO(1/n)$ error decay, while the misspecified one-parameter softmax estimator converges to
its pseudo-true limit and retains a nonzero asymptotic error floor. Thus \cref{fig:noise_floor}
should be read as a concrete demonstration of \cref{thm:qmle_white}, specialized to the link
misspecification that arises when endpoint geometry is forced into a softmax coordinate.

\begin{figure}[tbh]
    \centering
    \includegraphics[width=.5\linewidth]{figures/noise_floor_misspecification.pdf}
    \caption{Noise floor from link misspecification in a synthetic top-$1$ model.
    Data are generated from an endpoint-shortfall (Weibit) law and the target is the true top-vs-reference log-odds.
    The correctly specified Weibit estimator has error decaying as $\Theta(1/n)$, while the misspecified one-parameter softmax converges to a pseudo-true parameter and retains a nonzero asymptotic error floor.
    This illustrates the practical consequence of \cref{thm:qmle_white}.}
    \label{fig:noise_floor}
\end{figure}

\paragraph{From estimation to training.}
The noise-floor example is an estimation-level consequence: the wrong link converges, but to the
wrong target. In contrastive learning, the same top-$1$ link also determines how each batch allocates
optimization pressure. The next subsection translates the same misspecification into a gradient-level
diagnostic.

\subsection{Gradient misallocation under link misspecification}
\label{app:gradient_misallocation}

In our setting, $p_\theta(Y\mid\bs)$ is the top-$1$ link used by the contrastive loss. Thus a
systematic held-out gain for the endpoint-blended link means that the softmax fit should be viewed as
a pseudo-true projection rather than the data-generating winner law. For training, this projection has
an operational consequence: it determines where gradient mass is placed.
The InfoNCE gradient with respect to negative~$j$ is
${\partial \mathcal{L}}/{\partial s_j}
= {p_j^{\mathrm{SM}}}/{\tau}$,
where $1/\tau$ is constant across negatives.
The normalized softmax gradient is therefore exactly the model's predicted
win-probability distribution $\{p_j^{\mathrm{SM}}\}$.
For anchors whose top-$1$ outcomes are well described by the softmax/Gumbel link, this distribution
should align with the empirical failure distribution. Systematic divergence is evidence that a pure
softmax link is misspecified for a nontrivial part of the batch.

To measure this, we freeze an encoder and, for each anchor, repeatedly sample
random subsets of $K$ negatives from a large bank of $N \approx 1536$.
Whenever a negative beats the positive (a ``failure''), we record
the cosine similarity of the culprit, giving the \textbf{risk distribution}
(black curve in \cref{fig:score}).
On the same subsets, we compute the gradient magnitude
of InfoNCE and of a pure endpoint-shortfall (Weibit) loss with respect to every negative
and bin by score, giving the \textbf{gradient distributions}
(red = softmax, blue = Weibit). The blue curve is a diagnostic reference for the endpoint regime; the training objective in \cref{sec:tlambda} decides anchor by anchor how much of this component to use.

\begin{figure}[H]
    \centering
    \includegraphics[width=\linewidth]{figures/risk_grad_score.pdf}
    \caption{Risk vs.\ gradient distributions, binned by negative cosine similarity.
    Black: where failures come from.
    Red: where InfoNCE allocates gradient.
    Blue: where a pure endpoint-shortfall (Weibit) diagnostic allocates gradient.
    Each panel is a different choice-set size $K$.}
    \label{fig:score}
\end{figure}

\cref{fig:score} reveals two facts.
First, InfoNCE misallocates gradient mass: the risk (black) concentrates in the highest score bins, while the softmax gradient (red) spreads substantial mass across low and medium score bins where the empirical failure risk is nearly zero.
This happens because the exponential function assigns non-trivial probability to every negative; with hundreds of easy negatives each getting a small share, their combined gradient mass can drown out the few genuinely dangerous hard ones.
Second, the pure Weibit diagnostic (blue) is much closer to the empirical risk because it concentrates gradient mass near the finite endpoint.
It is not identical to the black curve, however: depending on $K$, the empirical risk may retain mass in the second-highest bin or be even more concentrated in the final bin than the Weibit gradient.
This residual mismatch is expected from the theory: FTG describes the extreme-tail domain of attraction, whereas a finite minibatch contains a mixture of anchors, some with near-cap competitors and some whose effective top-$1$ behavior is still closer to the translation coordinate.
Thus the figure is evidence for a substantial endpoint component, not evidence that pure Weibit should replace softmax for every anchor.
This motivates the corrected objective in \cref{sec:tlambda}, which preserves the softmax link when endpoint evidence is weak and redirects gradient mass toward the near-cap regime when the batch tail is Weibull-like.


\subsection{Coordinate mismatch implies link misspecification}
\label{app:coordinate_misspec}

We next record the elementary coordinate-mismatch facts that justify the link-misspecification
interpretation used above. Changing the score coordinate inside a softmax generally changes the
induced top 1 link in a way that cannot be fixed by re tuning the temperature. Thus, when the
extreme tail is naturally described by an endpoint-shortfall coordinate, forcing it into the
translation coordinate creates a genuine top-$1$ link mismatch.

We use the following shorthand for a softmax top-1 link in a score coordinate $T$:
\[
p_i^{(T,\tau)}(\bs)
\triangleq
\frac{\exp(T(s_i)/\tau)}{\sum_{j=0}^{K}\exp(T(s_j)/\tau)},
\qquad i\in\{0,\dots,K\},
\]
where $\bs=(s_0,\dots,s_K)$ and $\tau>0$.

\begin{example}[Translation vs endpoint coordinate cannot be absorbed by temperature]
\label{ex:coord_mismatch}
Let $T_{\mathrm{tr}}(s)=s$ (translation coordinate) and
$T_{\mathrm{ep}}(s)=-\log(1-s)$ for $s<1$ (endpoint shortfall coordinate).
Consider three candidates with scores $(s_0,s_1,s_2)=(0,0.5,0.9)$.
For any coordinate $T$ and any temperature $\tau$, the ratio of pairwise log-odds
\[
R_T(\bs)
\triangleq
\frac{\log\!\left(p_1^{(T,\tau)}(\bs)/p_0^{(T,\tau)}(\bs)\right)}
{\log\!\left(p_2^{(T,\tau)}(\bs)/p_0^{(T,\tau)}(\bs)\right)}
=
\frac{T(s_1)-T(s_0)}{T(s_2)-T(s_0)}
\]
is independent of $\tau$. A direct computation gives
\begin{align*}
R_{T_{\mathrm{tr}}}(\bs)=&\frac{0.5-0}{0.9-0}=\frac{5}{9},
 \\ 
R_{T_{\mathrm{ep}}}(\bs)
=& \frac{-\log(1-0.5)}{-\log(1-0.9)} = \frac{\log 2}{\log 10}
\end{align*}
Since $\frac{5}{9}\neq \frac{\log 2}{\log 10}$, there is no choice of temperatures
$\tau_{\mathrm{tr}},\tau_{\mathrm{ep}}>0$ such that
$p^{(T_{\mathrm{tr}},\tau_{\mathrm{tr}})}(\bs)=p^{(T_{\mathrm{ep}},\tau_{\mathrm{ep}})}(\bs)$.
In particular, a softmax link in the translation coordinate cannot reproduce a softmax link in the
endpoint coordinate on score triples of this form.
\end{example}

\begin{lemma}[Softmax coordinate equivalence forces an affine transform]
\label{lem:softmax_affine}
Let $I\subset\mathbb{R}$ and let $T,\widetilde T:I\to\mathbb{R}$ be functions.
Assume there exist temperatures $\tau,\widetilde\tau>0$ such that for every $K\ge 1$ and every
score vector $\bs\in I^{K+1}$,
\[
p^{(T,\tau)}(\bs)=p^{(\widetilde T,\widetilde\tau)}(\bs).
\]
Then there exist constants $a>0$ and $b\in\mathbb{R}$ such that
\[
T(s)=a\,\widetilde T(s)+b,
\qquad \forall s\in I,
\]
with $a=\tau/\widetilde\tau$.
\end{lemma}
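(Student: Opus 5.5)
Since the hypothesis holds for every $K\ge 1$, I will only use $K=1$, i.e.\ two-candidate score vectors $\bs=(s,t)\in I^2$. The idea is that a two-candidate softmax is completely determined by its log-odds. Equating the log-odds of the two links gives a pointwise functional identity, and that identity forces $T$ to be an affine function of $\widetilde T$.

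\textbf{Step 1 (log-odds identity).} Fix $s,t\in I$. Apply the assumed equality $p^{(T,\tau)}(\bs)=p^{(\widetilde T,\widetilde\tau)}(\bs)$ at $\bs=(s,t)$. All softmax probabilities are strictly positive, so we may take the log of the ratio of the two components. The normalizing denominators cancel, and we obtain
\[
\frac{T(s)-T(t)}{\tau}=\frac{\widetilde T(s)-\widetilde T(t)}{\widetilde\tau},\qquad\forall s,t\in I .
\]

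\textbf{Step 2 (anchor and solve).} Fix any reference point $t_0\in I$; if $I$ is empty the claim is vacuous. Setting $t=t_0$ in the identity gives
\[
T(s)=\frac{\tau}{\widetilde\tau}\,\widetilde T(s)+\Bigl(T(t_0)-\frac{\tau}{\widetilde\tau}\widetilde T(t_0)\Bigr)\qquad\text{for all } s\in I .
\]
Take $a=\tau/\widetilde\tau$, which is positive because both temperatures are positive. Take $b$ to be the bracketed constant, which does not depend on $s$. This is exactly the asserted form $T=a\widetilde T+b$.

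\textbf{Anticipated difficulty.} There is essentially no obstacle here; the one point needing care is Step 1. The argument must use a ratio of probabilities, not the probabilities themselves, so that the denominators cancel. It also needs the fact that the exponentials are strictly positive, so the logarithms are defined. Note that nothing beyond $K=1$ is needed, and no regularity of $T$ or $\widetilde T$ is required. As a sanity check, this result yields \cref{ex:coord_mismatch} as a special case: $-\log(1-s)$ is not affine in $s$, so the translation and endpoint links cannot agree.
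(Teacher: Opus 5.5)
Your proof is correct and is essentially the paper's argument: restrict to $K=1$, equate the two-candidate log-odds so that the normalizers cancel, obtain $T(s)-T(t)=\frac{\tau}{\widetilde\tau}\bigl(\widetilde T(s)-\widetilde T(t)\bigr)$, and fix a reference point to read off $a=\tau/\widetilde\tau$ and $b=T(t_0)-a\,\widetilde T(t_0)$. One small quibble concerns your closing aside: the contrapositive of the lemma only says the two links cannot agree on \emph{all} score vectors, whereas the worked example makes the sharper claim that they already disagree on one specific triple, so the example is not literally a special case of the lemma.
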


\begin{proof}
It suffices to consider $K=1$ (two candidates). Equality of the two links implies that for all
$s_0,s_1\in I$,
\[
\frac{p_0^{(T,\tau)}(s_0,s_1)}{p_1^{(T,\tau)}(s_0,s_1)}
=
\frac{p_0^{(\widetilde T,\widetilde\tau)}(s_0,s_1)}{p_1^{(\widetilde T,\widetilde\tau)}(s_0,s_1)}.
\]
By the softmax form,
\begin{align}
  \log\!\left(\frac{p_0^{(T,\tau)}(s_0,s_1)}{p_1^{(T,\tau)}(s_0,s_1)}\right)
=&
\frac{T(s_0)-T(s_1)}{\tau},
\\
\log\!\left(\frac{p_0^{(\widetilde T,\widetilde\tau)}(s_0,s_1)}{p_1^{(\widetilde T,\widetilde\tau)}(s_0,s_1)}\right)
=&
\frac{\widetilde T(s_0)-\widetilde T(s_1)}{\widetilde\tau}
\end{align}

Therefore, for all $s_0,s_1\in I$,
\[
T(s_0)-T(s_1)=\frac{\tau}{\widetilde\tau}\big(\widetilde T(s_0)-\widetilde T(s_1)\big).
\]
Fix any $s_{\mathrm{ref}}\in I$ and set $s_1=s_{\mathrm{ref}}$. Writing
$a\triangleq \tau/\widetilde\tau$ and
$b\triangleq T(s_{\mathrm{ref}})-a\,\widetilde T(s_{\mathrm{ref}})$ yields
\[
T(s)=a\,\widetilde T(s)+b,
\qquad \forall s\in I.
\]
\end{proof}

\begin{corollary}[Mismatch in tail coordinate implies misspecification]
\label{cor:coord_misspec}
Let $\widetilde T(s)=s$ and let $T$ be any non-affine transform on $I$.
Then there is no temperature $\tau>0$ such that the link $p^{(T,\widetilde\tau)}$ can be represented
as $p^{(\widetilde T,\tau)}$ uniformly over all candidate sets and score vectors.
\end{corollary}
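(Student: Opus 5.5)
The plan is a proof by contradiction that reduces directly to \cref{lem:softmax_affine}. Suppose there were a temperature $\tau>0$ such that $p^{(T,\widetilde\tau)}(\bs)=p^{(\widetilde T,\tau)}(\bs)$ for every $K\ge1$ and every score vector $\bs\in I^{K+1}$. Then the hypotheses of \cref{lem:softmax_affine} hold with the roles $(T,\widetilde\tau)$ and $(\widetilde T,\tau)$. The lemma then yields constants $a>0$ and $b\in\mathbb{R}$ with $T(s)=a\,\widetilde T(s)+b=as+b$ for all $s\in I$. This says exactly that $T$ is affine on $I$, contradicting the standing assumption that $T$ is non-affine. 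Hence no such $\tau$ exists.

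The only point needing care is the quantifier structure. The corollary denies representability \emph{uniformly over all candidate sets and score vectors}, and that is precisely the universal hypothesis \cref{lem:softmax_affine} uses. Its proof also only needs the two-candidate case $K=1$. So I would note explicitly that the strong, all-$\bs$ form of the contradiction hypothesis is what gets plugged into the lemma.

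It would also help to remark that failure on a single candidate set already suffices. Concretely, non-affineness of $T$ gives three points $s_0,s_1,s_2\in I$ with
\[
\frac{T(s_1)-T(s_0)}{T(s_2)-T(s_0)}\neq\frac{s_1-s_0}{s_2-s_0},
\]
where one may assume $T(s_2)\ne T(s_0)$, the degenerate case being handled separately. The log-odds ratio $R_T(\bs)$ of \cref{ex:coord_mismatch} is temperature-free, so the two links already disagree on this triple for every choice of temperatures. Choosing the triple is a minor technical step: if $T(s_2)=T(s_0)$ for all admissible pairs, compare differences directly instead of ratios. With $\widetilde T(s)=s$ and $(0,0.5,0.9)$, the example gives a concrete witness for $T=T_{\mathrm{ep}}$.

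I do not expect a genuine obstacle. The only subtlety is notational: the statement swaps the temperature labels $\tau,\widetilde\tau$ relative to the lemma, so I would restate the assignment carefully before invoking it.
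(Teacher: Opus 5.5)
Your proposal is correct and matches the paper's intended argument: the paper gives no separate proof of the corollary, treating it as an immediate consequence of \cref{lem:softmax_affine}, which under the assumed all-$\bs$ equality (with your label swap) forces $T(s)=as+b$ with $a=\widetilde\tau/\tau>0$, contradicting non-affineness. Your supplementary single-triple remark is also valid but not needed; in the degenerate case you describe somewhat loosely, any pair $s_0\neq s_2$ with $T(s_0)=T(s_2)$ is itself a witness for every choice of temperatures, since the $T$-link assigns them equal probabilities while the translation link does not.
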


\newpage
\section{Likelihood Ratio Test Details}
\label{app:lrt}

This appendix describes the likelihood-based link selection procedure used in
\cref{sec:rethinking}. \cref{tab:lrt_test} shows the actual numbers in \cref{fig:forest_lrt}. As a complementary analysis to \cref{fig:forest_lrt}, we display the $\hat\lambda$ and $\Delta$ across epochs in \cref{fig:lrt_epochs}.

\begin{table*}[t]
\centering
\small
\setlength{\tabcolsep}{8pt}
\caption{
Weibull-vs-softmax (nested) link selection across datasets/backbones. All values represent the mean $\pm$ 95\% confidence interval half-width across seeds.
$\hat\lambda$ is the fitted shortfall-mixture weight in the blended tail-coordinate link.
$\overline{\Delta}_{W-0}$ is the held-out mean log-likelihood gain (nats/anchor) of the Weibull
(shortfall) link over softmax.
$e^{\overline{\Delta}_{W-0}}$ is the corresponding \emph{geometric} improvement factor in the
probability assigned to the observed top-1 (paired positive).
}
\begin{tabular}{l l c c c c}
\toprule
Dataset & Backbone
& $\hat\lambda$
& $\overline{\Delta}_{W-0}$
& $e^{\overline{\Delta}_{W-0}}$
& $n_{\text{seeds}}$ \\
\midrule
CIFAR-10  & ResNet-18 & $0.393 \pm 0.018$ & $0.352 \pm 0.006$ & $1.422 \pm 0.009$ & 7 \\
CIFAR-10  & ResNet-50 & $0.355 \pm 0.011$ & $0.360 \pm 0.010$ & $1.433 \pm 0.015$ & 10 \\
CIFAR-100 & ResNet-18 & $0.495 \pm 0.011$ & $0.223 \pm 0.005$ & $1.250 \pm 0.007$ & 10 \\
CIFAR-100 & ResNet-50 & $0.480 \pm 0.018$ & $0.228 \pm 0.010$ & $1.256 \pm 0.013$ & 10 \\
STL-10    & ResNet-18 & $0.330 \pm 0.035$ & $0.252 \pm 0.016$ & $1.286 \pm 0.021$ & 10 \\
STL-10    & ResNet-50 & $0.325 \pm 0.030$ & $0.270 \pm 0.018$ & $1.309 \pm 0.023$ & 10 \\
\bottomrule
\end{tabular}
\label{tab:lrt_test}
\end{table*}

\subsection{Hard-negative and pool-size ablations}
\label{app:lrt_ablation}
The main text reports the aggregate link-selection result. Here we decouple two effects: the hardness of the negatives included in the choice set and the size of the pool from which those hard negatives are drawn.
When the total pool is fixed at $K=512$, adding progressively easier negatives reduces both the fitted endpoint weight and the held-out likelihood gain (\cref{fig:forest_ablation_effect_k}).
When the hard set is fixed at $K_{\mathrm{neg}}=64$ and the pool size grows, the selected hard negatives become closer to the endpoint and the fitted endpoint weight increases (\cref{fig:forest_ablation_effect_kneg}).

\begin{figure}[tbh]
\centering
\includegraphics[width=1.0\linewidth]{figures/lambda_diff_by_Kneg_fixedK_horizontal.pdf}
\caption{
\textbf{Decoupling the choice set size from the negative pool (fixed $K=512$).}
For each anchor, we form a negative pool of size $K{=}512$, then construct the choice set by hard selecting $K_{\mathrm{neg}}$ negatives (largest similarity) and adding the positive.
Left: fitted $\hat{\lambda}$.
Right: held-out $\overline\Delta$.
As $K_{\mathrm{neg}}$ increases, the set includes progressively less extreme negatives, so both $\hat{\lambda}$ and $\overline\Delta$ decrease.
}
\label{fig:forest_ablation_effect_k}
\end{figure}

\begin{figure}[t]
\centering
\includegraphics[width=1.0\linewidth]{figures/lambda_diff_by_K_fixedKneg64_horizontal.pdf}
\caption{
\textbf{Scaling the negative pool with a fixed hard set ($K_{\mathrm{neg}}=64$).}
We fix the number of hard negatives at $K_{\mathrm{neg}}{=}64$ and vary the pool size $K$.
Left: fitted $\hat{\lambda}$.
Right: held-out $\overline\Delta$.
As $K$ grows, the hardest negatives become more extreme and $\hat{\lambda}$ increases monotonically toward one.
}
\label{fig:forest_ablation_effect_kneg}
\end{figure}

\begin{figure}[t]
\centering
\includegraphics[width=0.495\linewidth]{figures/trend_mean_lam1_hat_combined_ci95.pdf}\hfill
\includegraphics[width=0.495\linewidth]{figures/trend_mean_diff_mean_combined_ci95.pdf}
\caption{
Likelihood based link selection across epochs, datasets and backbones complementary to \cref{fig:forest_lrt}.
Top: estimated blend weight $\hat\lambda$ (mean across seeds, 95\% confidence interval) across epochs.
Bottom: held out mean log likelihood gain $\overline{\Delta}$ (nats per anchor; mean across seeds,
95\% confidence interval).
}
\label{fig:lrt_epochs}
\end{figure}

\subsection{Data, candidate sets, and top-1 outcomes}
For a frozen checkpoint, we construct an evaluation bank of $N_{\mathrm{batch}}=32$ batches of
size $B=256$. Each batch contains $2B$ augmented views. For each anchor view $i\in\{1,\dots,2B\}$,
let $i^+$ denote the paired positive view. The candidate set is
\[
\cC(i)=\{1,\dots,2B\}\setminus\{i\},
\]
and we treat the observed top-1 outcome as $Y_i=i^+$.
We compute cosine similarity scores $s_{ik}$ between the normalized embeddings of $i$ and $k$ for
all $k\in\cC(i)$. This yields a conditional choice dataset
$\{( \{s_{ik}\}_{k\in\cC(i)},\ Y_i=i^+)\}$.

\subsection{Models and parameter fitting}
We compare the null softmax link \eqref{eq:h0_softmax} to the nested blended-coordinate link
\eqref{eq:h1_blend}--\eqref{eq:blend_transform}. We split batches into held-in and held-out sets
(using $\texttt{train\_frac}=0.5$).
Parameters are fit by maximizing the held-in top-1 log-likelihood:
\[
\hat\tau_0
=
\arg\max_{\tau_0>0}
\sum_{i\in \text{held-in}}
\log p_0(Y_i=i^+ \mid \{s_{ik}\}_{k\in\cC(i)})
\]
and
\[
(\hat\lambda,\hat\tau_1)
=
\arg\max_{ \substack{\lambda\in[0,1] \\ \tau_1>0 } }
\sum_{i\in \text{held-in}}
\log p_1(Y_i=i^+ \mid \{s_{ik}\}_{k\in\cC(i)})
\]
In our runs we use a grid search with $\texttt{lam\_grid\_size}=21$ and
$\texttt{tau\_grid\_size}=21$.

\subsection{Held-out effect size}
On held-out batches we compute the per-batch mean log-likelihood difference
\begin{equation}
\label{eq:diff_per_batch_app}
\Delta_b
\triangleq
\frac{1}{2B}\sum_{i=1}^{2B}
\Big(
\log p_1(Y_i=i^+)
-
\log p_0(Y_i=i^+)
\Big),
\end{equation}
and report the held-out mean
\[
\overline{\Delta}
\triangleq
\frac{1}{|\mathcal B_{\mathrm{test}}|}\sum_{b\in\mathcal B_{\mathrm{test}}}\Delta_b.
\]
Because $\overline{\Delta}$ is a mean log-ratio, $\exp(\overline{\Delta})$ equals the ratio of
geometric-mean probabilities assigned to the positive under the two links.

\subsection{Significance tests and uncertainty quantification}

\paragraph{Clustered (batch-level) $t$-test.}
Anchors within the same batch share the same negative pool, hence their log-likelihood terms are
dependent. We therefore treat each batch as an approximately independent \emph{cluster} and test
\[
H_0:\ \E[\Delta_b]\le 0
\qquad\text{vs.}\qquad
H_1:\ \E[\Delta_b]>0
\]
using a one-sided $t$-test over the batch-level statistics $\{\Delta_b\}$.

\paragraph{Nonparametric bootstrap over batches (95\% CI).}
To obtain confidence intervals for $\overline{\Delta}$ without relying on normality, we resample
held-out batches with replacement and recompute $\overline{\Delta}$ on each bootstrap replicate.
The 2.5\% and 97.5\% percentiles give a nonparametric 95\% CI.
(Resampling at the batch level preserves the within-batch dependence structure.)

\paragraph{Parametric bootstrap for a nested likelihood ratio test.}
Since $\cH_0$ is nested in $\cH_1$ at the boundary $\lambda=0$, standard asymptotic $\chi^2$
calibration for the likelihood ratio statistic may be inaccurate in finite samples.
We therefore optionally calibrate the likelihood ratio test using a parametric bootstrap:
(i) fit $\hat\tau_0$ on held-in data under $\cH_0$,
(ii) simulate synthetic winners $Y_i$ from $p_0(\cdot\mid \hat\tau_0)$ while keeping score vectors
$\{s_{ik}\}$ fixed,
(iii) refit both $\cH_0$ and $\cH_1$ on the simulated held-in data, compute the simulated LRT
statistic, and
(iv) repeat to form an empirical null distribution and a bootstrap $p$-value.

\paragraph{$K$-fold cross-validation.}
To reduce sensitivity to a single train/test split, we also run $\texttt{cv\_k}=5$ fold
cross-validation over batches. For each fold, we fit parameters on the training folds and evaluate
$\overline{\Delta}$ on the held-out fold. We report the mean and standard deviation across folds.

\subsection{Fr\'echet ablation}
\label{app:frechet}

\paragraph{What is being tested.}
For each anchor $i$ we observe a candidate set $\cN_i$ and a \emph{score vector}
$\{s_{ij}\}_{j\in \cN_i}$ produced by a frozen encoder checkpoint, along with a top-1
outcome (the paired positive $i^+$).
We \emph{do not} treat these scores as latent utilities.
Instead, we treat the score vector as the conditioning statistic and compare \emph{link functions}
$p(y=i^+\mid \{s_{ij}\})$ by held-out top-1 log-likelihood.
Since cosine similarities satisfy $s_{ij}\in[-1,1]$, a bounded-endpoint (shortfall) geometry is a
natural candidate for the observed statistic, whereas a heavy-tail (Fr\'echet) geometry is a priori
less plausible; we nevertheless include it as an explicit ablation below.

\paragraph{Test statistic.}
Given two candidate links $p_A$ and $p_B$, we report the held-out mean log-likelihood gain
\begin{align*}
\overline{\Delta}_{A-B}
\triangleq
\frac{1}{|\mathcal{D}_{\text{test}}|}
\sum_{(i,\cN_i)\in\mathcal{D}_{\text{test}}}
\Bigl[
\log p_A(i^+ \mid \{s_{ij}\}_{j\in \cN_i})
- \log p_B(i^+ \mid \{s_{ij}\}_{j\in \cN_i})
\Bigr]
\end{align*}
measured in nats/anchor.
We aggregate over multiple random seeds by averaging $\overline{\Delta}$ within each seed and
reporting the mean $\pm$ 95\% confidence intervals across seeds.

\begin{table*}[t]
\centering
\small
\setlength{\tabcolsep}{6pt}
\caption{
Fr\'echet ablation summary. All values represent the mean $\pm$ 95\% confidence interval half-width.
$\overline{\Delta}_{F-0}$ is the held-out mean gain of a \emph{Fr\'echet-only} (heavy-tail) link over softmax.
$\overline{\Delta}_{WF-W}$ is the incremental gain from adding a Fr\'echet component on top of the Weibull
(shortfall) model.
$\hat\lambda_F^{\text{(full)}}$ is the fitted Fr\'echet mixture weight in the joint (Weibull+Fr\'echet) model.
The empirical finding is that Fr\'echet provides no measurable benefit once shortfall
geometry is included ($\overline{\Delta}_{WF-W} \approx 0$).
}
\begin{tabular}{l l c c c c c}
\toprule
Dataset & Backbone
& $\hat\lambda_W$
& $\overline{\Delta}_{W-0}$
& $\overline{\Delta}_{F-0}$
& $\overline{\Delta}_{WF-W}$
& $\hat\lambda_F^{\text{(full)}}$ \\
\midrule
CIFAR-100 & ResNet-18 & $0.490 \pm 0.015$ & $0.206 \pm 0.002$ & $0.000 \pm 0.000$ & $0.000 \pm 0.002$ & $0.075 \pm 0.039$ \\
STL-10    & ResNet-18 & $0.350 \pm 0.038$ & $0.253 \pm 0.017$ & $0.000 \pm 0.000$ & $0.000 \pm 0.001$ & $0.305 \pm 0.148$ \\
\bottomrule
\end{tabular}
\label{tab:frechet_ablation_summary}
\end{table*}

\begin{figure}[h]
\centering
\includegraphics[width=0.8\columnwidth]{figures/forest_diff_W_minus_0.pdf} \\
\vspace{-0.8em}
\includegraphics[width=.8\columnwidth]{figures/forest_diff_F_minus_0.pdf} \\
\vspace{-0.8em}
\includegraphics[width=.8\columnwidth]{figures/forest_diff_WF_minus_W.pdf}
\caption{
Forest plots of held-out log-likelihood gains (mean $\pm$ 95\% CI over seeds).
\textbf{(top)} $\overline{\Delta}_{W-0}$: Weibull (shortfall) link vs.\ softmax.
\textbf{(middle)} $\overline{\Delta}_{F-0}$: Fr\'echet-only link vs.\ softmax.
\textbf{(bottom)} $\overline{\Delta}_{WF-W}$: additional gain from adding Fr\'echet on top of Weibull.
Across datasets/backbones, Weibull yields consistent positive gains, while Fr\'echet-only and the
incremental Fr\'echet gain given Weibull are near zero.
}
\label{fig:forest_frechet_ablation}
\end{figure}

\begin{figure*}[!h]
\centering
\includegraphics[width=0.24\textwidth]{figures/trend_mean_diff_W_minus_0_ci95.pdf}\hfill
\includegraphics[width=0.24\textwidth]{figures/trend_mean_diff_F_minus_0_ci95.pdf}\hfill
\includegraphics[width=0.24\textwidth]{figures/trend_mean_diff_WF_minus_W_ci95.pdf}\hfill
\includegraphics[width=0.24\textwidth]{figures/trend_mean_lamF_full_hat_ci95.pdf}

\caption{
Per-epoch view of model selection quantities (mean $\pm$ 95\% CI over seeds). 
From left to right: 
(1) Weibull-vs-softmax gain $\overline{\Delta}_{W-0}$ remains positive across epochs; 
(2) Fr\'echet-only gain $\overline{\Delta}_{F-0}$ stays near zero; 
(3) incremental gain $\overline{\Delta}_{WF-W}$ stays near zero, indicating that adding a Fr\'echet component does not improve predictive top-1 likelihood once shortfall geometry is included; 
(4) fitted Fr\'echet weight $\hat\lambda_F^{\text{(full)}}$ in the joint model.
}
\label{fig:trend_frechet_ablation}
\end{figure*}

\paragraph{Conclusion.}
Across datasets/backbones, including an endpoint shortfall coordinate yields a robust increase in
held-out top-1 likelihood (Table~\ref{tab:lrt_test},
Figure~\ref{fig:forest_frechet_ablation} top).
By contrast, a heavy-tail Fr\'echet component provides no measurable improvement either by itself
(middle) or when added on top of the Weibull model (bottom).
This supports our focus on a Gumbel$\,+\,$Weibull blend (softmax$\,+\,$shortfall) rather than a
three-way mixture including Fr\'echet.

%
%
\section{Details of \textsc{WeINCE}} \label{app:weince}
\cref{alg:weince_colored} gives the main training procedure. Here we provide the derivation of the shortfall coordinate and additional implementation details.

\paragraph{Deriving the endpoint shortfall coordinate.}
In the Weibull indexing model in \cref{tab:geom_index_models} we write
\[
b - X_s = q(s)\,W, \qquad W \ge 0,
\]
so that the asymptotic top~1 weights satisfy $p_i \propto q(s_i)^{-\beta}$
(Theorem~4.3 and Lemma~B.3(c)). In our bounded similarity setting, the relevant endpoint is the score cap,
so we take $b = x_F$ (for cosine similarity, $x_F = 1$). It is then natural to assume that $q(s)>0$ for $s<x_F$
and that $q(s)\to 0$ as $s \uparrow x_F$, meaning that the utility shortfall vanishes as the score approaches the cap.

Moreover, since the Weibull component of WEINCE targets the extreme regime $s \uparrow x_F$, only the local behavior of $q$ near $x_F$
is identifiable. If $q$ is left differentiable at $x_F$ with $c \coloneqq -q'(x_F) > 0$, then a first order expansion gives
\[
q(s) = c\,(x_F - s) + o(x_F - s)\qquad (s \uparrow x_F).
\]
The multiplicative constant $c$ does not affect the induced choice probabilities, since it contributes a common factor
$c^{-\beta}$ to $q(s)^{-\beta}$ (equivalently, an additive constant $-\beta\log c$ in the logits) that cancels after normalization.
Therefore, without loss of generality we set
\[
q(s)=x_F-s \quad \text{(cosine: } q(s)=1-s\text{)},
\]
which yields the Weibull shortfall logits $\ell^W(s) = -\beta \log(x_F - s)$ used by WEINCE.

\paragraph{Computational overhead.}
As reported in the main text, \textsc{WEINCE} adds no forward passes or trainable parameters; in the representative Tiny-ImageNet/ResNet-18 timing run, the additional tail fitting changes the end-to-end step time by less than $0.7\%$.
\paragraph{Estimating $\lambda_i$ and the tail index $\beta_i$.}
We estimate $\lambda_i$ online from the current batch using two signals computed from the negative
scores of anchor $i$. Let $\mathcal{N}(i)$ be the negatives and
define shortfalls $\delta_{ij}=1-s_{ij}$.

\emph{(i) Cap proximity (hardness).}
Define the closest negative shortfall
\begin{equation}
\rho_i
\triangleq
\min_{j\in\mathcal{N}(i)}\delta_{ij}
=
1-\max_{j\in\mathcal{N}(i)} s_{ij}.
\label{eq:rho_def}
\end{equation}
Small $\rho_i$ indicates that anchor $i$ is operating near the score ceiling.

\emph{(ii) Tail shape evidence and $\beta_i$ (Weibull versus Gumbel proxy).}
Let $\delta_{i,(1)}\le\delta_{i,(2)}\le\cdots$ be the ordered negative shortfalls, so that
$\delta_{i,(1)}=\rho_i$. Using the $K_{\mathrm{tail}}$ smallest shortfalls, we form an empirical CDF
$\widehat F(\delta_{i,(k)})\approx k/(K_{\mathrm{tail}}+1)$ and fit two one dimensional tail models:
\begin{align}
\textbf{(Weibull):}\quad
&\log \widehat F(\delta_{i,(k)}) \approx a_i + \beta_i\,\log \delta_{i,(k)},
\label{eq:tail_fit_weibull}
\\
\textbf{(Gumbel proxy):}\quad
&\log\!\big(-\log \widehat F(\delta_{i,(k)})\big) \approx c_i - \kappa_i\,\log \delta_{i,(k)}.
\label{eq:tail_fit_gumbelproxy}
\end{align}
From the corresponding residual sums of squares we compute an AIC style score for each fit (up to
additive constants) and define
\begin{equation}
\Delta\mathrm{AIC}_i \triangleq \mathrm{AIC}_G(i)-\mathrm{AIC}_W(i),
\label{eq:daic_def}
\end{equation}
so that $\Delta\mathrm{AIC}_i>0$ favors a Weibull style tail geometry for anchor $i$. We use the
Weibull slope estimate $\hat\beta_i$ in the shortfall logits
$\ell^{\mathrm{W}}_{ij}=-\hat\beta_i\log(1-s_{ij})$.

\emph{(iii) Soft decision rule.}
We convert these signals into a smooth interpolation weight via sigmoids:
\begin{equation}
\lambda_i
\triangleq
\sigma\!\big(\kappa_{\rho}(\log\rho_0-\log\rho_i)\big)\;
\sigma\!\big(\kappa_{\mathrm{AIC}}(\Delta\mathrm{AIC}_i-m)\big),
\label{eq:lambda_rule}
\end{equation}
where $\sigma(\cdot)$ is the logistic sigmoid, $\rho_0$ is a pivot (for example, a batch quantile of
$\{\rho_i\}$), $m\ge 0$ is an AIC margin, and $\kappa_{\rho},\kappa_{\mathrm{AIC}}>0$ tune the
transition sharpness.

\paragraph{Practical details.}
We compute the shortfall term with clipping for numerical stability:
\[
\log(1-s_{ij})
\leftarrow
\log\!\Big(\varepsilon + 1-\min(s_{ij},\,1-\varepsilon)\Big)
\]
for a small $\varepsilon>0$. The statistics $\lambda_i$ and $\hat\beta_i$ are computed from the
batch similarity matrix and treated as non differentiable per step estimates (that is, gradients
are not backpropagated through $\lambda_i$ or $\hat\beta_i$).

\paragraph{Drop in replacement for InfoNCE.}
\textsc{WEINCE} keeps the SimCLR pipeline unchanged (augmentations, encoder, projection head, batch
construction, and the same cross entropy over a $2N\times(2N-1)$ similarity matrix). The only
modification is the logit construction: after computing cosine similarities $s_{ij}$, we estimate,
for each anchor, a tail index $\hat\beta_i$ and an interpolation weight $\lambda_i$ from batch tail
statistics of the negative shortfalls $\delta_{ij}=1-s_{ij}$, then replace the vanilla logits
$s_{ij}/\tau$ by the interpolated logits in~\eqref{eq:weince_logits}. No additional learnable
parameters are introduced beyond the SimCLR network, and all tail quantities are recomputed on the
fly from the current minibatch similarity matrix and treated as stop gradient statistics.


\section{Details for Experiments}
\label{app:exp_details}

\plottableknnacc

\plotknnrfifty

\paragraph{Codebase.}
All experiments use the official SimCLR PyTorch implementation from
Spijkervet et al. (\href{https://github.com/Spijkervet/SimCLR}{github.com/Spijkervet/SimCLR}) as the training/evaluation
pipeline. We keep the dataset-specific SimCLR settings from that codebase (augmentations,
optimizer, temperature, schedule, etc.) fixed, and change \emph{only} the contrastive loss
(InfoNCE vs.\ \textsc{WEINCE}). The code will be released in \href{https://github.com/hsme98/weince}{github.com/hsme98/weince}. 

\subsection{Validation split and model selection}
\label{app:valsplit}

A key change relative to earlier drafts is that we use a held-out \emph{validation} split carved out
of the labeled training set, and we use this validation split for model selection.
Concretely, for each dataset we construct a \textbf{stratified 80/20 split} of the labeled training
set: 80\% is used for training the downstream evaluator (linear probe) / building the kNN feature
bank, and 20\% is reserved as a validation set.
The split is performed independently within each class by shuffling the indices of that class and
assigning the first $\lfloor 0.2\,n_c \rfloor$ examples to validation and the remainder to training.

\paragraph{What the validation set is used for.}
When multiple hyperparameter configurations are considered (e.g., multiple \textsc{WEINCE}
hyperparameter settings and/or multiple checkpoints), we select the configuration using
\textbf{validation performance only}. The reported tables show the corresponding
\textbf{test-set} performance of the configuration chosen on validation (no test-set tuning).

\subsection{Backbones and contrastive pretraining}
\label{app:pretraining_details}

We consider ResNet--18 and ResNet--50 backbones, each followed by the standard SimCLR
projection head. For each dataset/backbone/loss configuration we train \textbf{four} independent
encoders (four different pretraining seeds). 

We use the default temperature $\tau=0.5$ and optimizer settings (Adam with weight decay $10^{-6}$) provided in \cite{spijkervet2020simclr}, as suggested to be a good temperature choice across batch sizes. We trained encoders for 100 epochs. Following the default configuration of the SimCLR PyTorch codebase, we use a batch size of 256, and a ResNet backbone with $\ell_2$ normalization enabled and a projection head of output dimension 64. We use the standard SimCLR two-view augmentation pipeline and train with in-batch negatives (each batch contains $2N$ augmented views).

\paragraph{Losses.}
The baseline is the standard NT-Xent / InfoNCE objective.
For \textsc{WEINCE}, we use the full logit-interpolation loss described in the main text
(\cref{eq:weince_logits,eq:weince_loss} and \cref{alg:weince_colored}), where each anchor has an online-estimated interpolation weight
$\lambda_i$ and tail index $\hat{\beta}_i$ computed from batch tail statistics. These per-step
statistics are treated as stop-gradient quantities (no backpropagation through $\lambda_i$ or
$\hat{\beta}_i$). Aside from the modified logits, the SimCLR pipeline is unchanged.

\subsection{Linear downstream classification (linear probe)}
\label{app:lin_exp}

We evaluate the pretrained representation quality using a frozen-feature linear probe.

\paragraph{Feature extraction.}
Given a pretrained checkpoint, we freeze the encoder and compute feature vectors using the
\emph{encoder representation} $h$ (the backbone output before the projection head; in our code this
is the first output returned by the SimCLR forward pass). Features are extracted using the
repository's evaluation transform
(\texttt{TransformsSimCLR(...).test\_transform}) on:
(i) the labeled training split and (ii) the held-out test split.

\paragraph{Train/validation split (20\%).}
From the labeled training split we form a \textbf{stratified 80/20} train/validation partition
(Section~\ref{app:valsplit}). The linear probe is trained on the 80\% partition and evaluated on
both the 20\% validation partition and the test set.

\paragraph{Linear classifier and optimization.}
We train a single linear classifier (multinomial logistic regression; a single fully-connected
layer) on top of the frozen features with cross-entropy loss.
We optimize the probe using Adam with learning rate $3\times 10^{-4}$ for 100 epochs
(batch size 256). No encoder weights are updated during this stage.

\paragraph{Probe seeds and aggregation.}
To account for probe-level randomness (initialization, minibatch order, optimizer noise), we train
\textbf{five} probes per pretrained encoder using evaluation seeds
$\{1,2,3,4,5\}$.
This yields $4\times 5=20$ linear-probe runs per dataset/backbone/loss.
For each pretrained encoder seed $e$ we average the five probe runs to obtain a per-encoder mean
accuracy $\mu_e$ (and the corresponding mean validation accuracy).
We then treat the four encoder seeds as i.i.d.\ replicates and report the mean and 95\% confidence
intervals across encoder seeds (using a $t$-interval).

\paragraph{Model selection.}
When selecting among hyperparameter configurations (Section~\ref{app:valsplit}), we use the
validation accuracy (averaged over probe seeds for each encoder) to choose the configuration, and
we report the corresponding test accuracy.

\subsection{kNN evaluation}
\label{app:knn_exp}

We also evaluate the frozen representation using a non-parametric $k$-nearest-neighbor procedure.

\paragraph{Feature extraction and similarity.}
We use the same frozen encoder features $h$ as for linear probing, and $\ell_2$-normalize them.
Nearest neighbors are computed using cosine similarity (dot product of normalized features).

\paragraph{kNN bank, queries, and validation split.}
The kNN \emph{feature bank} is built from the 80\% training partition of the labeled training set
(Section~\ref{app:valsplit}). We evaluate:
(i) on the 20\% validation partition (queries = validation, bank = training) and
(ii) on the test set (queries = test, bank = training).
This ensures that any selection based on validation does not leak test labels.

\paragraph{Metrics.}
We report \textbf{recall at $k$} (R@$k$) of the extracted features for $k\in\{1,2,5,10,20,50\}$, where
\[
\text{R@}k \;=\; \frac{1}{|\mathcal{Q}|}\sum_{q\in\mathcal{Q}}
\mathbf{1}\{\text{label}(q)\ \text{appears in top-$k$ neighbors}\}
\]
in \cref{tab:knn}. The main text reports R@$1$ through R@$20$ in \cref{tab:knn}; for runs where it was logged, the remaining R@$50$ column is displayed in \cref{tab:knn_r50}.
Our evaluation code additionally computes the standard SimCLR-style temperature-weighted kNN
classification accuracy using $k=50$ and temperature $\tau=0.1$; this is not required to define
R@$k$ but can be used as an auxiliary score. The results are provided in \cref{tab:knnacc}.

\paragraph{Split seeds and aggregation.}
For robustness, we repeat the stratified 80/20 train/validation split for kNN using five different
split seeds $\{1,2,3,4,5\}$, and average the resulting validation/test kNN metrics across these
splits. When multiple hyperparameter configurations are compared, selection is performed using the
validation kNN metric (with the same averaging over split seeds), and the reported numbers are the
test kNN metrics of the selected configuration.

\end{document}